\documentclass{article} 

\usepackage[table,xcdraw]{xcolor}
\usepackage{hyperref}
\usepackage{url}            
\usepackage{booktabs}       
\usepackage{amsfonts}       
\usepackage{nicefrac}       
\usepackage{microtype}      
\usepackage{bm}
\usepackage{comment}
\usepackage{graphicx}
\usepackage{subfig}
\usepackage{enumitem}
\usepackage{footnote}

\makesavenoteenv{tabular}
\usepackage{mathtools}
\usepackage{amssymb}
\usepackage{latexsym}
\usepackage{dsfont}
\usepackage{mathrsfs}
\usepackage{amssymb}
\usepackage{amsfonts}
\usepackage{bm}
\usepackage{xspace}
\usepackage{amsthm}
\newcommand{\R}{\mathbb{R}} 


\newcommand{\diag}{\mathop{\mathrm{diag}}}







\newcommand{\hhat}{\hat{h}}

\newcommand{\pbf}{\mathbf{p}}

\newcommand{\wbf}{\mathbf{w}}

\newcommand{\ybf}{\mathbf{y}}
\newcommand{\zbf}{\mathbf{z}}

\newcommand{\Ahat}{\hat{A}}

\newcommand{\Dhat}{\hat{D}}

\newcommand{\Cbar}{\bar{C}}

\newcommand{\Lcal}{\mathcal{L}}

\newcommand{\Ebb}{\mathbb{E}}

\newcommand{\Rbb}{\mathbb{R}}



\ifx\BlackBox\undefined
\newcommand{\BlackBox}{\rule{1.5ex}{1.5ex}}  
\fi

\ifx\QED\undefined
\def\QED{~\rule[-1pt]{5pt}{5pt}\par\medskip}
\fi

\ifx\proof\undefined
\newenvironment{proof}{\par\noindent{\em Proof:\ }}{\hfill\BlackBox\\[.0mm]}
\fi

\ifx\theorem\undefined
\newtheorem{theorem}{Theorem}[section]
\fi

\ifx\example\undefined

\fi

\ifx\property\undefined

\fi

\ifx\lemma\undefined
\newtheorem{lemma}{Lemma}[section]
\fi

\ifx\proposition\undefined
\newtheorem{proposition}{Proposition}[section]
\fi

\ifx\remark\undefined
\newtheorem{remark}{Remark}
\fi

\ifx\corollary\undefined

\fi

\ifx\definition\undefined

\fi

\ifx\conjecture\undefined

\fi

\ifx\axiom\undefined

\fi

\ifx\claim\undefined

\fi

\ifx\assumption\undefined
\newtheorem{assumption}{Assumption}
\fi

\ifx\problem\undefined

\fi

\ifx\fact\undefined
\newtheorem{fact}{Fact}
\fi



\newcommand{\rbr}[1]{\left(#1\right)}
\newcommand{\sbr}[1]{\left[#1\right]}
\newcommand{\cbr}[1]{\left\{#1\right\}}

\newcommand{\norm}[1]{\left\|#1\right\|}


\newcommand{\one}{\mathbf{1}}  
\newcommand{\zero}{\mathbf{0}} 


\newcommand{\relu}{{\mathrm{ReLU}}}

\usepackage[accepted]{icml2021}


%


\icmltitlerunning{GraphNorm: A Principled Approach to Accelerating Graph Neural Network Training}

\begin{document}

\twocolumn[
\icmltitle{GraphNorm: A Principled Approach to Accelerating \\Graph Neural Network Training}
\icmlsetsymbol{equal}{*}

\begin{icmlauthorlist}
\icmlauthor{Tianle Cai}{equal,princeton,haihua}
\icmlauthor{Shengjie Luo}{equal,moe,pku,pazhou}
\icmlauthor{Keyulu Xu}{mit}
\icmlauthor{Di He}{ms}
\icmlauthor{Tie-Yan Liu}{ms}
\icmlauthor{Liwei Wang}{moe,pku}

\end{icmlauthorlist}
\icmlaffiliation{princeton}{Department of Electrical and Computer Engineering, Princeton University}
\icmlaffiliation{haihua}{Zhongguancun Haihua Institute for Frontier Information Technology}
\icmlaffiliation{moe}{Key Laboratory of Machine Perception, MOE, School of EECS, Peking University}
\icmlaffiliation{pazhou}{Pazhou Lab}
\icmlaffiliation{mit}{CSAIL, Massachusetts Institute of Technology (MIT)}
\icmlaffiliation{ms}{Microsoft Research}
\icmlaffiliation{pku}{Center for Data Science, Peking University}

\icmlcorrespondingauthor{Liwei Wang}{wanglw@pku.edu.cn}
\icmlcorrespondingauthor{Di He}{dihe@microsoft.com}
\icmlkeywords{Graph Neural Network, Normalization}

\vskip 0.3in
]
\printAffiliationsAndNotice{\icmlEqualContribution}
\begin{abstract}
Normalization is known to help the optimization of deep neural networks. Curiously, different architectures require specialized normalization methods. In this paper, we study what normalization is effective for Graph Neural Networks (GNNs). First, we adapt and evaluate the existing methods from other domains to GNNs. Faster convergence is achieved with InstanceNorm compared to BatchNorm and LayerNorm. We provide an explanation by showing that InstanceNorm serves as a preconditioner for GNNs, but such preconditioning effect is weaker with BatchNorm due to the heavy batch noise in graph datasets. Second, we show that the shift operation in InstanceNorm results in an expressiveness degradation of GNNs for highly regular graphs. We address this issue by proposing GraphNorm with a learnable shift. Empirically, GNNs with GraphNorm converge faster compared to GNNs using other normalization. GraphNorm also improves the generalization of GNNs, achieving better performance on graph classification benchmarks.
\end{abstract}

\section{Introduction}
\label{sec:intro}

\begin{figure*}[ht]
    \centering
        \includegraphics[width=0.8\textwidth]{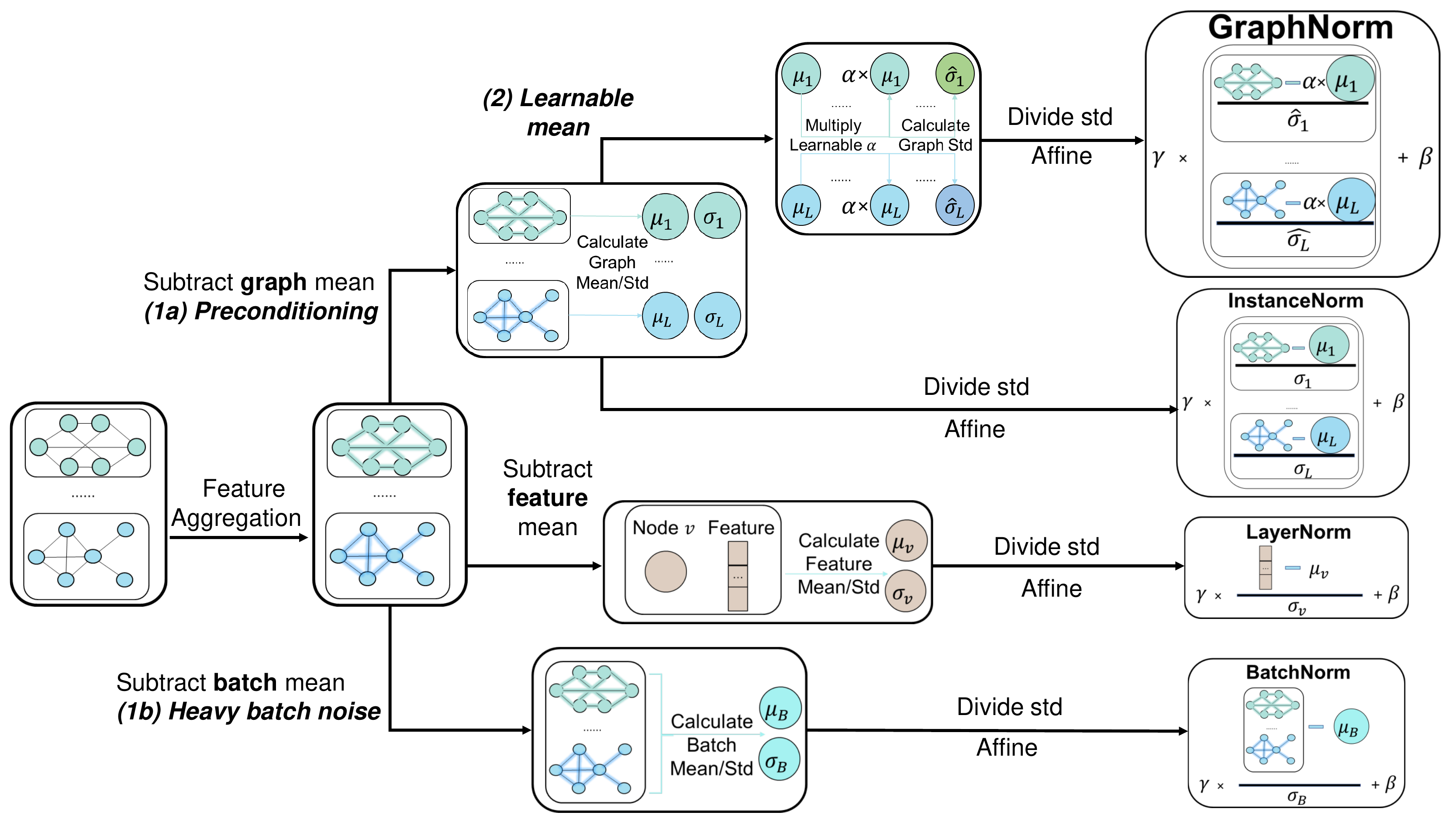}
    \caption{\textbf{Overview.} 
We evaluate and understand BatchNorm, LayerNorm, and InstanceNorm, when adapted to GNNs. InstanceNorm trains faster than LayerNorm and BatchNorm on most datasets (Section~\ref{sec:adapt_to_gnns}), as it serves as a preconditioner of the aggregation of GNNs (1a, Section~\ref{sec:precondition}). The preconditioning effect is weaker for BatchNorm  due to heavy batch noise in graphs (1b, Section~\ref{sec:batchnorm_noise}). We propose GraphNorm with a learnable shift to address the  limitation of InstanceNorm. GraphNorm outperforms other normalization methods for both training speed (Figure~\ref{fig:gin-dataset-training-curve}) and generalization (Table~\ref{tab:test-results},~\ref{tab:test-molhiv}).
}
\label{fig:illustration}
\end{figure*}
Recently, there has been a surge of interest in Graph Neural Networks (GNNs) for learning with graphs~\citep{gori2005new, scarselli2008graph,hamilton2017inductive,kipf2016semi,velivckovic2017graph, xu2018representation,ying2021transformers}. GNNs learn node and graph representations by recursively aggregating and updating the node representations from neighbor representations~\citep{gilmer2017neural}. Empirically, GNNs have succeeded in a variety of tasks such as computational chemistry~\citep{stokes2020deep}, recommendation systems~\citep{ying2018graph}, and visual question answering~\citep{santoro2017simple}. Theoretically, existing works have studied GNNs through the lens of expressive power~\citep{keriven2019universal, xu2018how, sato2019approximation, Loukas2020,ying2021transformers}, generalization~\citep{scarselli2018vapnik, du2019graph, Xu2020What}, and extrapolation~\citep{xu2020neural}. However, the optimization of GNNs is less well understood, and in practice, the training of GNNs is often unstable and the convergence is slow~\citep{xu2018how}. 

In this paper, we study how to improve the training of GNNs via normalization. Normalization methods shift and scale the hidden representations and are shown to help the optimization for deep neural networks~\citep{ioffe2015batch,ulyanov2016instance, ba2016layer,salimans2016weight,xiong2020layer,salimans2016improved,miyato2018spectral,wu2018group, santurkar2018does}. Curiously, no single normalization helps in every domain, and different architectures require specialized methods. For example, Batch normalization (BatchNorm) is a standard component in computer vision~\citep{ioffe2015batch}; Layer normalization (LayerNorm) is popular in natural language processing~\citep{ba2016layer,xiong2020layer}; Instance normalization (InstanceNorm) has been found effective for style transfer tasks~\citep{ulyanov2016instance} . This motivates the question: \textit{What normalization methods are effective for GNNs?}

We take an initial step towards answering the question above. First, we adapt the existing  methods from other domains, including BatchNorm, LayerNorm, and InstanceNorm, to GNNs and evaluate their performance with extensive experiments on graph classification tasks. We observe that our adaptation of InstanceNorm to GNNs, which for each \emph{individual graph} normalizes its node hidden representations, obtains much faster convergence compared to BatchNorm and LayerNorm. We provide an explanation for the success of InstanceNorm by showing that the shift operation in InstanceNorm serves as a preconditioner of the graph aggregation operation. Empirically, such preconditioning makes the optimization curvature smoother and makes the training more efficient. We also explain why the widely used BatchNorm does not bring the same level of acceleration. The variance of the batch-level statistics on graph datasets is much larger if we apply the normalization across graphs in a batch instead of across individual graphs. The noisy statistics during training may lead to unstable optimization.

Second, we show that the adaptation of InstanceNorm to GNNs, while being helpful in general, has limitations. The shift operation in InstanceNorm, which subtracts the mean statistics from node hidden representations, may lead to an expressiveness degradation for GNNs. Specifically, for highly regular graphs, the mean statistics contain graph structural information, and thus removing them could hurt the performance. Based on our analysis, we propose \textit{GraphNorm} to address the issue of InstanceNorm with a learnable shift (Step 2 in Figure~\ref{fig:illustration}). The learnable shift could learn to control the ideal amount of information to preserve for mean statistics. Together, GraphNorm normalizes the hidden representations across nodes in each individual graph with a learnable shift to avoid the expressiveness degradation while inheriting the acceleration effect of the shift operation.

We validate the effectiveness of GraphNorm on eight popular graph classification benchmarks. Empirical results confirm that GraphNorm consistently improves the speed of converge and stability of training for GNNs compared to those with BatchNorm, InstanceNorm, LayerNorm, and those without normalization. Furthermore, GraphNorm helps GNNs achieve better generalization performance on most benchmarks.

\subsection{Related Work}
\label{sec:related}
Closely related to our work, InstanceNorm~\citep{ulyanov2016instance} is originally proposed for real-time image generation. Variants of InstanceNorm are also studied in permutation equivalent data processing~\citep{yi2018learning,sun2020acne}. We instead adapt InstanceNorm to GNNs and find it helpful for the training of GNNs. Our proposed GraphNorm builds on and improves InstanceNorm by addressing its expressiveness degradation with a learnable shift. 

Few works have studied normalization in the GNN literature. \citet{xu2018how} adapts BatchNorm to GIN as a plug-in component. A preliminary version of \citet{dwivedi2020benchmarking} normalizes the node features with respect to the graph size. Our GraphNorm is size-agnostic and significantly differs from the graph size normalization. More discussions on other normalization methods are in Appendix~\ref{appsec:related}.

The reason behind the effectiveness of normalization has been intensively studied. While scale and shift are the main components of normalization, most existing works focus on the scale operation and the ``scale-invariant'' property: With a normalization layer after a linear (or convolutional) layer, the output values remain the same as the weights are scaled.  Hence, normalization decouples the optimization of direction and length of the parameters \citep{kohler2019exponential}, implicitly tunes the learning rate \citep{ioffe2015batch,hoffer2018norm,arora2018theoretical,li2019exponential}, and smooths the optimization landscape \citep{santurkar2018does}. Our work offers a different view by instead showing specific \textit{shift} operation has the preconditioning effect and can accelerate the training of GNNs.

\section{Preliminaries}
\label{sec:background}
We begin by introducing our notations and the basics of GNNs. Let $G = \left(V, E \right)$ denote a graph where $V = \{v_1, v_2, \cdots, v_n\}$, $n$ is the number of nodes. Let the feature vector of node $v_i$ be $X_i$. We denote the adjacency matrix of a graph as $A\in\Rbb^{n\times n}$ with $A_{ij}=1$ if $(v_i,v_j)\in E$ and $0$ otherwise. The degree matrix associated with $A$ is defined as $D=\diag\rbr{d_1, d_2, \dots, d_n}$ where $d_i=\sum_{j=1}^{n} A_{ij}$.

{\bf Graph Neural Networks.} GNNs use the graph structure and node features to learn the representations of nodes and graphs. Modern GNNs follow a neighborhood aggregation strategy \citep{sukhbaatar2016learning,kipf2016semi,hamilton2017inductive,velivckovic2017graph,monti2017geometric,ying2021transformers}, where the representation of a node is iteratively updated by aggregating the representation of its neighbors. To be concrete, we denote $h^{(k)}_i$ as the representation of $v_i$ at the $k$-th layer and define $h_i^{(0)} = X_i$. We use AGGREGATE to denote the aggregation function in the $k$-th layer:
\begin{align}
    \label{eq:combine}
    h_i^{(k)}   &= \text{AGGREGATE}^{(k)} \big( h_i^{(k-1)}, \big\lbrace h_j^{(k-1)}  : v_j \in \mathcal{N}(v_i) \big\rbrace \big), 
\end{align}
where $\mathcal{N}(v_i)$ is the set of nodes adjacent to $v_i$. Different GNNs can be obtained by choosing different AGGREGATE functions.  Graph Convolutional Networks (GCN)~\citep{kipf2016semi} can be defined in matrix form as: 
\begin{align}
\label{equ:gcn-agg-matrix}
    H^{(k)} = \relu\rbr{W^{(k)} H^{(k-1)} Q_{\text{GCN}}},
\end{align}
where ReLU stands for rectified linear unit, $H^{(k)} = \sbr{h_1^{(k)}, h_2^{(k)}, \cdots, h_n^{(k)}}\in\Rbb^{d^{(k)}\times n}$ is the feature matrix at the $k$-th layer where $d^{(k)}$ denotes the feature dimension, and $W^{(k)}$ is the parameter matrix in layer $k$. $Q_{\text{GCN}}=\Dhat^{-\frac{1}{2}}\Ahat\Dhat^{-\frac{1}{2}}$, where $\Ahat = A + I_n$ and $\Dhat$ is the degree matrix of $\Ahat$. $I_n$ is the identity matrix. 


Graph Isomorphism Network (GIN) \citep{xu2018how} is defined in matrix form as 
\begin{align}
    \label{eq:gin-agg-matrix}
    H^{(k)} = {\rm MLP}^{(k)}\rbr{W^{(k)} H^{(k-1)}Q_{\mathrm{GIN}}},
\end{align}
where MLP stands for multilayer perceptron, $\xi^{(k)}$ is a learnable parameter and $Q_{\mathrm{GIN}}=A + I_n + \xi^{(k)}I_n$.

For a $K$-layer GNN, the outputs of the final layer, i.e., $h_i^{(K)}$,$i=1,\cdots,n$, will be used for prediction. For graph classification tasks, we can apply a READOUT function, e.g., summation, to aggregate node features $h_i^{(K)}$ to obtain the entire graph's representation $h_{G} = {\rm READOUT} \big(\big\lbrace h_i^{(K)} \ \big\vert \ v_i \in V \big\rbrace \big)$. A classifier can be applied upon $h_{G}$ to predict the labels.

{\bf Normalization.} Generally, given a set of values $ \cbr{x_1, x_2, \cdots, x_m}$, a normalization operation first shifts each $x_i$ by the mean $\mu$, and then scales them down by standard deviation $\sigma$: $x_i\rightarrow\gamma \frac{x_i - \mu}{\sigma}+\beta$, where $\gamma$ and $\beta$ are learnable parameters, $\mu = \frac{1}{m}\sum_{i=1}^m x_i$ and $\sigma^2 = \frac{1}{m}\sum_{i=1}^m\rbr{x_i - \mu}^2$. The major difference among different existing normalization methods is which set of feature values the normalization is applied to. 
For example, in computer vision, BatchNorm normalizes the feature values in the same channel across different samples in a batch. In NLP,  LayerNorm normalizes the feature values at each position in a sequence separately.



\section{Evaluating and Understanding Normalization for GNNs}
\label{sec:normalization-gnn}
In this section, we first adapt and evaluate existing normalization methods to GNNs. Then we give an explanation of the effectiveness of the variant of InstanceNorm, and show why the widely used BatchNorm fails to have such effectiveness.  The  understanding inspires us to develop better normalization methods, e.g., GraphNorm.

\subsection{Adapting and Evaluating Normalization for GNNs}
\label{sec:adapt_to_gnns}
\begin{figure*}[t]
    \centering
        \includegraphics[width=\textwidth]{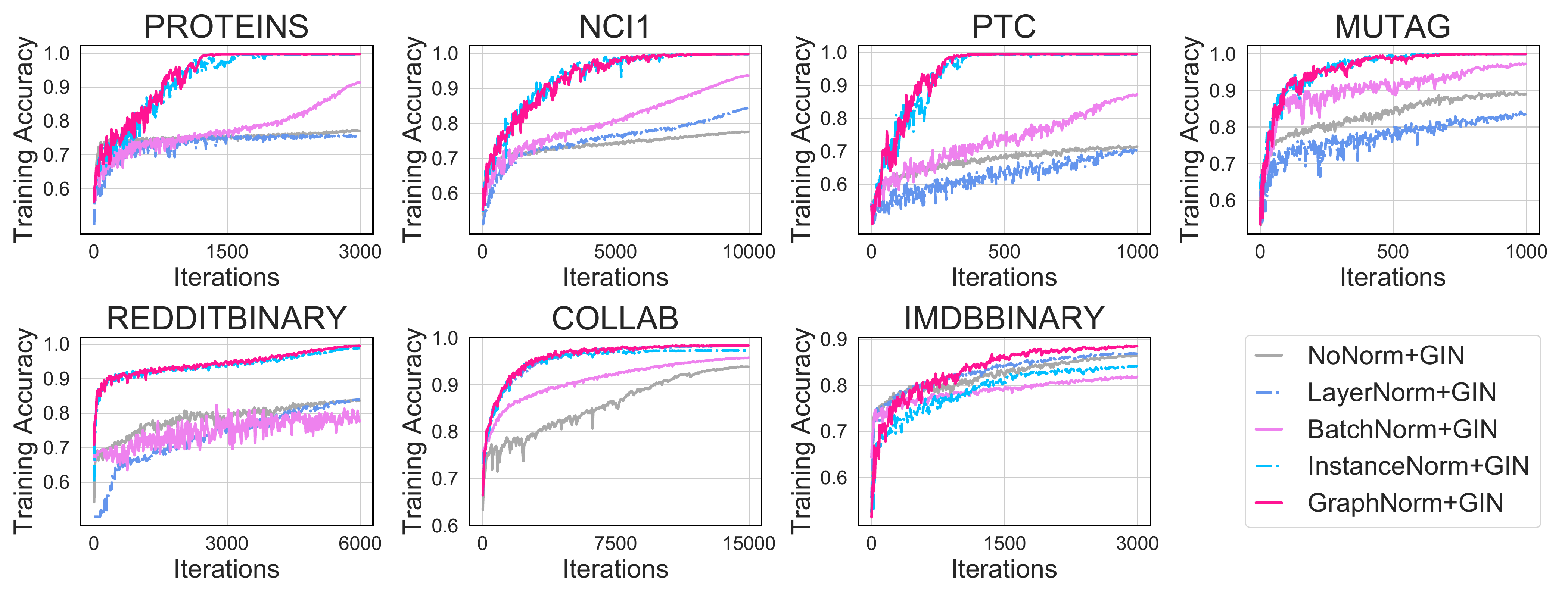}
        
    \caption{\textbf{Training performance} of GIN with different normalization methods and GIN without normalization in graph classification tasks. The convergence speed of our adaptation of InstanceNorm dominates BatchNorm and LayerNorm in most tasks. GraphNorm further improves the training over InstanceNorm especially on tasks with highly regular graphs, e.g., IMDB-BINARY (See Figure~\ref{fig:ablation-study-learn-alpha} for detailed illustration). Overall, GraphNorm converges faster than all other methods. 
    }
\label{fig:gin-dataset-training-curve}
\end{figure*}
To investigate what normalization methods are effective for GNNs, we first adapt three typical normalization methods, i.e., BatchNorm, LayerNorm, and InstanceNorm, developed in other domain to GNNs.  We apply the normalization after the linear transformation as in previous works~\citep{ioffe2015batch,xiong2020layer,xu2018how}. The general GNN structure equipped with a normalization layer can be represented as:
\begin{align}
    \label{eq:structure_with_norm}
    H^{(k)} = F^{(k)}\rbr{\mathrm{Norm}\rbr{W^{(k)}H^{(k-1)}Q}},
\end{align}
where $F^{(k)}$ is a function that applies to each node separately, $Q$ is an $n\times n$ matrix representing the neighbor aggregation, and $W^{(k)}$ is the weight/parameter matrix in layer $k$. We can instantiate Eq.~\eqref{eq:structure_with_norm} as GCN and GIN, by setting proper  $F^{(k)}$ and matrix $Q$. For example, if we  set $F^{(k)}$ to be $\relu$ and set $Q$ to be $
Q_{\text{GCN}}$ (Eq.~\eqref{equ:gcn-agg-matrix}), then Eq.~\eqref{eq:structure_with_norm} becomes GCN  with normalization; Similarly, by setting $F^{(k)}$ to be $\mathrm{MLP}^{(k)}$ and $Q$ to be $Q_{\mathrm{GIN}}$ (Eq.~\eqref{eq:gin-agg-matrix}), we recover GIN with normalization.

We then describe the concrete operations of the adaptations of the normalization methods. Consider a batch of graphs $\cbr{G_1, \cdots, G_b}$ where $b$ is the batch size. Let $n_g$ be the number of nodes in graph $G_g$. We generally denote $\hhat_{i,j,g}$ as the inputs to the normalization module, e.g., the $j$-th feature value of node $v_i$ of graph $G_g$, $i=1, \cdots, n_g, j=1, \cdots, d, g=1, \cdots, b$. The adaptations take the general form:
\begin{align}\label{eq:general_form_norm}
    \mathrm{Norm}\rbr{\hhat_{i,j,g}} = \gamma \cdot \frac{\hhat_{i,j,g} - \mu}{\sigma} + \beta,
\end{align}
where the scopes of mean $\mu$, standard deviation $\sigma$, and affine parameters $\gamma, \beta$ differ for different normalization methods. For BatchNorm, normalization and the computation of $\mu$ and $\sigma$ are applied to all values in the same feature dimension across the nodes of \emph{all graphs in the batch} as in \citet{xu2018how}, i.e., over dimensions $g,i$ of $\hhat_{i,j,g}$. 
 To adapt LayerNorm to GNNs, we view each node as a basic component, resembling words in a sentence, and apply normalization to all feature values across different dimensions of each node, i.e., over dimension $j$ of $\hhat_{i,j,g}$.
For InstanceNorm, we regard each graph as an instance. The normalization is then applied to the feature values across all nodes for each \emph{individual graph}, i.e., over dimension $i$ of $\hhat_{i,j,g}$.

In Figure~\ref{fig:gin-dataset-training-curve} we show training curves of different normalization methods in graph classification tasks. We find that LayerNorm hardly improves the training process in most tasks, while  our adaptation of InstanceNorm can largely boost the training speed compared to other normalization methods. The test performances have similar trends. We summarize the final test accuracies in Table~\ref{tab:test-results}. In the following subsections, we provide an explanation for the success of InstanceNorm and its benefits compared to BatchNorm, which is currently adapted in many GNNs.

\subsection{Shift in InstanceNorm as a Preconditioner}
\label{sec:precondition}
As mentioned in Section~\ref{sec:related}, the scale-invariant property of the normalization has been investigated and considered as one of the ingredients that make the optimization efficient. In our analysis of normalizations for GNNs, we instead take a closer look at the \emph{shift} operation in the normalization. Compared to the image and sequential data, the graph is explicitly structured, and the neural networks exploit the structural information directly in the aggregation of the neighbors, see Eq.~\eqref{eq:combine}. Such uniqueness of GNNs makes it possible to study how the shift operation interplays with the graph data in detail.

We show that the shift operation in our adaptation of InstanceNorm serves as a preconditioner of the aggregation in GNNs and hypothesize this preconditioning effect can boost the training of GNNs. Though the current theory of deep learning has not been able to prove and compare the convergence rate in the real settings, we calculate the convergence rate of GNNs on a simple but fully characterizable setting to give insights on the benefit of the shift operation.

fWe first formulate our adaptation of InstanceNorm in the matrix form. Mathematically, for a graph of $n$ nodes, denote $N = I_n - \frac{1}{n}\one\one^\top$. $N$ is the matrix form of the shift operation, i.e., for any vector $\zbf=\sbr{z_1,z_2,\cdots,z_n}^\top\in\Rbb^n$, $\zbf^\top N = \zbf^\top - \rbr{\frac{1}{n}\sum_{i=1}^n z_i}\one^\top$. Then the normalization together with the aggregation can be represented as\footnote{Standard normalization has an additional affine operation after shifting and scaling. Here we omit it in Eq.~\ref{eq:structure_with_norm_exp} for better 
demonstration. Adding this operation will not affect the theoretical analysis.}
\begin{align} \label{eq:structure_with_norm_exp}
    \mathrm{Norm}\rbr{W^{(k)}H^{(k-1)}Q} = S\rbr{W^{(k)}H^{(k-1)}Q}N,
\end{align}
where $S = \diag\rbr{\frac{1}{\sigma_1}, \frac{1}{\sigma_2}, \cdots, \frac{1}{\sigma_{d^{(k)}}}}$ is the scaling, and $Q$ is the GNN aggregation matrix. Each $\sigma_i$ is the standard deviation of the values of the $i$-th features among the nodes in the graph we consider. We can see that, in the matrix form, shifting feature values on a single graph is equivalent to multiplying $N$ as in Eq.~\eqref{eq:structure_with_norm_exp}. Therefore, we further check how this operation affects optimization. In particular, we examine the singular value distribution of $Q N$. The following theorem shows that $Q N$ has a smoother singular value distribution than $Q$, i.e., $N$ serves as a preconditioner of $Q$. 
\begin{theorem}[Shift Serves as a Preconditioner of $Q$]
    \label{thm:precondition}
    Let $Q, N$ be defined as in Eq.~\eqref{eq:structure_with_norm_exp},  $0\le\lambda_1\le\cdots\le\lambda_n$ be the singular values of $Q$. We have $\mu_n=0$ is one of the singular values of $QN$, and let other singular values of $QN$ be $0\le\mu_1\le\mu_2\le\cdots\le\mu_{n-1}$. Then we have
    \begin{align} \label{eq:mean_sub_eigen}
        \lambda_1\le\mu_1\le\lambda_2\le\cdots\le\lambda_{n-1}\le\mu_{n-1}\le\lambda_n,
    \end{align}
    where $\lambda_i = \mu_i$ or $\lambda_i = \mu_{i-1}$ only if there exists one of the right singular vectors $\alpha_i$ of $Q$ associated with $\lambda_i$ satisfying $\one^\top\alpha_i = 0$.
\end{theorem}
The proof can be found in Appendix~\ref{appsec:proof_precondition}.

We hypothesize that precoditioning $Q$ can help the optimization. In the case of optimizing the weight matrix $W^{(k)}$, we can see from Eq.~\eqref{eq:structure_with_norm_exp} that after applying normalization, the term $Q$ in the gradient of $W^{(k)}$ will become $Q N$ which makes the optimization curvature of $W^{(k)}$ smoother, see Appendix~\ref{appsec:gradient_formular} for more discussions. Similar preconditioning effects are believed to improve the training of deep learning models~\citep{duchi2011adaptive, kingma2014adam}, and classic wisdom in optimization has also shown that preconditioning can accelerate the convergence of iterative methods \citep{axelsson1985survey, demmel1997applied}. Unfortunately, current theoretical toolbox only has a limited power on the optimization of deep learning models. Global convergence rates have only been proved for either simple models, e.g., linear models \citep{arora2018convergence}, or extremely overparameterized models \citep{du2018gradient,allen2019convergence,du2019gradient,cai2019gram,du2019graph,zou2020gradient}. To support our hypothesis that preconditioning may suggest better training, we investigate a simple but characterizable setting of training a linear GNN using gradient descent in Appendix~\ref{appsec:concrete_example}. In this setting, we prove that:
\begin{proposition}[Concrete Example Showing Shift can Accelerate Training (Informal)]
With high probability over randomness of data generation, the parameter $\wbf_t^{\mathrm{Shift}}$ of the model with shift at step $t$ converges to the optimal parameter $\wbf_*^{\mathrm{Shift}}$ linearly:
\begin{align*}
    \norm{\wbf_t^{\mathrm{Shift}} - \wbf_*^{\mathrm{Shift}}}_2 = O\rbr{\rho_1^t},
\end{align*}
where $\rho_1$ is the convergence rate.

Similarly, the parameter $\wbf_t^{\mathrm{Vanilla}}$ of the vanilla model converges linearly, but with a slower rate:
\begin{align*}
    \norm{\wbf_t^{\mathrm{Vanilla}} - \wbf_*^{\mathrm{Vanilla}}}_2 = O\rbr{\rho_2^t}\ \mathrm{and}\ \rho_1 < \rho_2,
\end{align*}
which indicates that the model with shift converges faster than the vanilla model.
\end{proposition}
The proof can be found in Appendix~\ref{appsec:concrete_example}.
\begin{figure}[t]
    \centering
        \includegraphics[width=0.48\textwidth]{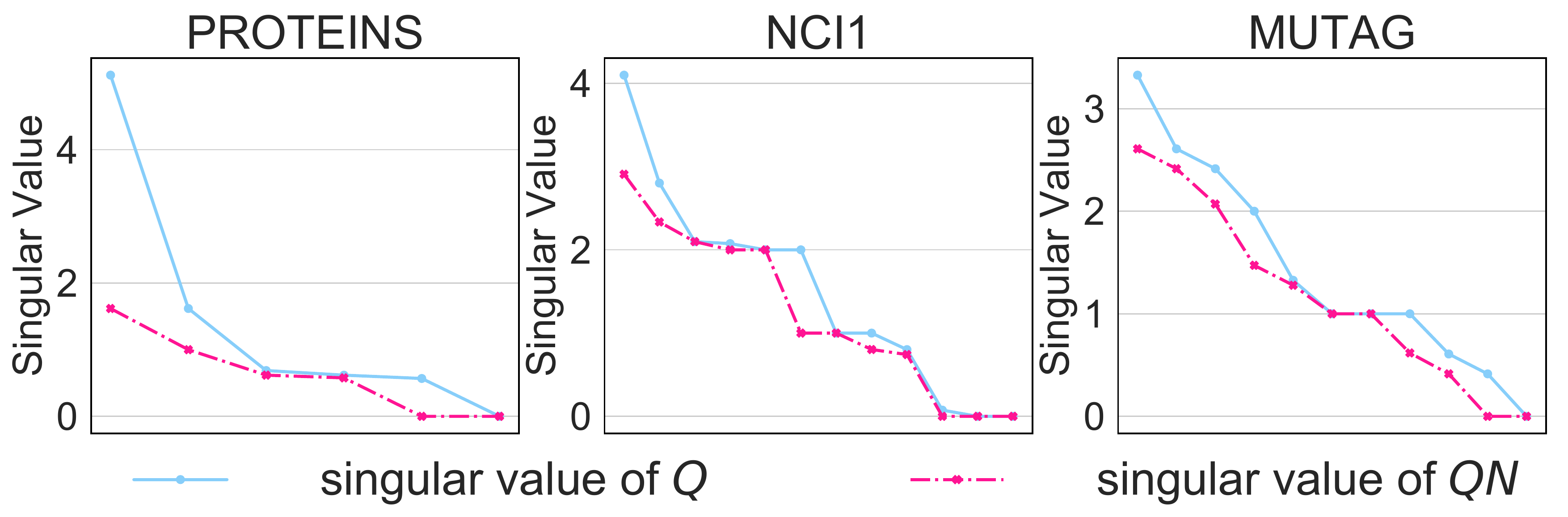}
    \caption{\textbf{Singular value distribution} of $Q$ and $Q N$ for sampled graphs in different datasets using GIN. More visualizations  can be found in Appendix \ref{appsec:vis-singular}}
\label{fig:singular_distribution}
\end{figure}
To check how much the matrix $N$ improves the distribution of the spectrum of matrix $Q$ in real practice, we sample graphs from different datasets for illustration, as showed in Figure \ref{fig:singular_distribution} (more visualizations for different types of graph can be found in Appendix \ref{appsec:vis-singular}). We can see that the singular value distribution of $Q N$ is much smoother, and the condition number is improved. Note that for a multi-layer GNN, the normalization will be applied in each layer. Therefore, the overall improvement of such preconditioning can be more significant.

\subsection{Heavy Batch Noise in Graphs Makes BatchNorm Less Effective}\label{sec:batchnorm_noise}
\begin{figure*}[t]
    \centering
        \includegraphics[width=\textwidth]{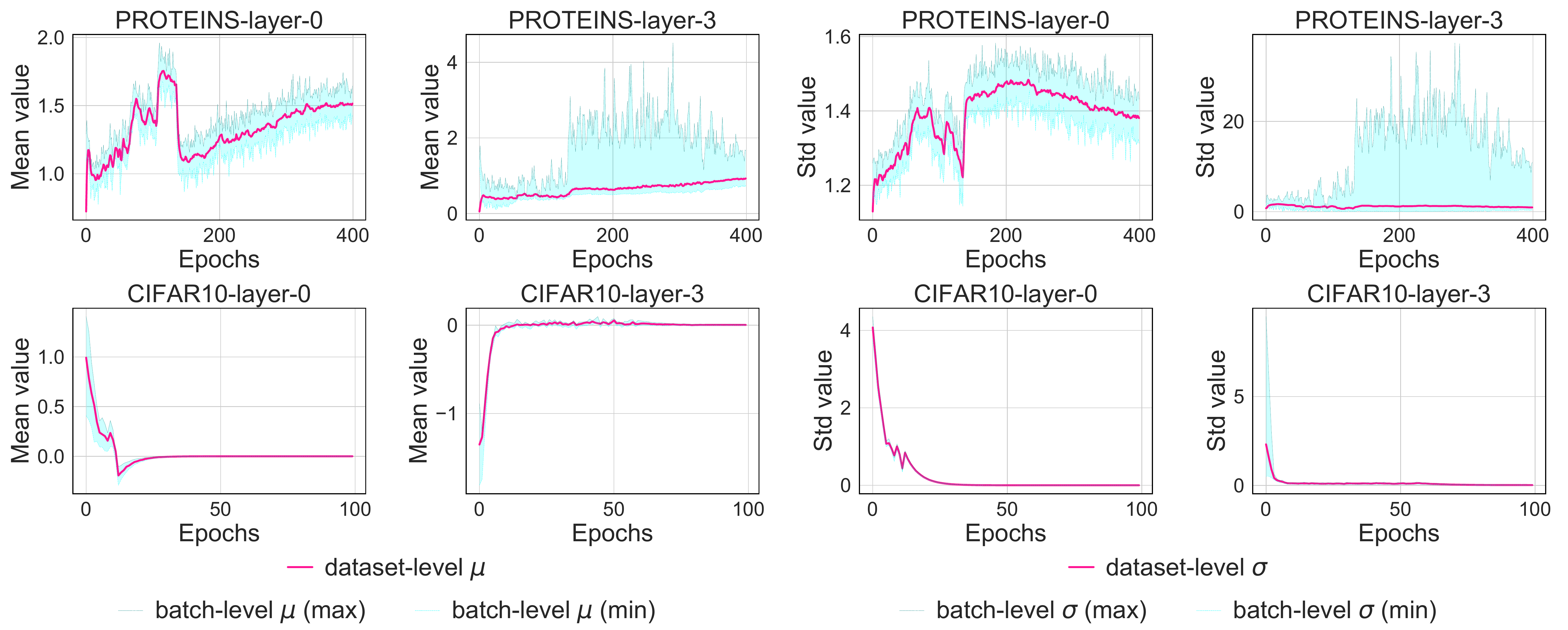}
        
    \caption{
    \textbf{Batch-level statistics are noisy for GNNs.} 
    We plot the batch-level/dataset-level mean/standard deviation of models trained on PROTEINS (graph classification) and CIFAR10 (image classification). We observe that the deviation of batch-level statistics from dataset-level statistics is rather large for the graph task, while being negligible in image task.
    }
\label{fig:noise-comparison}
\end{figure*}

The above analysis shows the adaptation of InstanceNorm has the effect of preconditioning the aggregation of GNNs. Then a natural question is whether a batch-level normalization for GNNs \citep{xu2018how} has similar advantages. We show that BatchNorm is less effective in GNNs due to heavy batch noise on graph data.

In BatchNorm, the mean $\mu_B$ and standard deviation $\sigma_B$ are calculated in a sampled batch during training, which can be viewed  as random variables by the randomness of sampling. During testing, the estimated dataset-level statistics (running mean $\mu_D$ and standard deviation $\sigma_D$) are used instead of the batch-level statistics \citep{ioffe2015batch}. To apply Theorem~\ref{thm:precondition} to BatchNorm for the preconditioning effect, one could potentially view all graphs in a dataset as  subgraphs in a \emph{super graph}. Hence, Theorem~\ref{thm:precondition} applies to BatchNorm if the batch-level statistics are well-concentrated around dataset-level statistics, i.e., $\mu_B\approx\mu_D$ and $\sigma_B\approx\sigma_D$. However, the concentration of batch-level statistics is heavily \textit{domain-specific}. While \citet{shen2020powernorm} find the variation of batch-level statistics in typical networks is small for computer vision,  the concentration of batch-level statistics  is still unknown for GNNs. 

We study how the batch-level statistics $\mu_B, \sigma_B$ deviate from the dataset-level statistics $\mu_D, \sigma_D$. For comparison, we train a 5-layer GIN with BatchNorm on the PROTEINS dataset and train a ResNet18 \citep{he2016deep} on the CIFAR10 dataset. We set batch size to 128. For each epoch, we record the batch-level max/min mean and standard deviation for the first  and the last BatchNorm layer on a randomly selected dimension across batches. In Figure \ref{fig:noise-comparison}, pink line denotes the dataset-level statistics, and green/blue line denotes the max/min value of the batch-level statistics. We observe that for image tasks, the maximal deviation of the batch-level statistics from the dataset-level statistics is negligible (Figure~\ref{fig:noise-comparison}) after a few epochs. In contrast, for the graph tasks, the variation of batch-level statistics stays  large during training. Intuitively, the graph structure can be quite diverse and the a single batch cannot well represent the entire dataset. Hence, the preconditioning property also may not hold for BatchNorm. In fact, the heavy batch noise  may bring instabilities to the training. More results may be found in Appendix \ref{appsec:vis-noise}.

\section{Graph Normalization}
\label{sec:graphnorm}

Although we provide evidence on the indispensability and advantages of our adaptation of InstanceNorm, simply normalizing the values in each feature dimension within a graph does not consistently lead to improvement. We show that in some situations, e.g., for regular graphs, the standard shift (e.g., shifting by subtracting the mean) may cause information loss on graph structures. 

We consider $r$-regular graphs, i.e., each node has a degree $r$. We first look into the case that there are no available node features, then $X_i$ is set to be the one-hot encoding of the node degree \citep{xu2018how}. In a $r$-regular graph, all nodes have the same encoding, and thus the columns of $H^{(0)}$ are the same. We study the output of the standard shift operation in the first layer, i.e., $k=1$ in Eq.~\eqref{eq:structure_with_norm_exp}. From the following proposition, we can see that when the standard shift operation is applied to GIN for a $r$-regular graph described above, the information of degree is lost:
\begin{proposition}\label{prop:regular_graph}
For a $r$-regular graph with one-hot encodings as its features described above, we have for GIN, $\mathrm{Norm}\rbr{W^{(1)}H^{(0)}Q_{\mathrm{GIN}}} = S\rbr{W^{(1)}H^{(0)}Q_{\mathrm{GIN}}}N = 0$,
i.e., the output of normalization layer is a zero matrix without any information of the graph structure.
\end{proposition}
Such information loss not only happens when there are no node features. For complete graphs, we can further show that even each node has different features, the graph structural information, i.e., adjacency matrix $A$, will always be ignored after the standard shift operation in GIN:
\begin{proposition}\label{prop:complete_graph}
For a complete graph ($r = n-1$), we have for GIN, $Q_{\mathrm{GIN}} N = \xi^{(k)} N$, i.e., graph structural information in $Q$ will be removed after multiplying $N$.
\end{proposition}
The proof of these two propositions can be found in Appendix~\ref{appsec:proofs}. Similar results can be easily derived for other architectures like GCN by substituting $Q_{\mathrm{GIN}}$ with $Q_{\mathrm{GCN}}$. As we can see from the above analysis, in graph data, the mean statistics after the aggregation sometimes contain structural information. Discarding the mean will degrade the expressiveness of the neural networks. Note that the problem may not happen in image domain. The mean statistics of image data contains global information such as brightness. Removing such information in images will not change the semantics of the objects and thus will not hurt the classification performance.

\begin{figure}[t]
    \centering
        \includegraphics[width=0.48\textwidth]{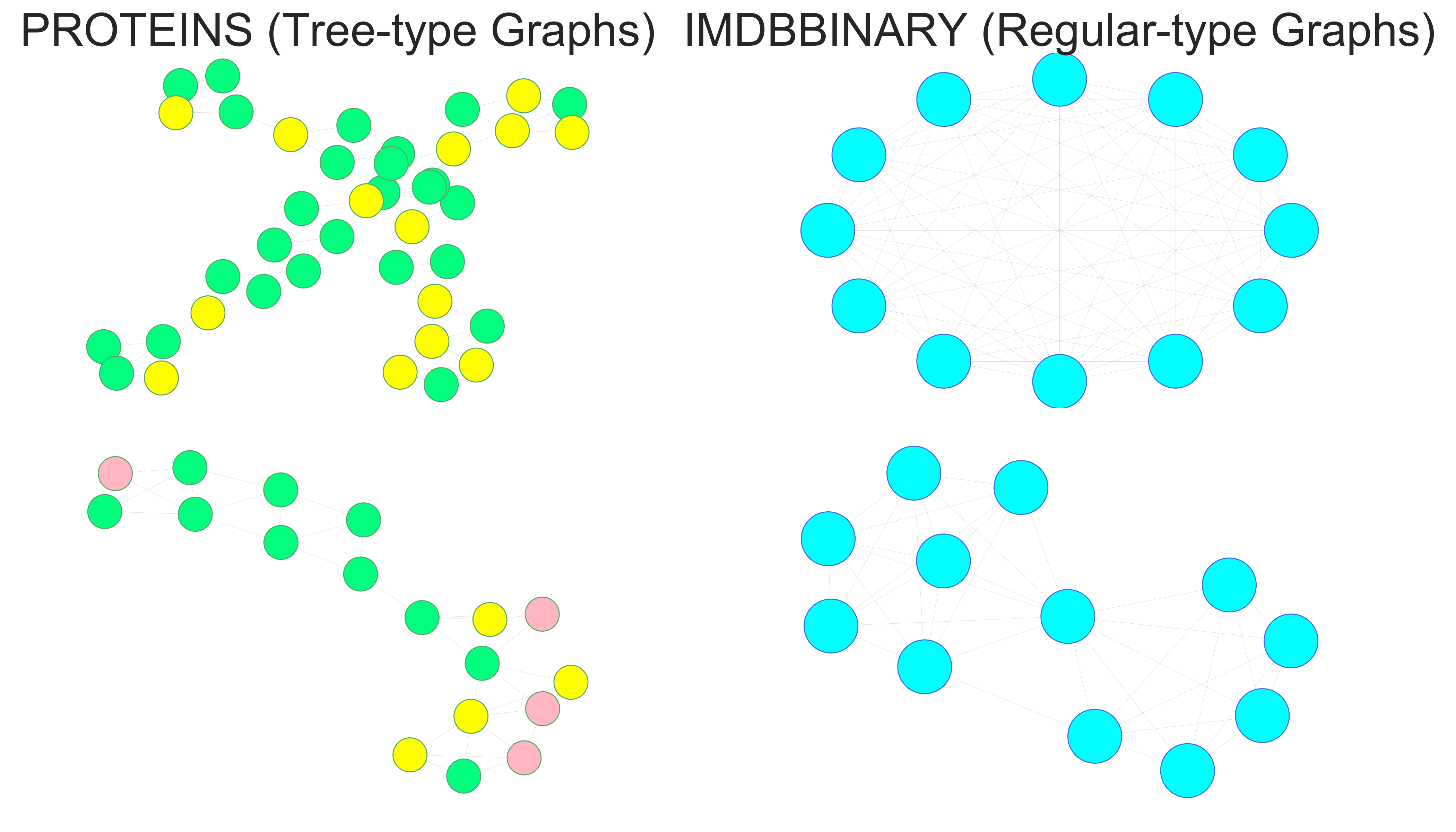}
        \includegraphics[width=0.48\textwidth]{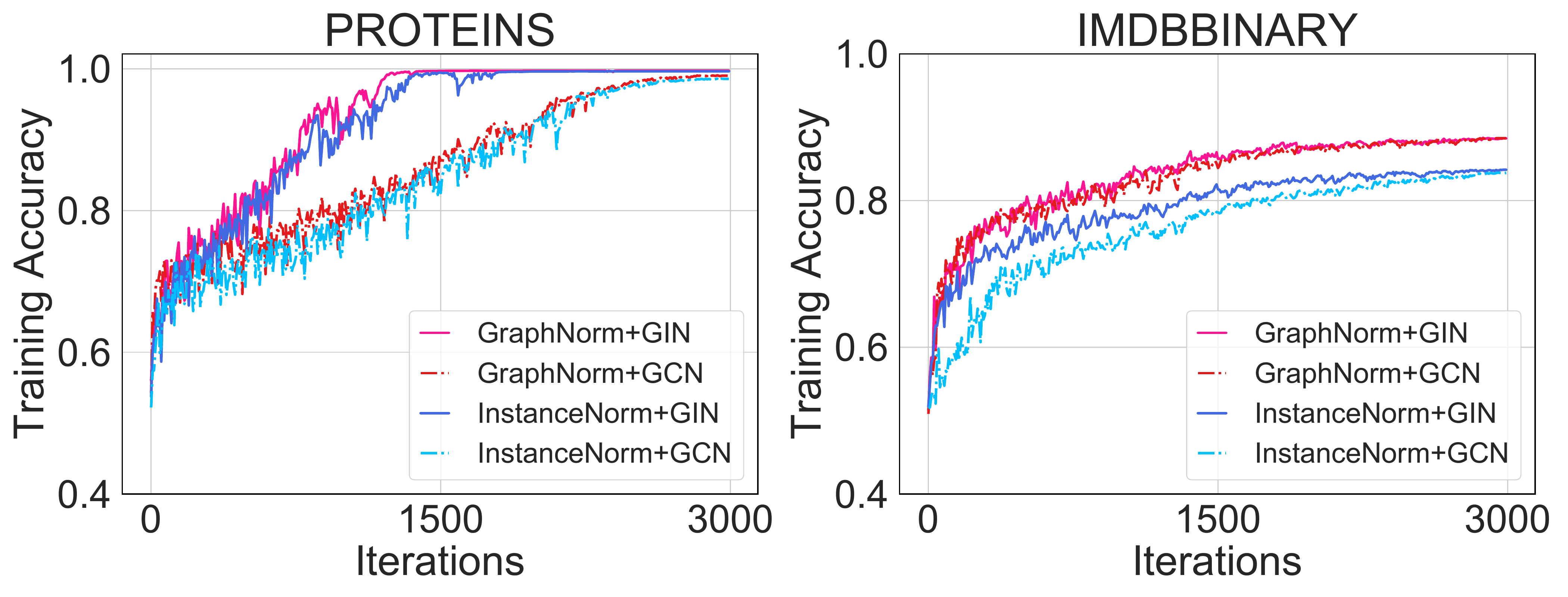}
    \caption{
    \textbf{Comparison of GraphNorm and InstanceNorm on different types of graphs.} Top: Sampled graphs with different topological structures. Bottom: Training curves of GIN/GCN using GraphNorm and InstanceNorm.}
\label{fig:ablation-study-learn-alpha}
\end{figure}

This analysis inspires us to modify the current normalization method with a \emph{learnable parameter} to automatically control how much the mean to preserve in the shift operation. Combined with the graph-wise normalization, we name our new method Graph Normalization, i.e., GraphNorm. For each graph $G$, we generally denote value $\hhat_{i,j}$ as the inputs to GraphNorm, e.g., the $j$-th feature value of node $v_i$, $i=1,\cdots,n$, $j=1,\cdots,d$. GraphNorm takes the following form:
\begin{align}
    \label{eq:graph_norm_formula}
    \mathrm{GraphNorm}\rbr{\hhat_{i,j}} = \gamma_j \cdot \frac{\hhat_{i,j} - \alpha_j\cdot\mu_j}{\hat{\sigma}_j} + \beta_j,
\end{align}
where $\mu_j=\frac{\sum_{i=1}^n\hhat_{i,j}}{n},\hat{\sigma}_j^2 = \frac{\sum_{i=1}^n\rbr{\hhat_{i,j}-\alpha_j\cdot\mu_j}^2}{n}$, and $\gamma_j, \beta_j$ are the affine parameters as in other normalization methods. By introducing the learnable parameter $\alpha_j$ for each feature dimension $j$, we are able to learn how much the information we need to keep in the mean. It is easy to see that GraphNorm has stronger expressive power than InstanceNorm. Formally, we have the following fact:
\begin{fact}[GraphNorm is strictly more expressive than InstanceNorm]\label{prop:express_compare}
    If $\alpha_j \neq 1, \gamma_j \neq 0$, then there does \textbf{not} exist $\gamma_j', \beta_j'$ such that for any $\cbr{\hhat_{i,j}}_{i=1}^n$ that the normalization is applied to, for any $i$,
        $\mathrm{GraphNorm}_{\cbr{\alpha_j, \gamma_j, \beta_j}}\rbr{\hhat_{i,j}} = \gamma_j \cdot \frac{\hhat_{i,j} - \alpha_j\cdot\mu_j}{\hat{\sigma}_j} + \beta_j = \gamma'_j \cdot \frac{\hhat_{i,j} - \mu_j}{\sigma_j} + \beta'_j = \mathrm{InstanceNorm}_{\cbr{\gamma'_j, \beta'_j}}\rbr{\hhat_{i,j}},$
    where $\mu_j=\frac{\sum_{i=1}^n\hhat_{i,j}}{n},\hat{\sigma}_j^2 = \frac{\sum_{i=1}^n\rbr{\hhat_{i,j}-\alpha_j\cdot\mu_j}^2}{n},\sigma_j^2 = \frac{\sum_{i=1}^n\rbr{\hhat_{i,j} - \mu_j}^2}{n}$.
\end{fact}

To validate our theory and the proposed GraphNorm in real-world data, we conduct an ablation study on two typical datasets, PROTEINS and IMDB-BINARY. As shown in Figure~\ref{fig:ablation-study-learn-alpha}, the graphs from PROTEINS and IMDB-BINARY exhibit irregular-type and regular-type graphs, respectively. We train GIN/GCN using our adaptation of InstanceNorm and GraphNorm under the same setting in Section~\ref{sec:exps}. The training curves are presented in Figure~\ref{fig:ablation-study-learn-alpha}. The curves show that using a learnable $\alpha$ slightly improves the convergence on PROTEINS, while significantly boost the training on IMDB-BINARY. This observation verify that shifting the feature values by subtracting the mean may lose information, especially for regular graphs. And the introduction of learnable shift in GraphNorm can effectively mitigate the expressive degradation.

\begin{table*}[t]
\caption{{\bf Test performance} of GIN/GCN with various normalization methods on graph classification tasks. }
\resizebox{\textwidth}{!}{ \renewcommand{\arraystretch}{1.25}
\begin{tabular}{@{}clcccccccc@{}}

\cmidrule[\heavyrulewidth]{2-9}

& Datasets &  {\textsc{MUTAG}} & {\textsc{PTC}}  & {\textsc{PROTEINS}} & {\textsc{NCI1}} & {\textsc{IMDB-B}} & {\textsc{RDT-B}} & {\textsc{COLLAB}}  \\

& \text{\# graphs }  & 188  & 344  & 1113  &  4110 & 1000 & 2000 & 5000     \\
& \text{\# classes }   &  2  & 2  & 2  &  2 &  2 &  2 & 2 \\
& \text{Avg \# nodes }  &  17.9  & 25.5  & 39.1  &  29.8 &  19.8 &  429.6 & 74.5 \\
\cmidrule[\heavyrulewidth]{2-9}
& \textsc{WL subtree \citep{shervashidze2011weisfeiler}}  & 90.4 $\pm$ 5.7 & 59.9 $\pm$ 4.3 & 75.0 $\pm$ 3.1 & \textbf{86.0 $\pm$ 1.8} & 73.8 $\pm$ 3.9 & 81.0 $\pm$ 3.1 & 78.9 $\pm$ 1.9 &   \\
& \textsc{DCNN \citep{atwood2016diffusion}}  &  67.0 &  56.6 &  61.3 &  62.6  &  49.1 &  - &  52.1 &   \\
& \textsc{DGCNN \citep{zhang2018end}}  &  85.8 & 58.6 & 75.5 & 74.4 & 70.0 & - & 73.7 &   \\
& \textsc{AWL \citep{ivanov2018anonymous}}  & 87.9 $\pm$ 9.8 & - & - & - & 74.5 $\pm$ 5.9 & 87.9 $\pm$ 2.5 & 73.9 $\pm$ 1.9 &   \\
\cmidrule[\heavyrulewidth]{2-9}

& \textsc{GIN+LayerNorm }  & 82.4 $\pm$ 6.4    &  62.8 $\pm$ 9.3  & 76.2 $\pm$ 3.0  & 78.3 $\pm$ 1,7 & 74.5 $\pm$ 4,4 & 82.8 $\pm$ 7.7  & 80.1 $\pm$ 0.8  &  \\
& \textsc{GIN+BatchNorm (\citep{xu2018how})}  &  89.4 $\pm$ 5.6    & 64.6 $\pm$ 7.0  & 76.2 $\pm$ 2.8  &  82.7 $\pm$ 1.7 & 75.1 $\pm$ 5.1 & 92.4 $\pm$ 2.5     &\textbf{ 80.2 $\pm$ 1.9}     &  \\

& \textsc{GIN+InstanceNorm }  & 90.5 $\pm$ 7.8    & 64.7 $\pm$ 5.9  & 76.5 $\pm$ 3.9 & 81.2 $\pm$ 1.8 & 74.8 $\pm$ 5.0 & 93.2 $\pm$ 1.7     & 80.0 $\pm$ 2.1    &  \\
& \textbf{GIN+GraphNorm}  &  \textbf{91.6 $\pm$ 6.5}  &  \textbf{ 64.9 $\pm$ 7.5} & \textbf{ 77.4 $\pm$ 4.9}  &  81.4 $\pm$ 2.4 & \textbf{76.0 $\pm$ 3.7} & \textbf{ 93.5 $\pm$ 2.1}     &\textbf{ 80.2 $\pm$ 1.0}     & \\
\cmidrule[\heavyrulewidth]{2-9}
\end{tabular}}
  \label{tab:test-results}
 
\end{table*}

\begin{table}[t]
    \centering
    \caption{\textbf{Test performance} on OGB.}
    
    \begin{tabular}{lc}
      \toprule
        \text{Datasets} &  {\textsc{Ogbg-molhiv}} \\
        \text{\# graphs }  & 41,127  \\
        \text{\# classes }   &  2  \\
        \text{Avg \# nodes }  &  25.5 \\
      \midrule
        \text{GCN \citep{hu2020open}} & 76.06 $\pm$ 0.97 \\
        \text{GIN \citep{hu2020open}} & 75.58 $\pm$ 1.40 \\
      \midrule
        \text{GCN+LayerNorm } & 75.04 $\pm$ 0.48 \\
        \text{GCN+BatchNorm } & 76.22 $\pm$ 0.95 \\
                \text{GCN+InstanceNorm } & 78.18 $\pm$ 0.42 \\
        \textbf{GCN+GraphNorm} & \textbf{78.30 $\pm$ 0.69} \\
      \midrule
        \text{GIN+LayerNorm } & 74.79 $\pm$ 0.92 \\
        \text{GIN+BatchNorm } & 76.61 $\pm$ 0.97 \\
              \text{GIN+InstanceNorm } & 77.54 $\pm$ 1.27 \\
        \textbf{GIN+GraphNorm} & \textbf{77.73 $\pm$ 1.29} \\
      \bottomrule
    \end{tabular}
    \label{tab:test-molhiv}
\end{table}

\section{Experiments}
\label{sec:exps}

In this section,  we evaluate and compare both the training and test performance of GraphNorm with other normalization methods on graph classification benchmarks. 

\paragraph{Settings.}
We use eight popularly used benchmark datasets of different scales in the experiments~\citep{yanardag2015deep,xu2018how}, including four medium-scale bioinformatics datasets (MUTAG, PTC, PROTEINS, NCI1), three medium-scale social network datasets (IMDB-BINARY, COLLAB, REDDIT-BINARY), and one large-scale bioinformatics dataset ogbg-molhiv, which is recently released on Open Graph Benchmark (OGB)~\citep{hu2020open}. Dataset statistics are summarized in Table \ref{tab:test-results}. We use two typical graph neural networks GIN~\citep{xu2018how} and GCN~\citep{kipf2016semi} for our evaluations. Specifically, we use a five-layer GCN/GIN. For GIN, the number of sub-layers in MLP is set to 2. Normalization is applied to each layer. To aggregate global features on top of the network, we use SUM readout for MUTAG, PTC, PROTEINS and NCI1 datasets, and use MEAN readout for other datasets, as in \citet{xu2018how}. Details of the experimental settings are presented in Appendix \ref{appsec:experiments}.
\paragraph{Results.}
\label{sec:exp-results}
We plot the training curves of GIN with GraphNorm and other normalization methods\footnote{The graph size normalization in the preliminary version of \citet{dwivedi2020benchmarking} does not show significant improvement on the training and test performance, so we do not report it.} on different tasks in Figure \ref{fig:gin-dataset-training-curve}. The results on GCN show similar trends, and are provided in Appendix~\ref{appsec:training_gcn}. As shown in Figure~\ref{fig:gin-dataset-training-curve}, GraphNorm enjoys the fastest convergence on all tasks. Compared to BatchNorm used in \citet{xu2018how}, GraphNorm converges in roughly 5000/500 iterations on NCI1 and PTC datasets, while the model using BatchNorm does not even converge in 10000/1000 iterations. Remarkably, though  InstanceNorm does \emph{not} outperform other normalization methods on IMDB-BINARY, GraphNorm with learnable shift significantly boosts the training upon InstanceNorm and achieves the fastest convergence. We also validate the test performance and report the test accuracy in Table~\ref{tab:test-results},\ref{tab:test-molhiv}. The results show that GraphNorm also improves the generalization on most benchmarks. 

For reference, we explain the possible reasons of higher test accuracy in two folds. First, as shown in Figure~\ref{fig:gin-dataset-training-curve}, using proper normalization helps the model find a minimum with a higher training accuracy. Second, as suggested by \citet{hardt2016train}, faster training leads to smaller generalization gap. Since the test accuracy equals the training accuracy plus the generalization, these two views together suggest better 
normalization leads to better test performance.

\subsection{Ablation Study}
\label{sec:ablation}
In this subsection, we summarize the results of some ablation studies, including BatchNorm with learnable shift, BatchNorm with running statistics and the effect of batch size. Due to the space limitation, the detailed results can be found in Appendix~\ref{appsec:additional_exps}.

\paragraph{BatchNorm with learnable shift.} We conduct experiments on BatchNorm to investigate whether simply introducing a learnable shift can already improve the existing normalization methods without concrete motivation of overcoming expressiveness degradation. Specifically, we equip BatchNorm with a similar learnable shift as GraphNorm and evaluate its performance. We find that the learnable shift cannot further improve upon BatchNorm (See Appendix ~\ref{appsec:additional_exps}), which suggests the introduction of learnable shift in GraphNorm is critical.

\paragraph{BatchNorm with running statistics.} We study the variant of BatchNorm which uses running statistics to replace the batch-level mean and standard deviation (Similar idea is also proposed in \citet{yan2019towards}). At first glance, this method may seem to be able to mitigate the problem of large batch noise. However, the running statistics change a lot during training, and using running statistics disables the model to back-propagate the gradients through mean and standard deviation. Results in Appendix ~\ref{appsec:additional_exps} show this variant has even worse performance than BatchNorm.

\paragraph{The effect of batch size.} 
We further compare the GraphNorm with BatchNorm with different batch sizes (8, 16, 32, 64). As shown in Appendix ~\ref{appsec:additional_exps}, our GraphNorm consistently outperforms the BatchNorm on all the settings.
\section{Conclusion and Future Work}
In this paper, we adapt and evaluate three well-used normalization methods, i.e., BatchNorm, LayerNorm, and InstanceNorm to GNNs. We give explanations for the successes and failures of these adaptations. Based on our understanding of the strengths and limitations of existing adaptations, we propose Graph Normalization, that builds upon the adaptation of InstanceNorm with a learnable shift to overcome the expressive degradation of the original InstanceNorm. Experimental results show GNNs with GraphNorm not only converge faster, but also achieve better generalization performance on several benchmark datasets.

Though seeking theoretical understanding of normalization methods in deep learning is challenging \citep{arora2018theoretical} due to limited understanding on the optimization of deep learning models and characterization of real world data, we take an initial step towards finding effective normalization methods for GNNs with theoretical guidance in this paper. The proposed theories and hypotheses are motivated by several simple models. And we are not able to give concrete theoretical results to problems such as: the convergence rate of general GNNs with normalization, the spectrum of $Q$ normalized by learnable shift, etc. We believe the analyses of more realistic but complicated settings, e.g., the dynamics of GraphNorm on deep GNNs, are good future directions.

\section*{Acknowledgements}
We thank Mozhi Zhang and Ruosong Wang for helpful suggestions on the paper; Zhiyuan Li and Kaifeng Lyu for helpful discussion on the literature of normalization methods; and Prof. Yang Yuan for support of computational resources. This work was supported by National Key R\&D Program of China (2018YFB1402600), Key-Area Research and Development Program of Guangdong Province
(No. 2019B121204008), BJNSF (L172037), Beijing
Academy of Artificial Intelligence, Project 2020BD006 supported by PKU-Baidu Fund, NSF CAREER award (1553284) and NSF III (1900933).

\bibliographystyle{icml2021}
\bibliography{iclr2021_conference}

\newpage
\appendix
\twocolumn
\section{Proofs}\label{appsec:proofs}
\subsection{Proof of Theorem~\ref{thm:precondition}}
\label{appsec:proof_precondition}
We first introduce the Cauchy interlace theorem:
\begin{lemma}[Cauchy interlace theorem~(Theorem 4.3.17 in \citet{horn2012matrix})]\label{lem:seperation}
    Let $S\in\Rbb^{(n-1)\times (n-1)}$ be symmetric, $y\in\Rbb^n$ and $a\in\Rbb$ be given, and let $R = \begin{pmatrix}
        S & y\\
        y^\top & a
    \end{pmatrix}\in\Rbb^{n\times n}$. Let $\lambda_1\le\lambda_2\le\cdots\le\lambda_n$ be the eigenvalues of $R$ and $\mu_1\le\mu_2\le\cdots\le\mu_{n-1}$ be the eigenvalues of $S$. Then
    \begin{align}
        \lambda_1\le\mu_1\le\lambda_2\le\cdots\le\lambda_{n-1}\le\mu_{n-1}\le\lambda_n,
    \end{align}
    where $\lambda_i = \mu_i$ only when there is a nonzero $z\in\Rbb^{n-1}$ such that $Sz = \mu_i z$ and $y^\top z = 0$; if $\lambda_i = \mu_{i-1}$ then there is a nonzero $z\in\Rbb^{n-1}$ such that $Sz = \mu_{i-1} z$, $y^\top z = 0$.
\end{lemma}
Using Lemma \ref{lem:seperation}, the theorem can be proved as below.
\begin{proof}
    For any matrices $P,R\in\Rbb^{n\times n}$, we use $P\sim R$ to denote that the matrix $P$ is similar to the matrix $R$. Note that if $P\sim R$, the eigenvalues of $P$ and $R$ are the same. As the singular values of $P$ are equal to the square root of the eigenvalues of $P^\top P$, we have the eigenvalues of $Q^\top Q$ and that of $NQ^\top{Q}N$ are $\cbr{\lambda_i^2}_{i=1}^n$ and $\cbr{\mu_i^2}_{i=1}^n$, respectively.

    Note that $N$ is a projection operator onto the orthogonal complement space of the subspace spanned by $\one$, and $N$ can be decomposed as $N = U\diag\rbr{\underbrace{1, \cdots, 1}_{\times n-1}, 0}U^\top$ where $U$ is an orthogonal matrix. Since $\one$ is the eigenvector of $N$ associated with eigenvalue $0$, we have
    \begin{align}
        U = \begin{pmatrix}
            U_1 & \frac{1}{\sqrt{n}}\one
        \end{pmatrix},
    \end{align}
    where $U_1\in\Rbb^{n\times (n-1)}$ satisfies $U_1\one=0$ and $U_1^\top U_1 = I_{n-1}$.

    Then we have $N{Q}^\top{Q} N = U\diag\rbr{1, \cdots, 1, 0}U^\top{Q}^\top{Q} U\diag\rbr{1, \cdots, 1, 0}U^\top\sim \diag\rbr{1, \cdots, 1, 0}U^\top{Q}^\top{Q} U\diag\rbr{1, \cdots, 1, 0}$.

    Let
    \begin{align}
        D &= \diag\rbr{1,\cdots,1,0}=
        \begin{pmatrix}
            I_{n-1} & \zero\\
            \zero^\top & 0
        \end{pmatrix},\\
        B &=
        \begin{pmatrix}
            I_{n-1}\\
            \zero^\top
        \end{pmatrix},\\
        \Cbar &= {Q}^\top{Q},
    \end{align}
    where $\zero = \sbr{\underbrace{0, \cdots, 0}_{\times n-1}}^\top$.
    
    We have
    \begin{align}
        N{Q}^\top{Q} N&\sim DU^\top\Cbar UD\\
        &= 
        D
        \begin{pmatrix}
            U_1^\top\\
            \frac{1}{\sqrt{n}}\one^\top
        \end{pmatrix}
        \Cbar
        \begin{pmatrix}
            U_1 & \frac{1}{\sqrt{n}}\one
        \end{pmatrix}
        D\\
        &=D
        \begin{pmatrix}
            U_1^\top\Cbar U_1 & \frac{1}{\sqrt{n}}U_1^\top\Cbar\one\\
            \frac{1}{\sqrt{n}}\one^\top\Cbar U_1 & \frac{1}{n}\one^\top\Cbar\one
        \end{pmatrix}
        D\\
        &=\begin{pmatrix}
            B^\top\\
            \begin{matrix}
                \zero^\top& 0
            \end{matrix}
        \end{pmatrix}
        \begin{pmatrix}
            U_1^\top\Cbar U_1 & \frac{1}{\sqrt{n}}U_1^\top\Cbar\one\\
            \frac{1}{\sqrt{n}}\one^\top\Cbar U_1 & \frac{1}{n}\one^\top\Cbar\one
        \end{pmatrix}
        \begin{pmatrix}
            B &\begin{matrix}
            \zero\\
            0
            \end{matrix}
        \end{pmatrix}\\
        &= \begin{pmatrix}
            U_1^\top\Cbar U_1 &\zero\\
            \zero^\top &0
        \end{pmatrix}\label{eq:block_decomp}.
    \end{align}
    Using Lemma~\ref{lem:seperation} and taking $R = U^\top\Cbar U$ and $S = U_1^\top\Cbar U_1$, we have the eigenvalues of $U_1^\top\Cbar U_1$ are interlacing between the eigenvalues of $U^\top\Cbar U$. Note that the eigenvalues of $DU^\top \Cbar UD$ are $\mu_1^2\le\mu_2^2\le\cdots\le\mu_{n-1}^2$ and $\mu_n^2=0$, and by Eq.~\eqref{eq:block_decomp}, the eigenvalues of $DU^\top \Cbar UD$ contain the eigenvalues of $U_1^\top \Cbar U_1$ and $0$. Since the eigenvalues of $U^\top \Cbar U$ are $\lambda_1^2\le\lambda_2^2\le\cdots\le\lambda_n^2$ (By similarity of $U^\top \Cbar U$ and $\Cbar$), we then have
    \begin{align}
        \lambda_1^2\le\mu_1^2\le\lambda_2^2\le\cdots\le\lambda_{n-1}^2\le\mu_{n-1}^2\le\lambda_n^2.
    \end{align}
    Moreover, the equality holds only when there is a nonzero $z\in\Rbb^{n-1}$ that satisfies
    \begin{align}
        U_1^\top\Cbar U_1 z = \mu z,\label{eq:equal_eigen}\\
        \one^\top\Cbar U_1 z = 0\label{eq:equal_otho},
    \end{align}
    where $\mu$ is one of $\mu_i^2$s. 
    
    Since $U_1$ forms an orthogonal basis of the orthogonal complement space of $\one$ and Eq.~\eqref{eq:equal_otho} is equivalent to ``$\Cbar U_1 z$ lies in the orthogonal complement space'', we have that there is a vector $y\in\Rbb^{n-1}$ such that
    \begin{align}
        \Cbar U_1 z = U_1 y.
    \end{align}
    Substituting this into Eq.~\eqref{eq:equal_eigen}, we have
    \begin{align}
        U_1^\top U_1 y = \mu z.
    \end{align}
    Since $U_1^\top U_1 = I_{n-1}$, the equation above is equivalent to
    \begin{align}
        y = \mu z,
    \end{align}
    which means
    \begin{align}
        \Cbar U_1 z = U_1 y = \mu U_1 z,
    \end{align}
    i.e., $U_1 z$ is the eigenvector of $\Cbar$ associated with $\mu$. By noticing $U_1 z$ lies in the orthogonal complement space of $\one$ and the eigenvector of $\Cbar$ is right singular vector of ${Q}$, we complete the proof.
\end{proof}

\subsection{Concrete example of the acceleration}
\label{appsec:concrete_example}
To get more intuition on how the preconditioning effect of the shift can accelerate the training of GNNs, we provide a concrete example showing that shift indeed improves the convergence rate. Note that the global convergence rate of widely-used deep GNNs on general data remains highly unexplored, and the existing works mainly focus on some simplified case, e.g., GNTK \citep{du2019graph}. To make things clear without loss of intuition, we focus on a \emph{simple linear GNN} applied to a \emph{well-specified task} where we are able to explicitly compare the convergence rates.

\subsubsection{Settings}
\paragraph{Data.}
We describe each sample, i.e., graph, with $n$ nodes by a tuple $G = \cbr{X, Q, \pbf, y}$, where
\begin{itemize}[leftmargin=*]
    \item $X\in\R^{d\times n}$ is the feature matrix of the graph, where $d$ is the dimension of the of each feature vector.
    \item $Q\in\R^{n\times n}$ representing the matrix representing the neighbor aggregation as Eq.~\eqref{eq:structure_with_norm}. Note that this matrix depends on the aggregation scheme used by the chosen architecture, but for simplicity, we model this as a part of data structure.
    \item $\pbf\in\R^{n\times 1}$ is a weight vector representing the importance of each node. This will be used to calculate the $\mathrm{READOUT}$ step. Note that this vector is not provided in many real-world datasets, so the $\mathrm{READOUT}$ step usually takes operations such as summation.
    \item $y\in\R$ is the label.
\end{itemize}
The whole dataset $S = \cbr{G_1, \cdots, G_m}$ consists of $m$ graphs where $G_i = \cbr{X_i, Q_i, \pbf_i, y_i}$. We make the following assumptions on the data generation process:
\begin{assumption}[Independency]\label{assumpt:indepency}
We assume $X_i$, $Q_i$, $\pbf_i$ are drawn from three independent distributions in an i.i.d. manner, e.g., $X_1, \cdots, X_m$ are i.i.d.. 
\end{assumption}
\begin{assumption}[Structure of data distributions]\label{assumpt:non-degeneracy}
For clearness and simplicity of statement, we assume the number of nodes in each graph $G_i$ are the same, we will use $n$ to denote this number and we further assume $n = d$. We assume that the distribution of $\pbf_i$ satisfies $\Ebb\sbr{\pbf\pbf^\top} = I_n, \Ebb \pbf = 0$, which means the importance vector is non-degenerate. Let $\Ebb XQ = Y$, we assume that $Y$ is \emph{full ranl}. We make the following assumptions on $XQ$: $\one^\top Y^{-1}XQ = 0$, which ensures that there is no information in the direction $\one^\top Y^{-1}$; there is a constant $\delta_1$ such that $\Ebb (XQ-Y)(XQ-Y)^\top \preceq \delta_1 I_d$ and $\Ebb (XQ-Y)N(XQ-Y)^\top\preceq \delta_1 I_d$, where $\delta_1$ characterizes the noise level; none of the eigenvectors of $YY^\top$ is orthogonal to $\one$.
\end{assumption}
\begin{remark}
A few remarks are in order, firstly, the assumption that each graph has the same number of nodes and the number $n$ is equal to feature dimension $d$ can be achieved by ``padding'', i.e., adding dummy points or features to the graph or the feature matrix. The assumption that $\one^\top Y^{-1}XQ = 0$ is used to guarantee that there is no information loss caused by shift ($\one^\top Y^{-1}YNY^\top = 0$). Though we make this strong assumption to ensure no information loss in theoretical part, we introduce ``learnable shift'' to mitigate this problem in the practical setting. The theory taking learnable shift into account is an interesting future direction.
\end{remark}
\begin{assumption}[Boundness]\label{assumpt:boundness}
We make the technical assumption that there is a constant $b$ such that the distributions of $X_i, Q_i, \pbf_i$ ensures
\begin{align}
\norm{X_i}\norm{Q_i}\norm{\pbf_i}\le\sqrt{b}.
\end{align}
\end{assumption}
\paragraph{Model.}
We consider a simple \emph{linear graph neural network} with parameter $\wbf\in\R^{d\times 1}$:
\begin{align}
    \label{eq:linear_model}
    f^{\mathrm{Vanilla}}_\wbf(X, Q, \pbf) = \wbf^\top X Q \pbf.
\end{align}
Then, the model with shift can be represented as:
\begin{align}
    \label{eq:linear_model_shift}
    f^{\mathrm{Shift}}_\wbf(X, Q, \pbf) = \wbf^\top X Q N \pbf,
\end{align}
where $N = I_n - \frac{1}{n}\one\one^\top$.
\paragraph{Criterion.}
We consider using square loss as training objective, i.e.,
\begin{align}
    \label{eq:linear_obj}
L(f) = \sum_{i = 1}^m\frac{1}{2}\rbr{f(X_i, Q_i, \pbf_i) - y_i}^2.
\end{align}
\paragraph{Algorithm.} We consider using gradient descent to optimize the objective function. Let the initial parameter $\wbf_0=0$. The update rule of $w$ from step $t$ to $t+1$ can be described as:
\begin{align}
    \wbf_{t+1} = \wbf_t - \eta \nabla_\wbf L(f_{\wbf_t}),
\end{align}
where $\eta$ is the learning rate.
\begin{theorem}
Under Assumption~\ref{assumpt:indepency},\ref{assumpt:non-degeneracy},\ref{assumpt:boundness}, for any $\epsilon > 0$ there exists constants $C_1, C_2$, such that for $\delta_1<C_1, m>C_2$, with probability $1-\epsilon$, the parameter $\wbf_t^{\mathrm{Vanilla}}$ of vanilla model converges to the optimal parameter $\wbf_*^{\mathrm{Vanilla}}$ linearly:
\begin{align}
    \norm{\wbf_t^{\mathrm{Vanilla}} - \wbf_*^{\mathrm{Vanilla}}}_2 \le O\rbr{\rho_1^t},
\end{align}
while the parameter $\wbf_t^{\mathrm{Shfit}}$ of the shifted model converges to the optimal parameter $\wbf_*^{\mathrm{Shfit}}$ linearly:
\begin{align}
    \norm{\wbf_t^{\mathrm{Shift}} - \wbf_*^{\mathrm{Shfit}}}_2 \le O\rbr{\rho_2^t},
\end{align}
where
\begin{align}
    1 > \rho_1 > \rho_2,
\end{align}
which indicates the shifted model has a faster convergence rate.
\end{theorem}
\begin{proof}
We firstly reformulate the optimization problem in matrix form. 

Notice that in our linear model, the representation and structure of a graph $G_i=\cbr{X_i, Q_i, \pbf_i, y_i}$ can be encoded as a whole in a single vector, i.e., $\zbf_i^{\mathrm{Vanilla}}=X_iQ_i\pbf_i\in\R^{d\times 1}$ for vanilla model in Eq.~\eqref{eq:linear_model}, and $\zbf_i^{\mathrm{Shift}}=X_iQ_iN\pbf_i\in\R^{d\times 1}$ for shifted model in Eq.~\eqref{eq:linear_model_shift}. We call $\zbf_i$ and $\zbf_i^{\mathrm{Shift}}$ ``combined features''. Let $Z^{\mathrm{Vanilla}} = \sbr{\zbf_1^{\mathrm{Vanilla}}, \cdots, \zbf_m^{\mathrm{Vanilla}}}\in\R^{d\times m}$ and $Z^{\mathrm{Shift}} = \sbr{\zbf_1^{\mathrm{Shift}}, \cdots, \zbf_m^{\mathrm{Shift}}}\in\R^{d\times m}$ be the matrix of combined features of valinna linear model and shifted linear model respectively. For clearness of the proof, we may abuse the notations and use $Z$ to represent $Z^{\mathrm{Vanilla}}$. Then the objective in Eq.~\eqref{eq:linear_obj} for vanilla linear model can be reformulated as:
\begin{align}
\label{eq:linear_obj_mat}
L(f_\wbf) = \frac{1}{2}\norm{Z^\top \wbf - \ybf}^2_2,
\end{align}
where $\ybf = \sbr{y_1, \cdots, y_m}^\top\in\R^{m\times 1}$.

Then the gradient descent update can be explicitly writen as:
\begin{align}
\label{eq:linear_mat_update}
\wbf_{t+1} &= \wbf_t - \eta \rbr{ZZ^\top \wbf_t - Z\ybf}\\
&= (I_d - \eta ZZ^\top) \wbf_t + \eta Z\ybf,
\end{align}
which converges to $\wbf_* = \rbr{ZZ^\top}^\dagger Z\ybf$ according to classic theory of least square problem \citep{horn2012matrix}, where $\rbr{ZZ^\top}^\dagger$ is the Moore–Penrose inverse of $ZZ^\top$.

By simultaneously subtracting $\wbf_*$ in the update rule, we have 
\begin{align}
    \wbf_{t+1} - \wbf_* = \rbr{I_d - \eta ZZ^\top} \rbr{\wbf_t - \wbf_*}.
\end{align}

So the residual of $\wbf_t$ is
\begin{align}
    \norm{\wbf_t - \wbf_*} &= \norm{\rbr{I_d - \eta ZZ^\top}^t \wbf_*}\\
    &\le \norm{I_d - \eta ZZ^\top}^t\norm{\wbf_*}.\label{eq:linear_converge}
\end{align}
Let $\sigma_{\max}(A)$ and $\sigma_{\min}(A)$ be the maximal and mininal \emph{positive} eigenvalues of $A$, respectively. Then the optimial learning rate (the largest learning rate that ensures $I_d - \eta ZZ^\top$ is positive semidefinite) is $\eta = \frac{1}{\sigma_{\max}(ZZ^\top)}$. Under this learning rate we have the convergence rate following Eq.~\eqref{eq:linear_converge}:
\begin{align}
    \norm{\wbf_t - \wbf_*} &\le \norm{I_d - \eta ZZ^\top}^t\norm{\wbf_*}\\
    &\le \rbr{1 - \frac{\sigma_{\min}\rbr{ZZ^\top}}{\sigma_{\max}\rbr{ZZ^\top}}}^t\norm{\wbf_*}.\label{eq:linear_rate}
\end{align}

For now, we show that the convergence rate of the optimization problem with vanilla model depends on $\frac{\sigma_{\min}\rbr{ZZ^\top}}{\sigma_{\max}\rbr{ZZ^\top}}$. Follwing the same argument, we can show the convergence rate of the optimization problem with shifted model depends on $\frac{\sigma_{\min}\rbr{Z^{\mathrm{Shift}}Z^{\mathrm{Shfit}\top}}}{\sigma_{\max}\rbr{Z^{\mathrm{Shift}}Z^{\mathrm{Shfit}\top}}}$. We then aim to bound this term, which we call effective condition number.

Similarly, we investigate the effective condition number for $ZZ^\top$ first, and the analysis of $Z^{\mathrm{Shift}}Z^{\mathrm{Shift}\top}$ follows the same manner. As multiplying a constant does not affect the effective condition number, we first scale $ZZ^\top$ by $\frac{1}{m}$ and expand it as:
\begin{align}
    \frac{1}{m} ZZ^\top &= \frac{1}{m}\sum_{i=1}^m \zbf_i\zbf_i^\top,
\end{align}
which is the empirical estimation of the covariance matrix of the combined feature. By concentration inequality, we know this quantity is concentrated to the covariance matrix, i.e.,
\begin{align*}
    \Ebb_{\zbf} \zbf\zbf^\top &= \Ebb_{X,Q,\pbf}{XQ\pbf\rbr{XQ\pbf}^\top}\\
    &=\Ebb_{X,Q} XQ\rbr{\Ebb\sbr{\pbf\pbf^\top}}(XQ)^\top\\
    &=\Ebb_{X, Q} XQ(XQ)^\top\hspace{0.5cm}{\text{(By Assumption~\ref{assumpt:indepency})}}\\
    &=YY^\top + \Ebb_{X,Q} (XQ-Y)(XQ-Y)^\top.
\end{align*}
Noticing that $\zero \preceq \Ebb_{X,Q} (XQ-Y)(XQ-Y)^\top \preceq \delta_1 I_d$ by Assumption~\ref{assumpt:non-degeneracy}, and $Y$ is full rank, we can conclude that $\sigma_{\max}\rbr{YY^\top}\le\sigma_{\max}\rbr{\Ebb_{\zbf} \zbf\zbf^\top}\le\sigma_{\max}\rbr{YY^\top}+\delta_1$, and $\sigma_{\min}\rbr{YY^\top}\le\sigma_{\min}\rbr{\Ebb_{\zbf} \zbf\zbf^\top}\le\sigma_{\min}\rbr{YY^\top}+\delta_1$ by Weyl's inequality.

By similar argument, we have that $\frac{1}{m}Z^{\mathrm{Shift}}Z^{\mathrm{Shift}\top}$ concentrates to
\begin{align*}
    &\Ebb_{\zbf^{\mathrm{Shift}}} \zbf^{\mathrm{Shift}}\zbf^{\mathrm{Shift}\top} \\
    =& \Ebb_{X, Q} (XQ)N^2(XQ)^\top\\
    =&\Ebb_{X, Q} (XQ)N(XQ)^\top \hspace{0.5cm}{(N^2 = N)}\\
    =&YNY^\top + \Ebb_{X, Q} (XQ-Y)N(XQ-Y)^\top.
\end{align*}
By Assumption~\ref{assumpt:non-degeneracy}, we have
\begin{align*}
    0=&\one^\top Y^{-1}\Ebb_{\zbf^{\mathrm{Shift}}} \zbf^{\mathrm{Shift}}\zbf^{\mathrm{Shift}\top}\\
    =&\one^\top Y^{-1}\rbr{YNY^\top + \Ebb_{X, Q} (XQ-Y)N(XQ-Y)^\top}\\
    =&\one^\top Y^{-1}\Ebb_{X, Q} (XQ-Y)N(XQ-Y)^\top,
\end{align*}
which means $\Ebb_{X, Q} (XQ-Y)N(XQ-Y)^\top$ has the same eigenspace as $YNY^\top$ with respect to eigenvalue $0$. Combining with $\zero \preceq \Ebb_{X,Q} (XQ-Y)N(XQ-Y)^\top \preceq \delta_1 I_d$, we have $\sigma_{\max}\rbr{YNY^\top}\le\sigma_{\max}\rbr{\Ebb_{\zbf^{\mathrm{Shift}}} \zbf^{\mathrm{Shift}}\zbf^{\mathrm{Shift}\top}}\le\sigma_{\max}\rbr{YNY^\top}+\delta_1$, and $\sigma_{\min}\rbr{YNY^\top}\le\sigma_{\min}\rbr{\Ebb_{\zbf^{\mathrm{Shift}}} \zbf^{\mathrm{Shift}}\zbf^{\mathrm{Shift}\top}}\le\sigma_{\min}\rbr{YNY^\top}+\delta_1$.

It remains to bound the finite sample error, i.e., $\norm{\frac{1}{m}ZZ^\top - \Ebb_\zbf \zbf\zbf^\top}_2$ and $\norm{\frac{1}{m}Z^{\mathrm{Shift}}Z^{\mathrm{Shfit}\top} - \Ebb_\zbf \zbf\zbf^\top}_2$. These bounds can be obtained by the following lemma:
\begin{lemma}[Corollary 6.1 in \citet{wainwright2019high}]
Let $\zbf_1, \cdots, \zbf_m$ be i.i.d. zero-mean random vectors with covariance matrix $\Sigma$ such that $\norm{\zbf}_2\le\sqrt{b}$ almost surely. Then for all $\delta>0$, the sample covariance matrix $\hat{\Sigma} = \frac{1}{m}\sum_{i=1}^m\zbf_i\zbf_i^\top$ satisfies
\begin{align}
\Pr\sbr{\norm{\hat{\Sigma} - \Sigma}_2\ge\delta}\le 2d\exp\rbr{-\frac{\delta^2}{2b\rbr{\norm{\Sigma}_2+\delta}}}.
\end{align}
\end{lemma}
By this lemma, we further have
\begin{lemma}[Bound on the sample covariance matrix]
Let $\zbf_1, \cdots, \zbf_m$ be i.i.d. zero-mean random vectors with covariance matrix $\Sigma$ such that $\norm{\zbf}_2\le\sqrt{b}$ almost surely. Then with probability $1-\epsilon$, the sample covariance matrix $\hat{\Sigma} = \frac{1}{m}\sum_{i=1}^m\zbf_i\zbf_i^\top$ satisfies
\begin{align}
\norm{\hat{\Sigma} - \Sigma}_2\le O\rbr{\sqrt{\frac{\log (1/\epsilon)}{m}}},
\end{align}
where we hide constants $b, \norm{\Sigma}_2, d$ in the big-O notation and highlight the dependence on the number of samples $m$.
\end{lemma}
Combining with previous results, we conclude that:
\begin{align*}
&\sigma_{\max}\rbr{YY^\top}-O\rbr{\sqrt{\frac{\log (1/\epsilon)}{m}}}\\
\le&\sigma_{\max}\rbr{\frac{1}{m}ZZ^\top}\\
\le&\sigma_{\max}\rbr{YY^\top}+\delta_1+O\rbr{\sqrt{\frac{\log (1/\epsilon)}{m}}};
\end{align*}
\begin{align*}
&\sigma_{\min}\rbr{YY^\top}-O\rbr{\sqrt{\frac{\log (1/\epsilon)}{m}}}\\
\le&\sigma_{\min}\rbr{\frac{1}{m}ZZ^\top}\\
\le&\sigma_{\min}\rbr{YY^\top}+\delta_1+O\rbr{\sqrt{\frac{\log (1/\epsilon)}{m}}};
\end{align*}
\begin{align*}
&\sigma_{\max}\rbr{YNY^\top}-O\rbr{\sqrt{\frac{\log (1/\epsilon)}{m}}}\\
\le&\sigma_{\max}\rbr{\frac{1}{m}Z^{\mathrm{Shift}}Z^{\mathrm{Shift}\top}}\\
\le&\sigma_{\max}\rbr{YNY^\top}+\delta_1+O\rbr{\sqrt{\frac{\log (1/\epsilon)}{m}}}
\end{align*}
\begin{align*}
&\sigma_{\min}\rbr{YNY^\top}-O\rbr{\sqrt{\frac{\log (1/\epsilon)}{m}}}\\
\le&\sigma_{\min}\rbr{\frac{1}{m}Z^{\mathrm{Shift}}Z^{\mathrm{Shift}\top}}\\
\le&\sigma_{\min}\rbr{YNY^\top}+\delta_1+O\rbr{\sqrt{\frac{\log (1/\epsilon)}{m}}}.
\end{align*}

By now, we have transfered the analysis of $ZZ^\top$ and $Z^{\mathrm{Shift}}Z^{\mathrm{Shfit}\top}$ to the analysis of $YY^\top$ and $YNY^\top$. And the positive eigenvalues of $YNY^\top$ is interlaced between the positive eigenvalues of $YY^\top$ by the same argument as Theorem~\ref{thm:precondition}. Concretely, we have $\sigma_{\min}\rbr{YY^\top}\le\sigma_{\min}\rbr{YNY^\top}\le\sigma_{\max}\rbr{YNY^\top}\le\sigma_{\max}\rbr{YY^\top}$. Noticing that none of the eigenvectors of $YY^\top$ is orthogonal to $\one$, the first and last equalies can not be achieved, so $\sigma_{\min}\rbr{YY^\top}<\sigma_{\min}\rbr{YNY^\top}\le\sigma_{\max}\rbr{YNY^\top}<\sigma_{\max}\rbr{YY^\top}$. Finally, we can conclude for small enough $\delta_1$ and large enough $m$, with probability $\epsilon$,
\begin{align*}
&\sigma_{\min}\rbr{\frac{1}{m}ZZ^\top}\\
\le&\sigma_{\min}\rbr{YY^\top}+\delta_1+O\rbr{\sqrt{\frac{\log (1/\epsilon)}{m}}}\\
<&\sigma_{\min}\rbr{YNY^\top}-O\rbr{\sqrt{\frac{\log (1/\epsilon)}{m}}}\\
\le&\sigma_{\min}\rbr{\frac{1}{m}Z^{\mathrm{Shift}}Z^{\mathrm{Shift}\top}}\\
\le&\sigma_{\max}\rbr{\frac{1}{m}Z^{\mathrm{Shift}}Z^{\mathrm{Shift}\top}}\\
\le&\sigma_{\max}\rbr{YNY^\top}+\delta_1+O\rbr{\sqrt{\frac{\log (1/\epsilon)}{m}}}\\
<&\sigma_{\max}\rbr{YY^\top}-O\rbr{\sqrt{\frac{\log (1/\epsilon)}{m}}}\\
\le&\sigma_{\max}\rbr{\frac{1}{m}ZZ^\top}.
\end{align*}
So
\begin{align*}
&\rho_2 = 1 - \frac{\sigma_{\min}\rbr{Z^{\mathrm{Shift}}Z^{\mathrm{Shift}\top}}}{\sigma_{\max}\rbr{Z^{\mathrm{Shift}}Z^{\mathrm{Shift}\top}}}\\
<&\rho_1 = 1 - \frac{\sigma_{\min}\rbr{ZZ^\top}}{\sigma_{\max}\rbr{ZZ^\top}},
\end{align*}
where $\rho_1,\rho_2$ are the constants in the statement of the theorem. This inequality means the shifted model has better convergence speed by Eq.~\eqref{eq:linear_rate}.
\end{proof}

\subsection{Proof of Proposition~\ref{prop:regular_graph}}
\begin{proof}
    For $r$-regular graph, $A=r\cdot I_n$ and ${Q_{\mathrm{GIN}}} = \rbr{r+1+\xi^{(1)}}I_n$. Since $H^{(0)}$ is given by one-hot encodings of node degrees, the row of $H^{(0)}$ can be represented as $c\cdot \one^\top$ where $c=1$ for the $r$-th row and $c=0$ for other rows. By the associative property of matrix multiplication, we only need to show $H^{(0)}{Q_{\mathrm{GIN}}} N=0$. This is because, for each row
    \begin{align}
        c\cdot\one^\top {Q_{\mathrm{GIN}}} N &= c\cdot\one^\top (r+1+\xi^{(1)})I_n \rbr{I_n - \frac{1}{n}\one\one^\top}\\
        &=c\rbr{r+1+\xi^{(1)}}\rbr{\one^\top -\one^\top\cdot\frac{1}{n}\one\one^\top} = 0.
    \end{align}
\end{proof}
\subsection{Proof of Proposition~\ref{prop:complete_graph}}
\begin{proof}
\begin{align}
    {Q_{\mathrm{GIN}}} N= (A+ I_n+\xi^{(k)}I_n)N = = (\one\one^\top +\xi^{(k)I_n})N = \xi^{(k)}N,
\end{align}
\end{proof}
\subsection{Gradient of $W^{(k)}$}\label{appsec:gradient_formular}
We first calculate the gradient of $W^{(k)}$ when using normalization. Denote $Z^{(k)} = \mathrm{Norm}\rbr{W^{(k)}H^{(k-1)}Q}$ and $\Lcal$ as the loss.
Then the gradient of $\Lcal$ w.r.t. the weight matrix $W^{(k)}$ is 
\begin{align}
    \frac{\partial\Lcal}{\partial W^{(k)}}=
    \rbr{\rbr{H^{(k-1)}QN}^\top\otimes S}\frac{\partial\Lcal}{\partial Z^{(k)}},
\end{align}
where $\otimes$ represents the Kronecker product, and thus $\rbr{H^{(k-1)}QN}^\top\otimes S$ is an operator on matrices.

Analogously, the gradient of $W^{(k)}$ without normalization consists a $\rbr{H^{(k-1)}Q}^\top\otimes I_n$ term. As suggested by Theorem~\ref{thm:precondition}, $QN$ has a smoother distribution of spectrum than $Q$, so that the gradient of $W^{(k)}$ with normalization enjoys better optimization curvature than that without normalizaiton.

\section{Datasets}\label{appsec:datasets}

Detailed of the datasets used in our experiments are presented in this section. Brief statistics of the datasets are summarized in Table \ref{tab:appendix-dataset-statistics}. Those information can be also found in \citet{xu2018how} and \citet{hu2020open}.

\paragraph{Bioinformatics datasets.}
PROTEINS is a dataset where nodes are secondary structure elements (SSEs) and there is an edge between two nodes if they are neighbors in the amino-acid sequence or in 3D space. It has 3 discrete labels, representing helix, sheet or turn.
NCI1 is a dataset made publicly available by the National Cancer Institute (NCI) and is a subset of balanced datasets of chemical compounds screened for ability to suppress or inhibit the growth of a panel of human tumor cell lines, having 37 discrete labels.
MUTAG is a dataset of 188 mutagenic aromatic and heteroaromatic nitro compounds with 7 discrete labels. 
PTC is a dataset of 344 chemical compounds that reports the carcinogenicity for male and female rats and it has 19 discrete labels.

\paragraph{Social networks datasets.}
IMDB-BINARY is a movie collaboration dataset. Each graph corresponds to an ego-network for each actor/actress, where nodes correspond to actors/actresses and an edge is drawn betwen two actors/actresses if they appear in the same movie. Each graph is derived from a pre-specified genre of movies, and the task is to classify the genre graph it is derived from.
REDDIT-BINARY is a balanced dataset where each graph corresponds to an online discussion thread and nodes correspond to users. An edge was drawn between two nodes if at least one of them responded to another's comment.
The task is to classify each graph to a community or a subreddit it belongs to.
COLLAB is a scientific collaboration dataset, derived from 3 public collaboration datasets, namely, High Energy Physics, Condensed Matter
Physics and Astro Physics. Each graph corresponds to an ego-network of different researchers from
each field. The task is to classify each graph to a field the corresponding researcher belongs to.

\paragraph{Large-scale Open Graph Benchmark: ogbg-molhiv.}
Ogbg-molhiv is a molecular property prediction dataset, which is adopted from the the MOLECULENET \citep{DBLP:journals/corr/WuRFGGPLP17}. Each graph represents a molecule, where nodes are atoms and edges are chemical bonds. Both nodes and edges have associated diverse features. Node features are 9-dimensional, containing atomic number and chirality, as well as other additional atom features. Edge features are 3-dimensional, containing bond type, stereochemistry as well as an additional bond feature indicating whether the bond is conjugated. 

\begin{table*}[t]
\caption{{\bf Summary of statistics of benchmark datasets.}}
\resizebox{\textwidth}{!}{ \renewcommand{\arraystretch}{1.25}
\begin{tabular}{@{}clcccccccc@{}}
\cmidrule[\heavyrulewidth]{2-10}
& Datasets &  {\textsc{MUTAG}} & {\textsc{PTC}}  & {\textsc{PROTEINS}} & {\textsc{NCI1}} & {\textsc{IMDB-B}} & {\textsc{RDT-B}} & {\textsc{COLLAB}} & {\textsc{ogbg-molhiv}}  \\
\cmidrule[\heavyrulewidth]{2-10}
& \text{\# graphs }  & 188  & 344  & 1113  &  4110 & 1000 & 2000 & 5000 & 41127   \\
& \text{\# classes }   &  2  & 2  & 2  &  2 &  2 &  2 & 2 & 2 \\
& \text{Avg \# nodes }  &  17.9  & 25.5  & 39.1  &  29.8 &  19.8 &  429.6 & 74.5 & 25.5 \\
& \text{Avg \# edges }  &  57.5  & 72.5  & 184.7  &  94.5 &  212.8 &  1425.1 & 4989.5 & 27.5 \\
& \text{Avg \# degrees } & 3.2 & 3.0 & 4.7 & 3.1 & 10.7 & 3.3 & 66.9 & 2.1 \\
\cmidrule[\heavyrulewidth]{2-10}
\end{tabular}}
  \label{tab:appendix-dataset-statistics}
\end{table*}

\section{The Experimental Setup}\label{appsec:experiments}

\paragraph{Network architecture.}
For the medium-scale bioinformatics and social network datasets, we use 5-layer GIN/GCN with a linear output head for prediction followed \citet{xu2018how} with residual connection. The hidden dimension of GIN/GCN is set to be 64. For the large-scale ogbg-molhiv dataset, we also use 5-layer GIN/GCN\citep{xu2018how} architecture with residual connection. Following \citet{hu2020open}, we set the hidden dimension as 300.  

\paragraph{Baselines.} For the medium-scale bioinformatics and social network datasets, we compare several competitive baselines as in \citet{xu2018how}, including the WL subtree kernel model \citep{shervashidze2011weisfeiler}, diffusion-convolutional neural networks (DCNN)~\citep{atwood2016diffusion}, Deep Graph CNN (DGCNN) \citep{zhang2018end} and Anonymous Walk Embeddings (AWL) \citep{ivanov2018anonymous}. We report the accuracies reported in the original paper \citep{xu2018how}. For the large-scale ogbg-molhiv dataset, we use the baselines in \citet{hu2020open}, including the Graph-agnostic MLP model,  GCN \citep{kipf2016semi} and GIN \citep{xu2018how}. We also report the roc-auc values reported in the original paper \citep{hu2020open}.

\paragraph{Hyper-parameter configurations.}
We use Adam \citep{kingma2014adam} optimizer with a linear learning rate decay schedule. 
We follow previous work \citet{xu2018how} and \citet{hu2020open} to use hyper-parameter search (grid search) to select the best hyper-parameter based on validation performance. In particular, we select the batch size $\in\{64, 128\}$, the dropout ratio $\in\{0,0.5\}$, weight decay $\in\{5e-2, 5e-3, 5e-4,5e-5\}\cup\{0.0\}$,  the learning rate $\in\{1e-4, 1e-3, 1e-2\}$. For the drawing of the training curves in Figure \ref{fig:gin-dataset-training-curve}, for simplicity, we set batch size to be 128, dropout ratio to be 0.5, weight decay to be 0.0, learning rate to be 1e-2, and train the models for 400 epochs for all settings. 

\paragraph{Evaluation.} Using the chosen hyper-parameter, we report the averaged test performance over different random seeds (or cross-validation). In detail, for the medium-scale datasets, following \citet{xu2018how}, we perform a 10-fold cross-validation as these datasets do not have a clear train-validate-test splitting format. The mean and standard deviation of the validation accuracies across the 10 folds are reported. For the ogbg-molhiv dataset, we follow the official setting~\citep{hu2020open}. We repeat the training process with 10 different random seeds. 

For all experiments, we select the best model checkpoint with the best validation accuracy and record the corresponding test performance. 

\section{Additional Experimental Results}\label{appsec:additional_exps}

\subsection{Visualization of the singular value distributions}\label{appsec:vis-singular}
As stated in Theorem \ref{thm:precondition}, the shift operation $N$ serves as a preconditioner of ${Q}$ which makes the singular value distribution of ${Q}$ smoother. To check the improvements, we sample graphs from 6 median-scale datasets (PROTEINS, NCI1, MUTAG, PTC, IMDB-BINARY, COLLAB) for visualization, as in Figure \ref{fig:appendix-singular-value}.

\subsection{Visualization of noise in the batch statistics }\label{appsec:vis-noise}
We show the noise of the batch statistics on the PROTEINS task in the main body. Here we provide more experiment details and results. 

For graph tasks (PROTEINS, PTC, NCI1, MUTAG, IMDB-BINARY datasets), we train a 5-layer GIN with BatchNorm as in \citet{xu2018how} and the number of sub-layers in MLP is set to 2. For image task (CIFAR10 dataset), we train a ResNet18 \citep{he2016deep}. Note that for a 5-layer GIN model, it has four graph convolution layers (indexed from 0 to 3) and each graph convolution layer has two BatchNorm layers; for a ResNet18 model, except for the first 3$\times$3 convolution layer and the final linear prediction layer, it has four basic layers (indexed from 0 to 3) and each layer consists of two basic blocks (each block has two BatchNorm layers). For image task, we set the batch size as 128, epoch as 100, learning rate as 0.1 with momentum 0.9 and weight decay as 5e-4. For graph tasks, we follow the setting of Figure \ref{fig:gin-dataset-training-curve} (described in Appendix \ref{appsec:experiments}). 

The visualization of the noise in the batch statistics is obtained as follows. We first train the models and dump the model checkpoints at the end of each epoch; Then we randomly sample one feature dimension and fix it. For each model checkpoint, we feed different batches to the model and record the maximum/minimum batch-level statistics (mean and standard deviation) of the feature dimension across different batches. We also calculate dataset-level statistics. 

As Figure \ref{fig:noise-comparison} in the main body, pink line denotes the dataset-level statistics, and green/blue line denotes the maximum/minimum value of the batch-level statistics respectively. First, we provide more results on PTC, NCI1, MUTAG, IMDB-BINARY tasks, as in Figure \ref{fig:appendix-noise-in-graphs-other-dts}. We visualize the statistics from the first (layer-0) and the last (layer-3) BatchNorm layers in GIN for comparison. Second, we further visualize the statistics from different BatchNorm layers (layer 0 to layer 3) in GIN on PROTEINS and ResNet18 in CIFAR10, as in Figure \ref{fig:appendix-noise-in-graphs-layers}. Third, we conduct experiments to investigate the influence of the batch size. We visualize the statistics from BatchNorm layers under different settings of batch sizes [8, 16, 32, 64], as in Figure \ref{fig:appendix-noise-in-graphs-batch-sizes}. We can see that the observations are consistent and the batch statistics on graph data are noisy, as in Figure \ref{fig:noise-comparison} in the main body. 

\subsection{Training Curves on GCN}\label{appsec:training_gcn}
As Figure 2 in the main body, we train GCNs with different normalization methods (GraphNorm, InstanceNorm, BatchNorm and LayerNorm) and GCN without normalization in graph classification tasks and plot the training curves in Figure 6. It is obvious that the GraphNorm also enjoys the fastest convergence on all tasks. Remarkably, GCN with InstanceNorm even underperforms GCNs with other normalizations, while our GraphNorm with learnable shift significantly boosts the training upon InstanceNorm and achieves the fastest convergence.

\subsection{Further Results of Ablation Study}\label{appsec:ablation_study}
\paragraph{BatchNorm with learnable shift.} We conduct experiments on BatchNorm to investigate whether simply introducing a learnable shift can already improve the existing normalization methods without concrete motivation of overcoming expressiveness degradation. Specifically, we equip BatchNorm with a similar learnable shift ($\alpha$-BatchNorm for short) as GraphNorm and evaluate its performance. As shown in Figure 12, the $\alpha$-BatchNorm cannot outperform the BatchNorm on the three datasets. Moreover, as shown in Figure 5 in the main body, the learnable shift significantly improve upon GraphNorm on IMDB-BINARY dataset, while it cannot further improve upon BatchNorm, which suggests the introduction of learnable shift in GraphNorm is critical.

\paragraph{BatchNorm with running statistics.} We study the variant of BatchNorm which uses running statistics (MS-BatchNorm for short) to replace the batch-level mean and standard deviation (similar idea is also proposed in \citet{yan2019towards}). At first glance, this method may seem to be able to mitigate the problem of large batch noise. However, the running statistics change a lot during training, and using running statistics disables the model to back-propagate the gradients through mean and standard deviation. Thus, we also train GIN with BatchNorm which stops the back-propagation of the graidients through mean and standard deviation (DT-BatchNorm for short). As shown in Figure 12, both the MS-BatchNorm and DT-BatchNorm underperform the BatchNorm by a large margin, which shows that the problem of the heavy batch noise cannot be mitigated by simply using the running statistics.

\paragraph{The effect of batch size.} 
We further compare the GraphNorm and BatchNorm with different batch sizes (8, 16, 32, 64). As shown in Figure 11, our GraphNorm consistently outperforms the BatchNorm on all the settings.

\begin{figure*}[ht]
    \centering
        \includegraphics[width=\textwidth]{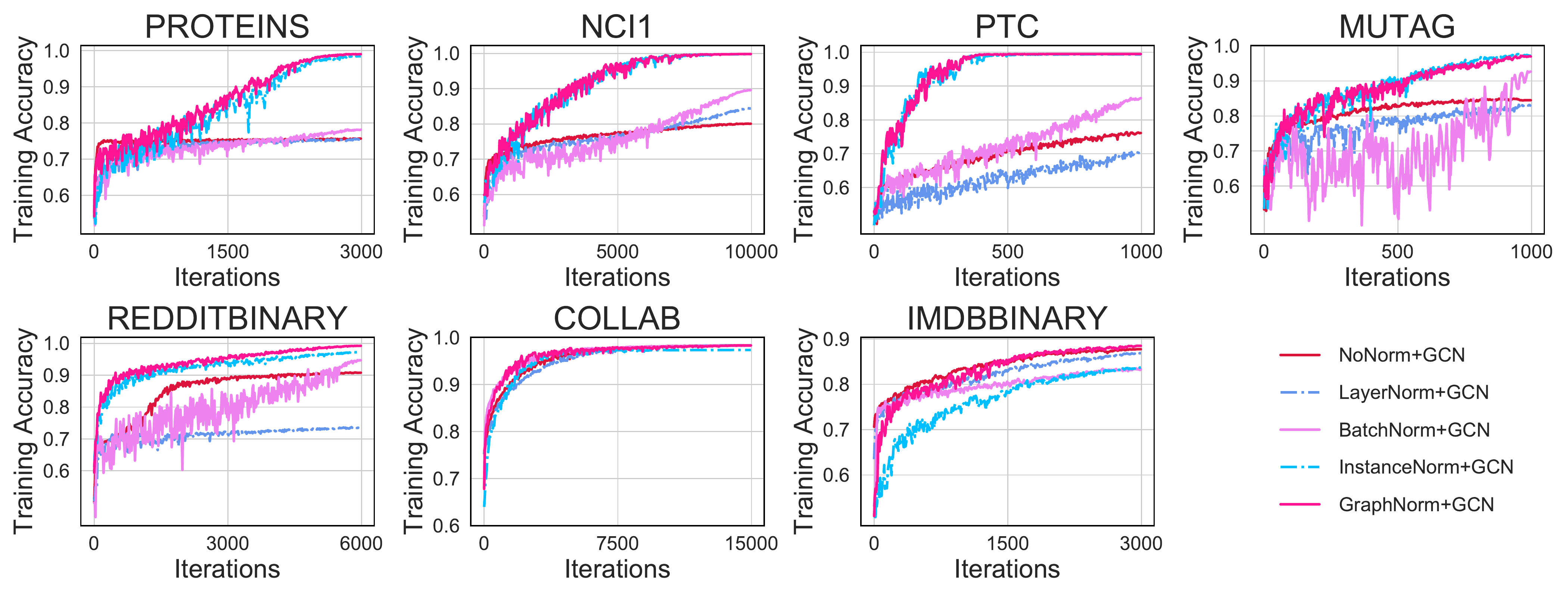}
    \caption{{\bf Training performance} of GCN with different normalization methods and GCN without normalization in graph classification tasks. }
\label{fig:appendix-norm-GCN-ablation}
\end{figure*}

\begin{figure*}[ht]
    \centering        \includegraphics[width=0.8\textwidth]{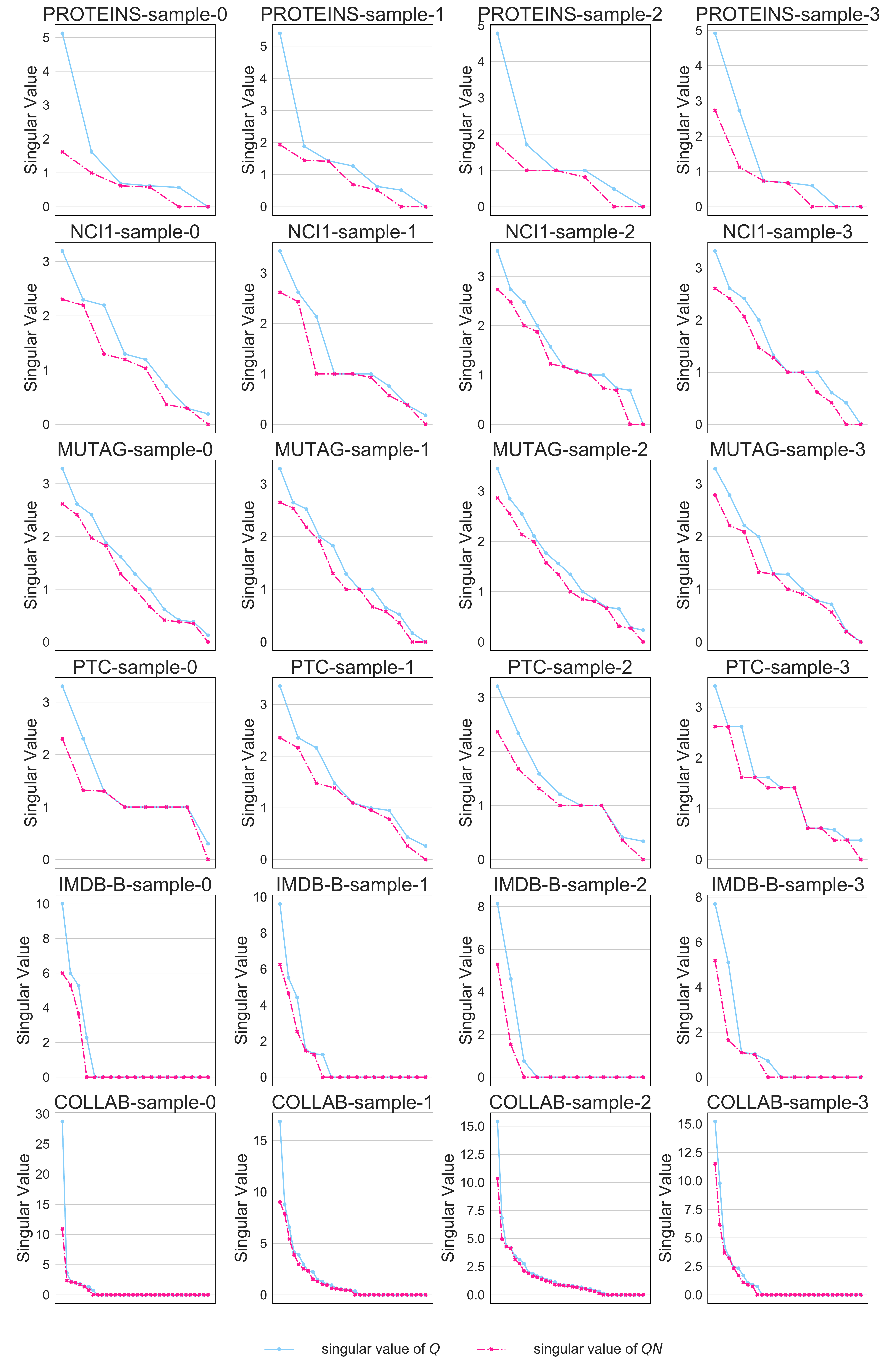}
    \caption{{\bf Singular value distribution of ${Q}$ and ${Q} N$}. Graph samples from PROTEINS, NCI1, MUTAG, PTC, IMDB-BINARY, COLLAB are presented. }
\label{fig:appendix-singular-value}
\end{figure*}

\begin{figure*}[ht]
    \centering
        \includegraphics[width=\textwidth]{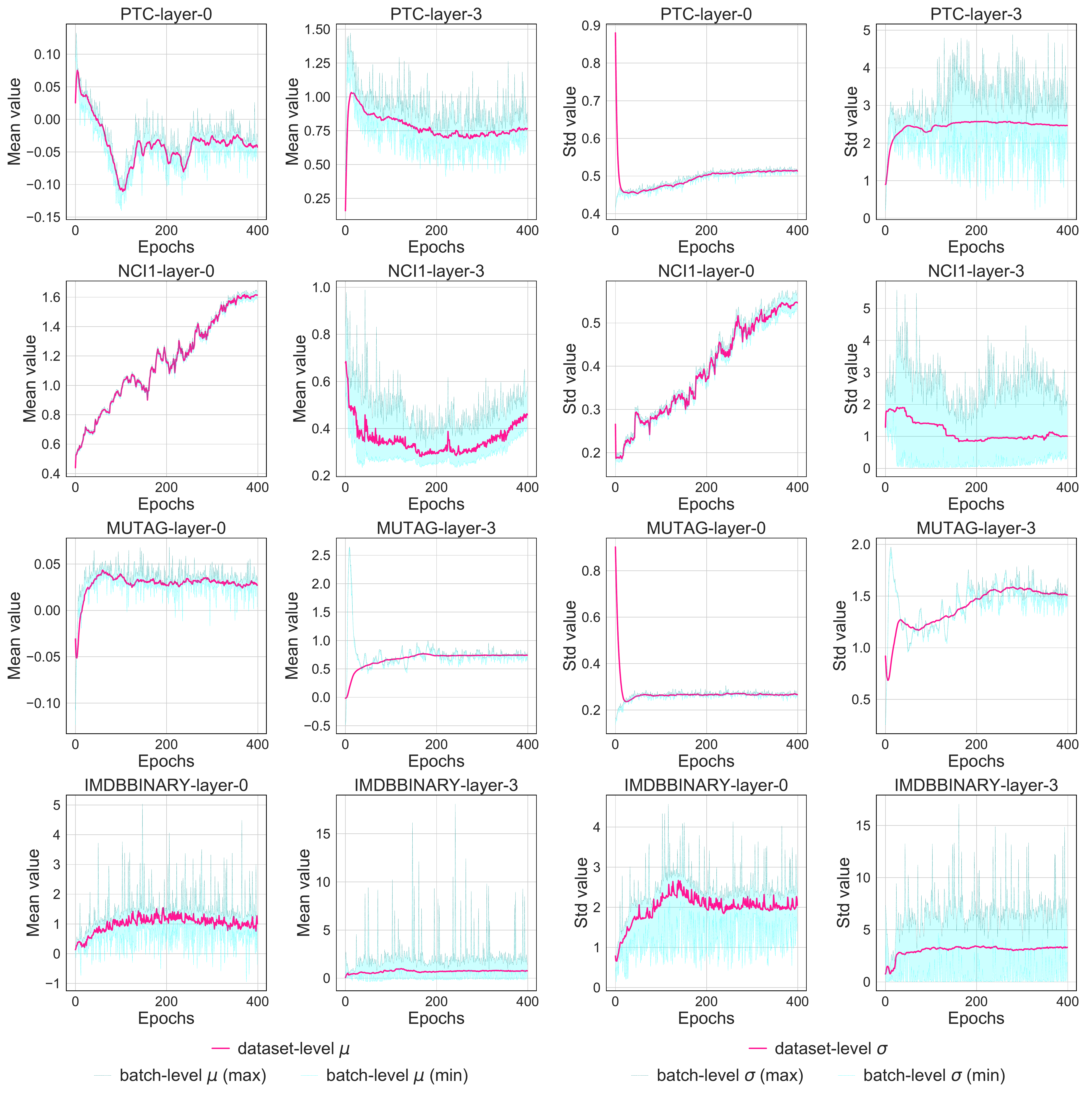}
    \caption{\textbf{Batch-level statistics are noisy for GNNs} (Examples from PTC, NCI1, MUTAG, IMDB-BINARY datasets). We plot the batch-level mean/standard deviation and dataset-level mean/standard deviation of the first (layer 0) and the last (layer 3) BatchNorm layers in different checkpoints. GIN with 5 layers is employed.}
\label{fig:appendix-noise-in-graphs-other-dts}
\end{figure*}

\begin{figure*}[ht]
    \centering
        \includegraphics[width=\textwidth]{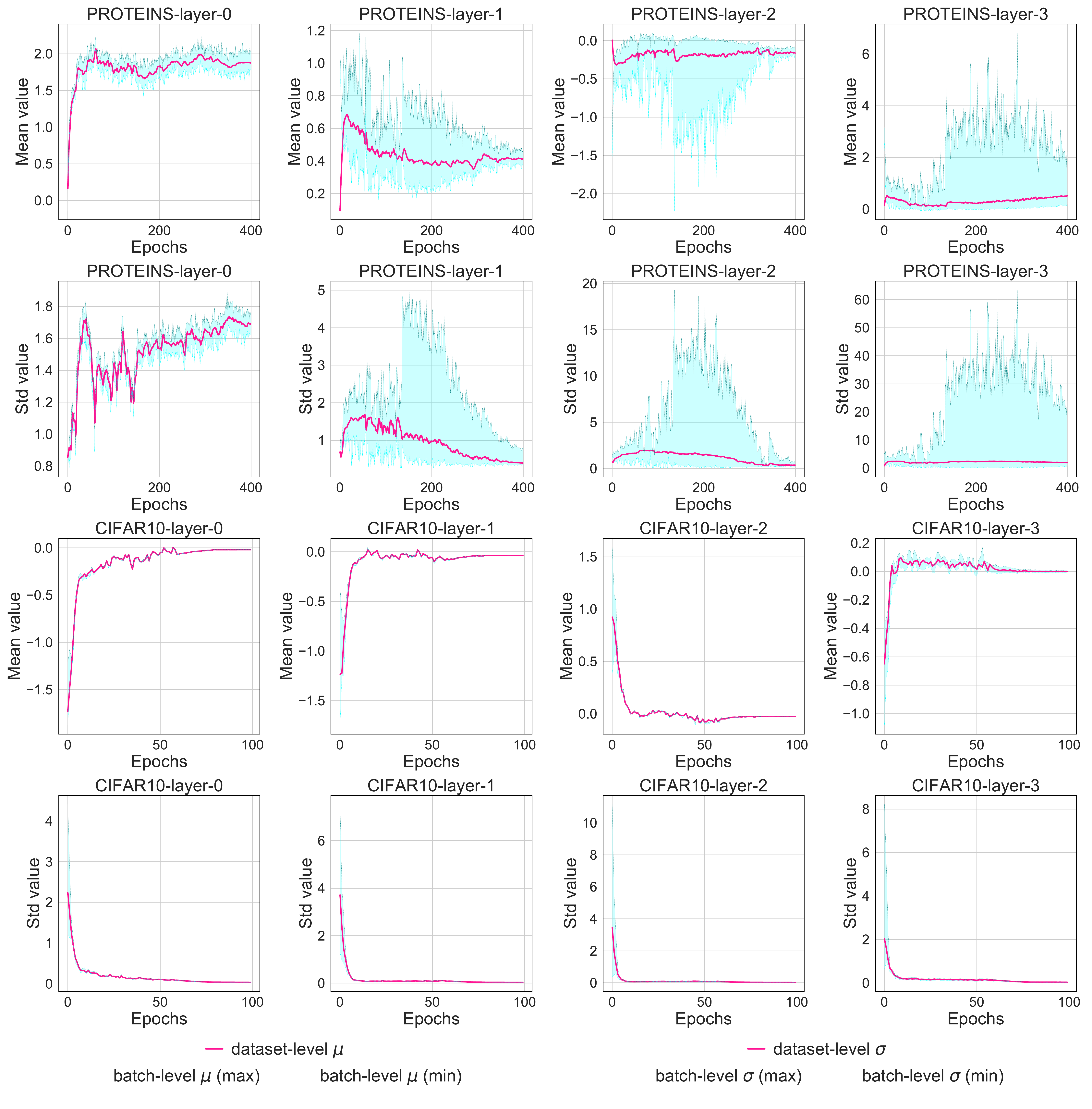}
    \caption{\textbf{Batch-level statistics are noisy for GNNs of different depth.} We plot the batch-level mean/standard deviation and dataset-level mean/standard deviation of different BatchNorm layers (from layer 0 to layer 3) in different checkpoints. We use a five-layer GIN on PROTEINS and ResNet18 on CIFAR10 for comparison.}
\label{fig:appendix-noise-in-graphs-layers}
\end{figure*}

\begin{figure*}[ht]
    \centering
        \includegraphics[width=\textwidth]{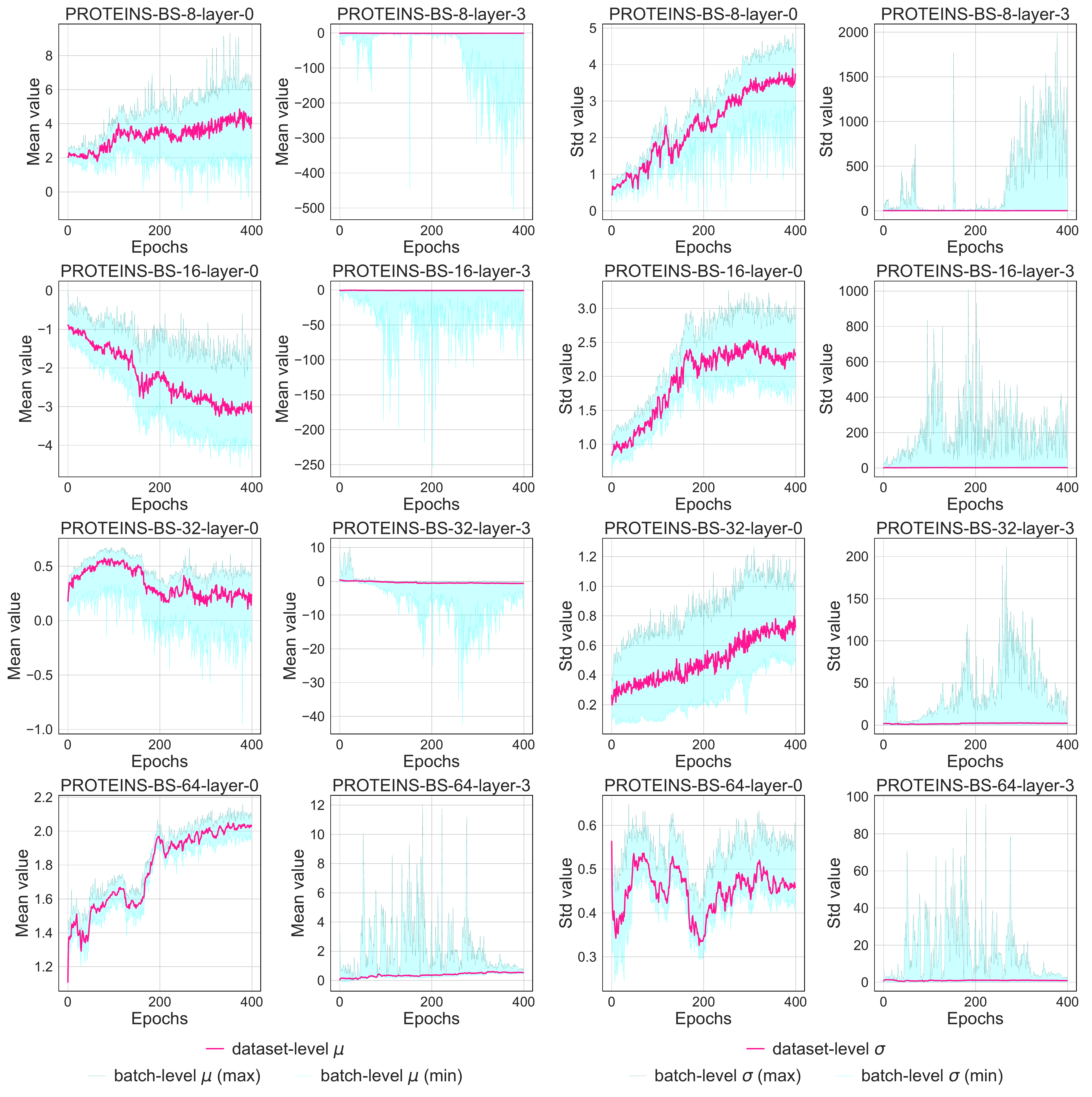}
    \caption{\textbf{Batch-level statistics are noisy for GNNs of different batch sizes.} We plot the batch-level mean/standard deviation and dataset-level mean/standard deviation of different BatchNorm layers (layer 0 and layer 3) in different checkpoints. Specifically, different batch sizes (8, 16, 32, 64) are chosed for comparison. GIN with 5 layers is employed.}
\label{fig:appendix-noise-in-graphs-batch-sizes}
\end{figure*}


\begin{figure*}[ht]
    \centering
        \includegraphics[width=\textwidth]{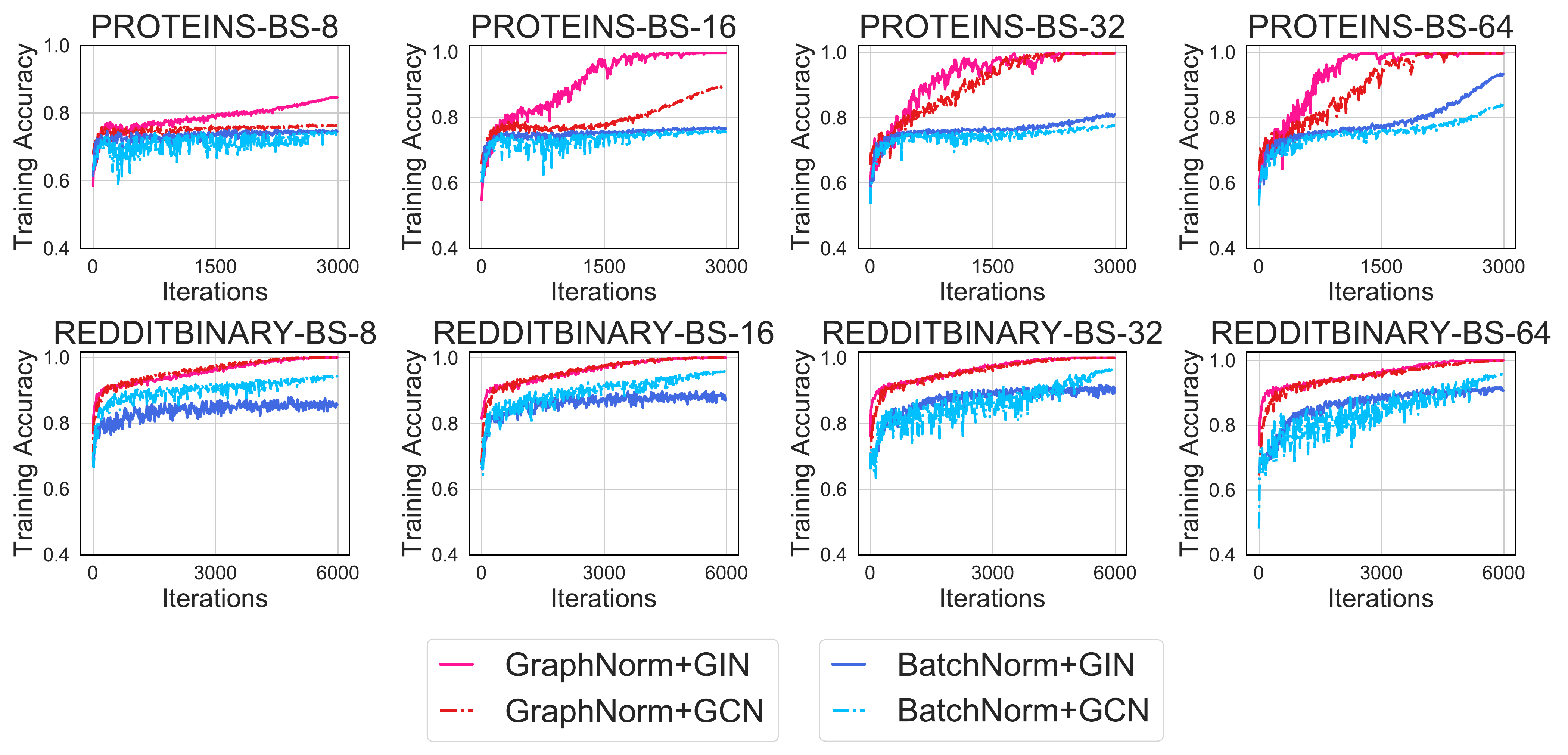}
    \caption{{\bf Training performance} of GIN/GCN with GraphNorm and BatchNorm with batch sizes of (8, 16, 32, 64) on PROTEINS and REDDITBINARY datasets.}
\label{fig:appendix-norm-bs-ablation-gin-gcn}
\end{figure*}

\begin{figure*}[ht]
    \centering
        \includegraphics[width=\textwidth]{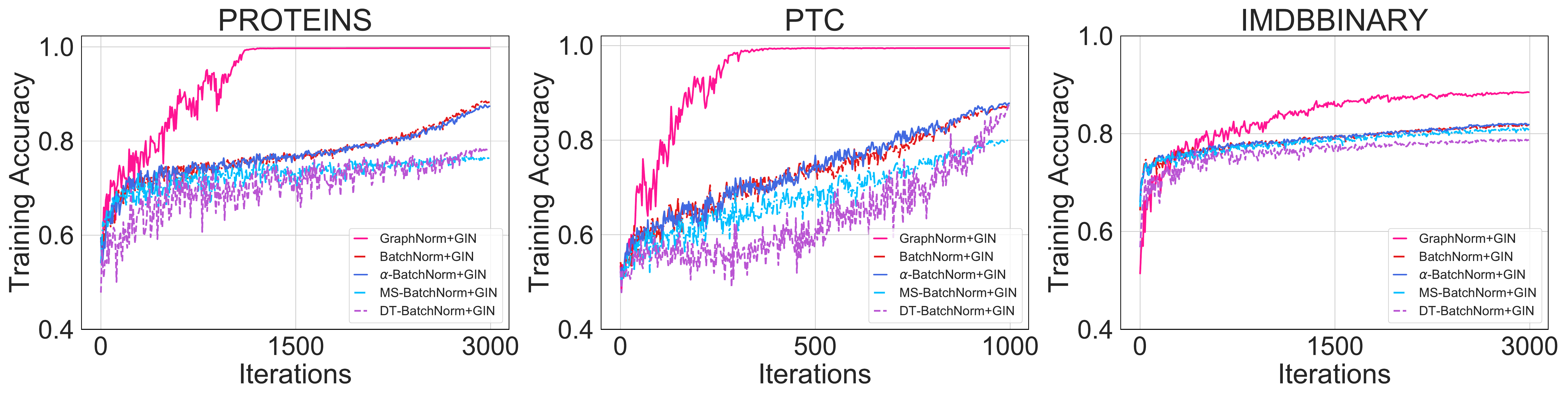}
    \caption{{\bf Training performance} of GIN with GraphNorm and variant BatchNorms ($\alpha$-BatchNorm, MS-BatchNorm and DT-BatchNorm) on PROTEINS, PTC and IMDB-BINARY datasets.}
\label{fig:appendix-norm-variant-ablation-gin-gcn}
\end{figure*}

\section{Other Related Works}
\label{appsec:related}
Due to space limitations, we add some more related works on normalization and graph neural networks here. \citet{zou2019layer} used normalization to stabilize the training process of GNNs. \citet{Zhao2020PairNorm:} introduced PAIRNORM to prevent node embeddings from over-smoothing on the node classification task. Our GraphNorm focuses on accelerating the training and has faster convergence speed on graph classification tasks. \citet{yang2020revisiting} interpreted the effect of mean subtraction on GCN as approximating the Fiedler vector. We analyze more general aggregation schemes, e.g., those in GIN, and understand the effect of the shift through the distribution of spectrum. Some concurrent and independent works \citep{li2020deepergcn,chen2020learning,zhou2020towards, zhou2020effective} also seek to incorporate normalization schemes in GNNs, which show the urgency of developing normalization schemes for GNNs. In this paper, we provide several insights on how to design a proper normalization for GNNs. Before the surge of deep learning, there are also many classic architectures of GNNs such as \citet{scarselli2008graph,bruna2013spectral,defferrard2016convolutional} that are not mentioned in the main body of the paper. We refer the readers to \citet{zhou2018graph,wu2020comprehensive,zhang2020deep} for surveys of graph representation learning.

\end{document}


\twocolumn[
\icmltitle{GraphNorm: A Principled Approach to Accelerating \\Graph Neural Network Training}

\begin{icmlauthorlist}
\icmlauthor{Aeiau Zzzz}{equal,to}
\end{icmlauthorlist}

\icmlaffiliation{to}{Department of Computation, University of Torontoland, Torontoland, Canada}

\icmlcorrespondingauthor{Cieua Vvvvv}{c.vvvvv@googol.com}
\icmlkeywords{Graph Neural Network, Normalization}

\vskip 0.3in
]
\printAffiliationsAndNotice{}
\begin{abstract}
Normalization is known to help the optimization of deep neural networks. Curiously, different architectures require specialized normalization methods. In this paper, we study what normalization is effective for Graph Neural Networks (GNNs). First, we adapt and evaluate the existing methods from other domains to GNNs. Faster convergence is achieved with InstanceNorm compared to BatchNorm and LayerNorm. We provide an explanation by showing that InstanceNorm serves as a preconditioner for GNNs, but such preconditioning effect is weaker with BatchNorm due to the heavy batch noise in graph datasets. Second, we show that the shift operation in InstanceNorm results in an expressiveness degradation of GNNs for highly regular graphs. We address this issue by proposing GraphNorm with a learnable shift. Empirically, GNNs with GraphNorm converge faster compared to GNNs using other normalization. GraphNorm also improves the generalization of GNNs, achieving better performance on graph classification benchmarks.
\end{abstract}

\section{Introduction}
\label{sec:intro}

\begin{figure*}[ht]
    \centering
        \includegraphics[width=0.8\textwidth]{Figure/color_fig1.pdf}
    \caption{\textbf{Overview.} 
We evaluate and understand BatchNorm, LayerNorm, and InstanceNorm, when adapted to GNNs. InstanceNorm trains faster than LayerNorm and BatchNorm on most datasets (Section~\ref{sec:adapt_to_gnns}), as it serves as a preconditioner of the aggregation of GNNs (1a, Section~\ref{sec:precondition}). The preconditioning effect is weaker for BatchNorm  due to heavy batch noise in graphs (1b, Section~\ref{sec:batchnorm_noise}). We propose GraphNorm with a learnable shift to address the  limitation of InstanceNorm. GraphNorm outperforms other normalization methods for both training speed (Figure~\ref{fig:gin-dataset-training-curve}) and generalization (Table~\ref{tab:test-results},~\ref{tab:test-molhiv}).
}
\label{fig:illustration}
\end{figure*}
Recently, there has been a surge of interest in Graph Neural Networks (GNNs) for learning with graphs~\citep{gori2005new, scarselli2008graph,hamilton2017inductive,kipf2016semi,velivckovic2017graph, xu2018representation}. GNNs learn node and graph representations by recursively aggregating and updating the node representations from neighbor representations~\citep{gilmer2017neural}. Empirically, GNNs have succeeded in a variety of tasks such as computational chemistry~\citep{stokes2020deep}, recommendation systems~\citep{ying2018graph}, and visual question answering~\citep{santoro2017simple}. Theoretically, existing works have studied GNNs through the lens of expressive power~\citep{keriven2019universal, xu2018how, sato2019approximation, Loukas2020}, generalization~\citep{scarselli2018vapnik, du2019graph, Xu2020What}, and extrapolation~\citep{xu2020neural}. However, the optimization of GNNs is less well understood, and in practice, the training of GNNs is often unstable and the convergence is slow~\citep{xu2018how}. 

In this paper, we study how to improve the training of GNNs via normalization. Normalization methods shift and scale the hidden representations and are shown to help the optimization for deep neural networks~\citep{ioffe2015batch,ulyanov2016instance, ba2016layer,salimans2016weight,xiong2020layer,salimans2016improved,miyato2018spectral,wu2018group, santurkar2018does}. Curiously, no single normalization helps in every domain, and different architectures require specialized methods. For example, Batch normalization (BatchNorm) is a standard component in computer vision~\citep{ioffe2015batch}; Layer normalization (LayerNorm) is popular in natural language processing~\citep{ba2016layer,xiong2020layer}; Instance normalization (InstanceNorm) has been found effective for style transfer tasks~\citep{ulyanov2016instance} . This motivates the question: \textit{What normalization methods are effective for GNNs?}

We take an initial step towards answering the question above. First, we adapt the existing  methods from other domains, including BatchNorm, LayerNorm, and InstanceNorm, to GNNs and evaluate their performance with extensive experiments on graph classification tasks. We observe that our adaptation of InstanceNorm to GNNs, which for each \emph{individual graph} normalizes its node hidden representations, obtains much faster convergence compared to BatchNorm and LayerNorm. We provide an explanation for the success of InstanceNorm by showing that the shift operation in InstanceNorm serves as a preconditioner of the graph aggregation operation. Empirically, such preconditioning makes the optimization curvature smoother and makes the training more efficient. We also explain why the widely used BatchNorm does not bring the same level of acceleration. The variance of the batch-level statistics on graph datasets is much larger if we apply the normalization across graphs in a batch instead of across individual graphs. The noisy statistics during training may lead to unstable optimization.

Second, we show that the adaptation of InstanceNorm to GNNs, while being helpful in general, has limitations. The shift operation in InstanceNorm, which subtracts the mean statistics from node hidden representations, may lead to an expressiveness degradation for GNNs. Specifically, for highly regular graphs, the mean statistics contain graph structural information, and thus removing them could hurt the performance. Based on our analysis, we propose \textit{GraphNorm} to address the issue of InstanceNorm with a learnable shift (Step 2 in Figure~\ref{fig:illustration}). The learnable shift could learn to control the ideal amount of information to preserve for mean statistics. Together, GraphNorm normalizes the hidden representations across nodes in each individual graph with a learnable shift to avoid the expressiveness degradation while inheriting the acceleration effect of the shift operation.

We validate the effectiveness of GraphNorm on eight popular graph classification benchmarks. Empirical results confirm that GraphNorm consistently improves the speed of converge and stability of training for GNNs compared to those with BatchNorm, InstanceNorm, LayerNorm, and those without normalization. Furthermore, GraphNorm helps GNNs achieve better generalization performance on most benchmarks.

\subsection{Related Work}
\label{sec:related}
Closely related to our work, InstanceNorm~\citep{ulyanov2016instance} is originally proposed for real-time image generation. Variants of InstanceNorm are also studied in permutation equivalent data processing~\citep{yi2018learning,sun2020acne}. We instead adapt InstanceNorm to GNNs and find it helpful for the training of GNNs. Our proposed GraphNorm builds on and improves InstanceNorm by addressing its expressiveness degradation with a learnable shift. 

Few works have studied normalization in the GNN literature. \citet{xu2018how} adapts BatchNorm to GIN as a plug-in component. A preliminary version of \citet{dwivedi2020benchmarking} normalizes the node features with respect to the graph size. Our GraphNorm is size-agnostic and significantly differs from the graph size normalization. More discussions on other normalization methods are in Appendix~\ref{appsec:related}.

The reason behind the effectiveness of normalization has been intensively studied. While scale and shift are the main components of normalization, most existing works focus on the scale operation and the ``scale-invariant'' property: With a normalization layer after a linear (or convolutional) layer, the output values remain the same as the weights are scaled.  Hence, normalization decouples the optimization of direction and length of the parameters \citep{kohler2019exponential}, implicitly tunes the learning rate \citep{ioffe2015batch,hoffer2018norm,arora2018theoretical,li2019exponential}, and smooths the optimization landscape \citep{santurkar2018does}. Our work offers a different view by instead showing specific \textit{shift} operation has the preconditioning effect and can accelerate the training of GNNs.

\section{Preliminaries}
\label{sec:background}
We begin by introducing our notations and the basics of GNNs. Let $G = \left(V, E \right)$ denote a graph where $V = \{v_1, v_2, \cdots, v_n\}$, $n$ is the number of nodes. Let the feature vector of node $v_i$ be $X_i$. We denote the adjacency matrix of a graph as $A\in\Rbb^{n\times n}$ with $A_{ij}=1$ if $(v_i,v_j)\in E$ and $0$ otherwise. The degree matrix associated with $A$ is defined as $D=\diag\rbr{d_1, d_2, \dots, d_n}$ where $d_i=\sum_{j=1}^{n} A_{ij}$.

{\bf Graph Neural Networks.} GNNs use the graph structure and node features to learn the representations of nodes and graphs. Modern GNNs follow a neighborhood aggregation strategy \citep{sukhbaatar2016learning,kipf2016semi,hamilton2017inductive,velivckovic2017graph,monti2017geometric}, where the representation of a node is iteratively updated by aggregating the representation of its neighbors. To be concrete, we denote $h^{(k)}_i$ as the representation of $v_i$ at the $k$-th layer and define $h_i^{(0)} = X_i$. We use AGGREGATE to denote the aggregation function in the $k$-th layer:
\begin{align}
    \label{eq:combine}
    h_i^{(k)}   &= \text{AGGREGATE}^{(k)} \big( h_i^{(k-1)}, \big\lbrace h_j^{(k-1)}  : v_j \in \mathcal{N}(v_i) \big\rbrace \big), 
\end{align}
where $\mathcal{N}(v_i)$ is the set of nodes adjacent to $v_i$. Different GNNs can be obtained by choosing different AGGREGATE functions.  Graph Convolutional Networks (GCN)~\citep{kipf2016semi} can be defined in matrix form as: 
\begin{align}
\label{equ:gcn-agg-matrix}
    H^{(k)} = \relu\rbr{W^{(k)} H^{(k-1)} Q_{\text{GCN}}},
\end{align}
where ReLU stands for rectified linear unit, $H^{(k)} = \sbr{h_1^{(k)}, h_2^{(k)}, \cdots, h_n^{(k)}}\in\Rbb^{d^{(k)}\times n}$ is the feature matrix at the $k$-th layer where $d^{(k)}$ denotes the feature dimension, and $W^{(k)}$ is the parameter matrix in layer $k$. $Q_{\text{GCN}}=\Dhat^{-\frac{1}{2}}\Ahat\Dhat^{-\frac{1}{2}}$, where $\Ahat = A + I_n$ and $\Dhat$ is the degree matrix of $\Ahat$. $I_n$ is the identity matrix. 


Graph Isomorphism Network (GIN) \citep{xu2018how} is defined in matrix form as 
\begin{align}
    \label{eq:gin-agg-matrix}
    H^{(k)} = {\rm MLP}^{(k)}\rbr{W^{(k)} H^{(k-1)}Q_{\mathrm{GIN}}},
\end{align}
where MLP stands for multilayer perceptron, $\xi^{(k)}$ is a learnable parameter and $Q_{\mathrm{GIN}}=A + I_n + \xi^{(k)}I_n$.

For a $K$-layer GNN, the outputs of the final layer, i.e., $h_i^{(K)}$,$i=1,\cdots,n$, will be used for prediction. For graph classification tasks, we can apply a READOUT function, e.g., summation, to aggregate node features $h_i^{(K)}$ to obtain the entire graph's representation $h_{G} = {\rm READOUT} \big(\big\lbrace h_i^{(K)} \ \big\vert \ v_i \in V \big\rbrace \big)$. A classifier can be applied upon $h_{G}$ to predict the labels.

{\bf Normalization.} Generally, given a set of values $ \cbr{x_1, x_2, \cdots, x_m}$, a normalization operation first shifts each $x_i$ by the mean $\mu$, and then scales them down by standard deviation $\sigma$: $x_i\rightarrow\gamma \frac{x_i - \mu}{\sigma}+\beta$, where $\gamma$ and $\beta$ are learnable parameters, $\mu = \frac{1}{m}\sum_{i=1}^m x_i$ and $\sigma^2 = \frac{1}{m}\sum_{i=1}^m\rbr{x_i - \mu}^2$. The major difference among different existing normalization methods is which set of feature values the normalization is applied to. 
For example, in computer vision, BatchNorm normalizes the feature values in the same channel across different samples in a batch. In NLP,  LayerNorm normalizes the feature values at each position in a sequence separately.



\section{Evaluating and Understanding Normalization for GNNs}
\label{sec:normalization-gnn}
In this section, we first adapt and evaluate existing normalization methods to GNNs. Then we give an explanation of the effectiveness of the variant of InstanceNorm, and show why the widely used BatchNorm fails to have such effectiveness.  The  understanding inspires us to develop better normalization methods, e.g., GraphNorm.

\subsection{Adapting and Evaluating Normalization for GNNs}
\label{sec:adapt_to_gnns}
\begin{figure*}[t]
    \centering
        \includegraphics[width=\textwidth]{ICML/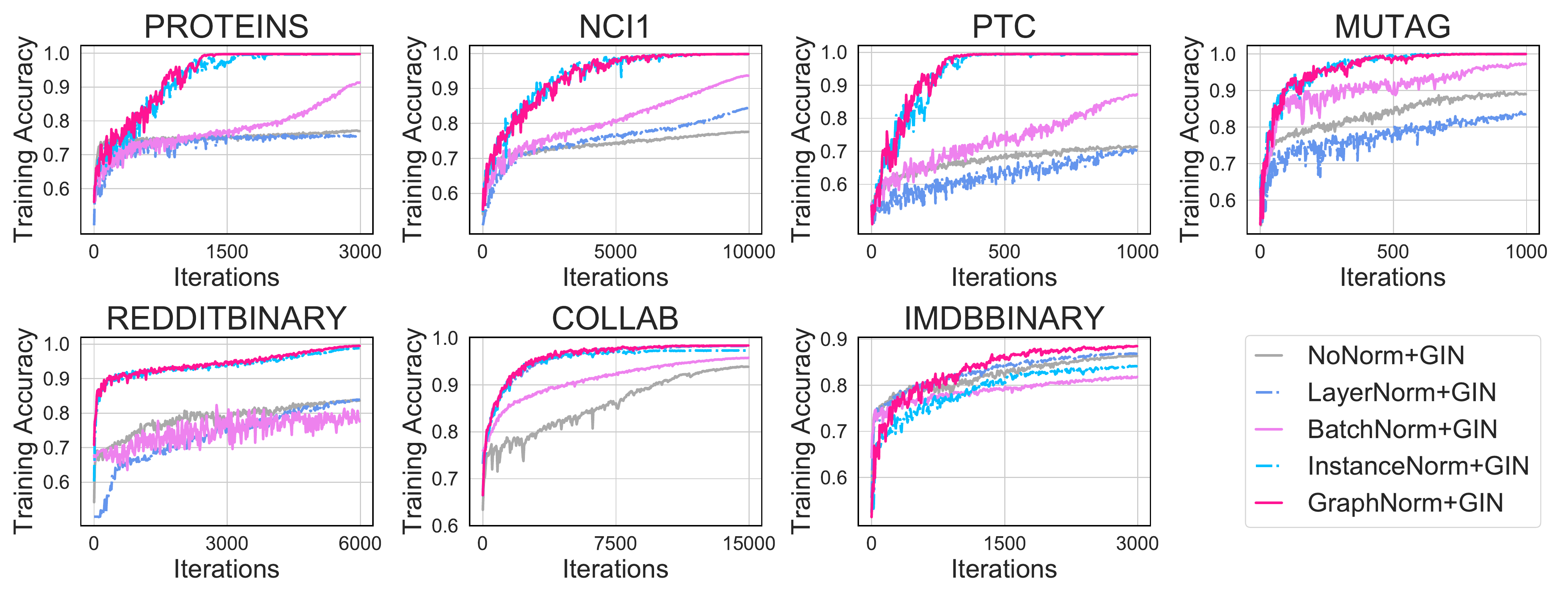}
    \caption{\textbf{Training performance} of GIN with different normalization methods and GIN without normalization in graph classification tasks. The convergence speed of our adaptation of InstanceNorm dominates BatchNorm and LayerNorm in most tasks. GraphNorm further improves the training over InstanceNorm especially on tasks with highly regular graphs, e.g., IMDB-BINARY (See Figure~\ref{fig:ablation-study-learn-alpha} for detailed illustration). Overall, GraphNorm converges faster than all other methods. 
    }
\label{fig:gin-dataset-training-curve}
\end{figure*}
To investigate what normalization methods are effective for GNNs, we first adapt three typical normalization methods, i.e., BatchNorm, LayerNorm, and InstanceNorm, developed in other domain to GNNs.  We apply the normalization after the linear transformation as in previous works~\citep{ioffe2015batch,xiong2020layer,xu2018how}. The general GNN structure equipped with a normalization layer can be represented as:
\begin{align}
    \label{eq:structure_with_norm}
    H^{(k)} = F^{(k)}\rbr{\mathrm{Norm}\rbr{W^{(k)}H^{(k-1)}Q}},
\end{align}
where $F^{(k)}$ is a function that applies to each node separately, $Q$ is an $n\times n$ matrix representing the neighbor aggregation, and $W^{(k)}$ is the weight/parameter matrix in layer $k$. We can instantiate Eq.~\eqref{eq:structure_with_norm} as GCN and GIN, by setting proper  $F^{(k)}$ and matrix $Q$. For example, if we  set $F^{(k)}$ to be $\relu$ and set $Q$ to be $
Q_{\text{GCN}}$ (Eq.~\eqref{equ:gcn-agg-matrix}), then Eq.~\eqref{eq:structure_with_norm} becomes GCN  with normalization; Similarly, by setting $F^{(k)}$ to be $\mathrm{MLP}^{(k)}$ and $Q$ to be $Q_{\mathrm{GIN}}$ (Eq.~\eqref{eq:gin-agg-matrix}), we recover GIN with normalization.

We then describe the concrete operations of the adaptations of the normalization methods. Consider a batch of graphs $\cbr{G_1, \cdots, G_b}$ where $b$ is the batch size. Let $n_g$ be the number of nodes in graph $G_g$. We generally denote $\hhat_{i,j,g}$ as the inputs to the normalization module, e.g., the $j$-th feature value of node $v_i$ of graph $G_g$, $i=1, \cdots, n_g, j=1, \cdots, d, g=1, \cdots, b$. The adaptations take the general form:
\begin{align}\label{eq:general_form_norm}
    \mathrm{Norm}\rbr{\hhat_{i,j,g}} = \gamma \cdot \frac{\hhat_{i,j,g} - \mu}{\sigma} + \beta,
\end{align}
where the scopes of mean $\mu$, standard deviation $\sigma$, and affine parameters $\gamma, \beta$ differ for different normalization methods. For BatchNorm, normalization and the computation of $\mu$ and $\sigma$ are applied to all values in the same feature dimension across the nodes of \emph{all graphs in the batch} as in \citet{xu2018how}, i.e., over dimensions $g,i$ of $\hhat_{i,j,g}$. 
 To adapt LayerNorm to GNNs, we view each node as a basic component, resembling words in a sentence, and apply normalization to all feature values across different dimensions of each node, i.e., over dimension $j$ of $\hhat_{i,j,g}$.
For InstanceNorm, we regard each graph as an instance. The normalization is then applied to the feature values across all nodes for each \emph{individual graph}, i.e., over dimension $i$ of $\hhat_{i,j,g}$.

In Figure~\ref{fig:gin-dataset-training-curve} we show training curves of different normalization methods in graph classification tasks. We find that LayerNorm hardly improves the training process in most tasks, while  our adaptation of InstanceNorm can largely boost the training speed compared to other normalization methods. The test performances have similar trends. We summarize the final test accuracies in Table~\ref{tab:test-results}. In the following subsections, we provide an explanation for the success of InstanceNorm and its benefits compared to BatchNorm, which is currently adapted in many GNNs.

\subsection{Shift in InstanceNorm as a Preconditioner}
\label{sec:precondition}
As mentioned in Section~\ref{sec:related}, the scale-invariant property of the normalization has been investigated and considered as one of the ingredients that make the optimization efficient. In our analysis of normalizations for GNNs, we instead take a closer look at the \emph{shift} operation in the normalization. Compared to the image and sequential data, the graph is explicitly structured, and the neural networks exploit the structural information directly in the aggregation of the neighbors, see Eq.~\eqref{eq:combine}. Such uniqueness of GNNs makes it possible to study how the shift operation interplays with the graph data in detail.

We show that the shift operation in our adaptation of InstanceNorm serves as a preconditioner of the aggregation in GNNs and hypothesize this preconditioning effect can boost the training of GNNs. Though the current theory of deep learning has not been able to prove and compare the convergence rate in the real settings, we calculate the convergence rate of GNNs on a simple but fully characterizable setting to give insights on the benefit of the shift operation.

fWe first formulate our adaptation of InstanceNorm in the matrix form. Mathematically, for a graph of $n$ nodes, denote $N = I_n - \frac{1}{n}\one\one^\top$. $N$ is the matrix form of the shift operation, i.e., for any vector $\zbf=\sbr{z_1,z_2,\cdots,z_n}^\top\in\Rbb^n$, $\zbf^\top N = \zbf^\top - \rbr{\frac{1}{n}\sum_{i=1}^n z_i}\one^\top$. Then the normalization together with the aggregation can be represented as\footnote{Standard normalization has an additional affine operation after shifting and scaling. Here we omit it in Eq.~\ref{eq:structure_with_norm_exp} for better 
demonstration. Adding this operation will not affect the theoretical analysis.}
\begin{align} \label{eq:structure_with_norm_exp}
    \mathrm{Norm}\rbr{W^{(k)}H^{(k-1)}Q} = S\rbr{W^{(k)}H^{(k-1)}Q}N,
\end{align}
where $S = \diag\rbr{\frac{1}{\sigma_1}, \frac{1}{\sigma_2}, \cdots, \frac{1}{\sigma_{d^{(k)}}}}$ is the scaling, and $Q$ is the GNN aggregation matrix. Each $\sigma_i$ is the standard deviation of the values of the $i$-th features among the nodes in the graph we consider. We can see that, in the matrix form, shifting feature values on a single graph is equivalent to multiplying $N$ as in Eq.~\eqref{eq:structure_with_norm_exp}. Therefore, we further check how this operation affects optimization. In particular, we examine the singular value distribution of $Q N$. The following theorem shows that $Q N$ has a smoother singular value distribution than $Q$, i.e., $N$ serves as a preconditioner of $Q$. 
\begin{theorem}[Shift Serves as a Preconditioner of $Q$]
    \label{thm:precondition}
    Let $Q, N$ be defined as in Eq.~\eqref{eq:structure_with_norm_exp},  $0\le\lambda_1\le\cdots\le\lambda_n$ be the singular values of $Q$. We have $\mu_n=0$ is one of the singular values of $QN$, and let other singular values of $QN$ be $0\le\mu_1\le\mu_2\le\cdots\le\mu_{n-1}$. Then we have
    \begin{align} \label{eq:mean_sub_eigen}
        \lambda_1\le\mu_1\le\lambda_2\le\cdots\le\lambda_{n-1}\le\mu_{n-1}\le\lambda_n,
    \end{align}
    where $\lambda_i = \mu_i$ or $\lambda_i = \mu_{i-1}$ only if there exists one of the right singular vectors $\alpha_i$ of $Q$ associated with $\lambda_i$ satisfying $\one^\top\alpha_i = 0$.
\end{theorem}
The proof can be found in Appendix~\ref{appsec:proof_precondition}.

We hypothesize that precoditioning $Q$ can help the optimization. In the case of optimizing the weight matrix $W^{(k)}$, we can see from Eq.~\eqref{eq:structure_with_norm_exp} that after applying normalization, the term $Q$ in the gradient of $W^{(k)}$ will become $Q N$ which makes the optimization curvature of $W^{(k)}$ smoother, see Appendix~\ref{appsec:gradient_formular} for more discussions. Similar preconditioning effects are believed to improve the training of deep learning models~\citep{duchi2011adaptive, kingma2014adam}, and classic wisdom in optimization has also shown that preconditioning can accelerate the convergence of iterative methods \citep{axelsson1985survey, demmel1997applied}. Unfortunately, current theoretical toolbox only has a limited power on the optimization of deep learning models. Global convergence rates have only been proved for either simple models, e.g., linear models \citep{arora2018convergence}, or extremely overparameterized models \citep{du2018gradient,allen2019convergence,du2019gradient,cai2019gram,du2019graph,zou2020gradient}. To support our hypothesis that preconditioning may suggest better training, we investigate a simple but characterizable setting of training a linear GNN using gradient descent in Appendix~\ref{appsec:concrete_example}. In this setting, we prove that:
\begin{proposition}[Concrete Example Showing Shift can Accelerate Training (Informal)]
With high probability over randomness of data generation, the parameter $\wbf_t^{\mathrm{Shift}}$ of the model with shift at step $t$ converges to the optimal parameter $\wbf_*^{\mathrm{Shift}}$ linearly:
\begin{align*}
    \norm{\wbf_t^{\mathrm{Shift}} - \wbf_*^{\mathrm{Shift}}}_2 = O\rbr{\rho_1^t},
\end{align*}
where $\rho_1$ is the convergence rate.

Similarly, the parameter $\wbf_t^{\mathrm{Vanilla}}$ of the vanilla model converges linearly, but with a slower rate:
\begin{align*}
    \norm{\wbf_t^{\mathrm{Vanilla}} - \wbf_*^{\mathrm{Vanilla}}}_2 = O\rbr{\rho_2^t}\ \mathrm{and}\ \rho_1 < \rho_2,
\end{align*}
which indicates that the model with shift converges faster than the vanilla model.
\end{proposition}
The proof can be found in Appendix~\ref{appsec:concrete_example}.
\begin{figure}[t]
    \centering
        \includegraphics[width=0.5\textwidth]{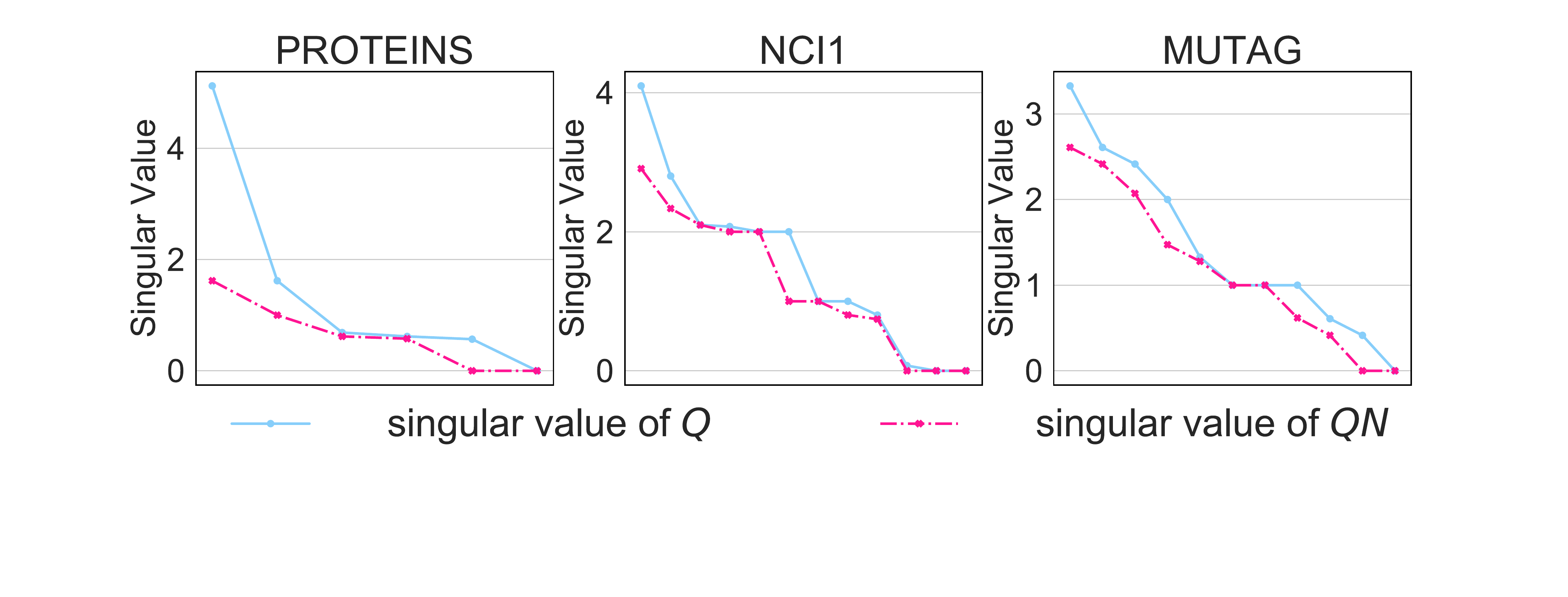}
    \caption{\textbf{Singular value distribution} of $Q$ and $Q N$ for sampled graphs in different datasets using GIN. More visualizations  can be found in Appendix \ref{appsec:vis-singular}}
\label{fig:singular_distribution}
\end{figure}
To check how much the matrix $N$ improves the distribution of the spectrum of matrix $Q$ in real practice, we sample graphs from different datasets for illustration, as showed in Figure \ref{fig:singular_distribution} (more visualizations for different types of graph can be found in Appendix \ref{appsec:vis-singular}). We can see that the singular value distribution of $Q N$ is much smoother, and the condition number is improved. Note that for a multi-layer GNN, the normalization will be applied in each layer. Therefore, the overall improvement of such preconditioning can be more significant.

\subsection{Heavy Batch Noise in Graphs Makes BatchNorm Less Effective}\label{sec:batchnorm_noise}
\begin{figure*}[t]
    \centering
        \includegraphics[width=\textwidth]{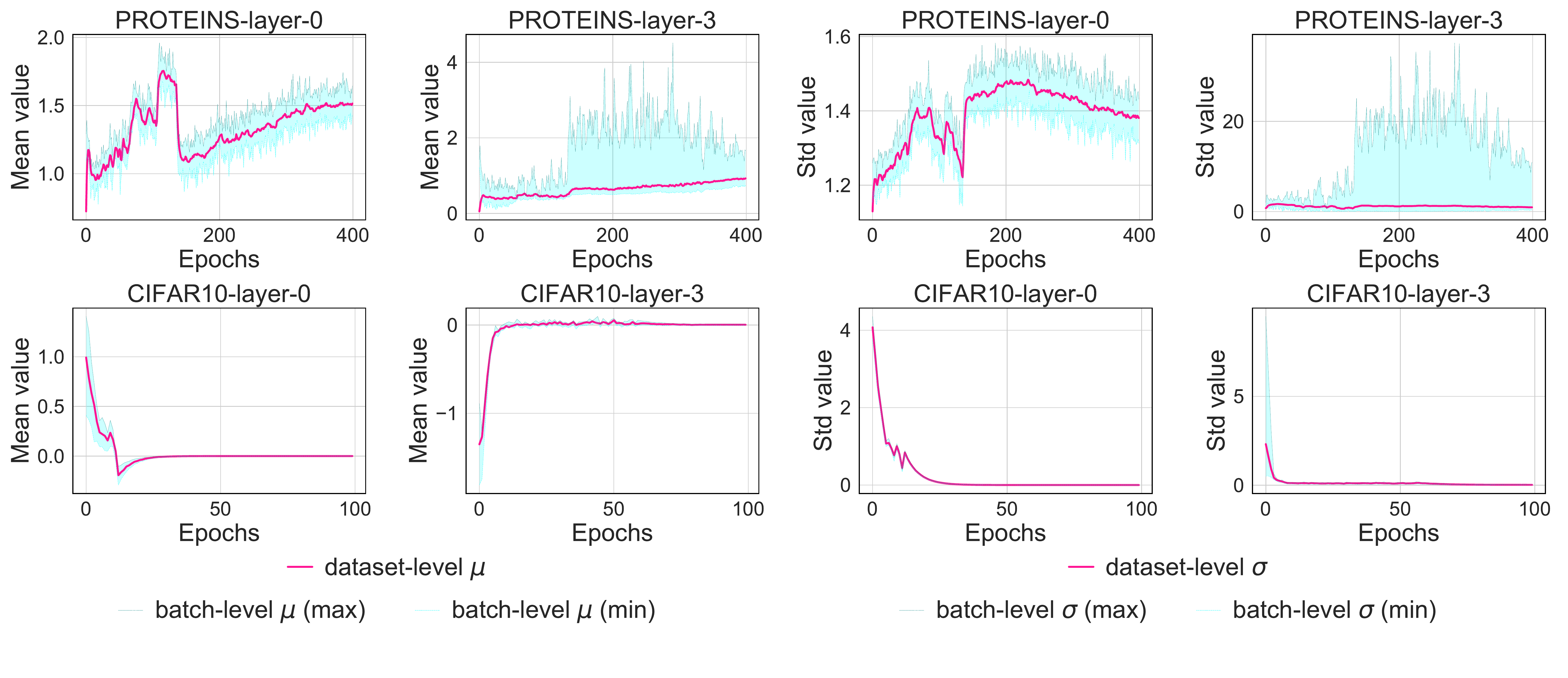}
    \caption{
    \textbf{Batch-level statistics are noisy for GNNs.} 
    We plot the batch-level/dataset-level mean/standard deviation of models trained on PROTEINS (graph classification) and CIFAR10 (image classification). We observe that the deviation of batch-level statistics from dataset-level statistics is rather large for the graph task, while being negligible in image task.
    }
\label{fig:noise-comparison}
\end{figure*}

The above analysis shows the adaptation of InstanceNorm has the effect of preconditioning the aggregation of GNNs. Then a natural question is whether a batch-level normalization for GNNs \citep{xu2018how} has similar advantages. We show that BatchNorm is less effective in GNNs due to heavy batch noise on graph data.

In BatchNorm, the mean $\mu_B$ and standard deviation $\sigma_B$ are calculated in a sampled batch during training, which can be viewed  as random variables by the randomness of sampling. During testing, the estimated dataset-level statistics (running mean $\mu_D$ and standard deviation $\sigma_D$) are used instead of the batch-level statistics \citep{ioffe2015batch}. To apply Theorem~\ref{thm:precondition} to BatchNorm for the preconditioning effect, one could potentially view all graphs in a dataset as  subgraphs in a \emph{super graph}. Hence, Theorem~\ref{thm:precondition} applies to BatchNorm if the batch-level statistics are well-concentrated around dataset-level statistics, i.e., $\mu_B\approx\mu_D$ and $\sigma_B\approx\sigma_D$. However, the concentration of batch-level statistics is heavily \textit{domain-specific}. While \citet{shen2020powernorm} find the variation of batch-level statistics in typical networks is small for computer vision,  the concentration of batch-level statistics  is still unknown for GNNs. 

We study how the batch-level statistics $\mu_B, \sigma_B$ deviate from the dataset-level statistics $\mu_D, \sigma_D$. For comparison, we train a 5-layer GIN with BatchNorm on the PROTEINS dataset and train a ResNet18 \citep{he2016deep} on the CIFAR10 dataset. We set batch size to 128. For each epoch, we record the batch-level max/min mean and standard deviation for the first  and the last BatchNorm layer on a randomly selected dimension across batches. In Figure \ref{fig:noise-comparison}, pink line denotes the dataset-level statistics, and green/blue line denotes the max/min value of the batch-level statistics. We observe that for image tasks, the maximal deviation of the batch-level statistics from the dataset-level statistics is negligible (Figure~\ref{fig:noise-comparison}) after a few epochs. In contrast, for the graph tasks, the variation of batch-level statistics stays  large during training. Intuitively, the graph structure can be quite diverse and the a single batch cannot well represent the entire dataset. Hence, the preconditioning property also may not hold for BatchNorm. In fact, the heavy batch noise  may bring instabilities to the training. More results may be found in Appendix \ref{appsec:vis-noise}.



\section{Graph Normalization}
\label{sec:graphnorm}

Although we provide evidence on the indispensability and advantages of our adaptation of InstanceNorm, simply normalizing the values in each feature dimension within a graph does not consistently lead to improvement. We show that in some situations, e.g., for regular graphs, the standard shift (e.g., shifting by subtracting the mean) may cause information loss on graph structures. 

We consider $r$-regular graphs, i.e., each node has a degree $r$. We first look into the case that there are no available node features, then $X_i$ is set to be the one-hot encoding of the node degree \citep{xu2018how}. In a $r$-regular graph, all nodes have the same encoding, and thus the columns of $H^{(0)}$ are the same. We study the output of the standard shift operation in the first layer, i.e., $k=1$ in Eq.~\eqref{eq:structure_with_norm_exp}. From the following proposition, we can see that when the standard shift operation is applied to GIN for a $r$-regular graph described above, the information of degree is lost:
\begin{proposition}\label{prop:regular_graph}
For a $r$-regular graph with features described above, we have for GIN, $\mathrm{Norm}\rbr{W^{(1)}H^{(0)}Q_{\mathrm{GIN}}} = S\rbr{W^{(1)}H^{(0)}Q_{\mathrm{GIN}}}N = 0$,
i.e., the output of normalization layer is a zero matrix without any information of the graph structure.
\end{proposition}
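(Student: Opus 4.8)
The plan is to unwind the matrix product $W^{(1)}H^{(0)}Q_{\mathrm{GIN}}$ explicitly, exploit $r$-regularity, and then use the identity $\one^\top N = 0$, which is immediate from $N = I_n - \tfrac{1}{n}\one\one^\top$. First I would record the structure of $H^{(0)}$: since the only node feature is the one-hot encoding of the degree and every node has degree $r$, all $n$ columns of $H^{(0)}$ are the same vector, i.e.\ $H^{(0)} = c\,\one^\top$ for some fixed $c \in \Rbb^{d^{(0)}}$. Hence $W^{(1)}H^{(0)} = u\,\one^\top$ with $u := W^{(1)}c$, a rank-one matrix all of whose columns agree.

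Next I would track the aggregation. Writing $Q_{\mathrm{GIN}} = A + (1+\xi^{(1)})I_n$ and using that the graph is $r$-regular with $A$ symmetric, we have $\one^\top A = (A\one)^\top = r\,\one^\top$, so $\one^\top Q_{\mathrm{GIN}} = (r+1+\xi^{(1)})\,\one^\top$. Therefore
\[
W^{(1)}H^{(0)}Q_{\mathrm{GIN}} \;=\; u\,\big(\one^\top Q_{\mathrm{GIN}}\big) \;=\; (r+1+\xi^{(1)})\,u\,\one^\top ,
\]
which still has all columns equal. Applying the shift — right-multiplication by $N$ — then annihilates it, since $\one^\top N = \one^\top - \tfrac{1}{n}(\one^\top\one)\,\one^\top = \one^\top - \one^\top = 0$; thus $W^{(1)}H^{(0)}Q_{\mathrm{GIN}}N = (r+1+\xi^{(1)})\,u\,(\one^\top N) = 0$. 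Left-multiplying by the diagonal scaling $S$ keeps the result at $0$, giving $S\big(W^{(1)}H^{(0)}Q_{\mathrm{GIN}}\big)N = 0$, the claimed identity.

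The computation itself is routine, so the only delicate point — and the thing I would flag explicitly — is the scaling matrix $S = \diag(1/\sigma_1,\cdots,1/\sigma_{d^{(1)}})$: because every feature is constant across the nodes of an $r$-regular graph, each $\sigma_j = 0$ and $S$ is, strictly speaking, undefined. The argument above shows that the post-shift, pre-scaling matrix is already the zero matrix, so the statement is literally correct under the standard convention used in implementations of these normalization layers (an additive $\eps$ in the denominator, or reading $0/0$ as $0$), and I would note this convention when invoking $S$. The other point worth making precise is exactly why the degree-based one-hot features force all columns of $H^{(0)}$ to coincide; once that is granted there is no further obstacle.
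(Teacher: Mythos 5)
Your proposal is correct and follows essentially the same route as the paper: both reduce to the observation that every row of $W^{(1)}H^{(0)}Q_{\mathrm{GIN}}$ is a scalar multiple of $\one^\top$ and then annihilate it using $\one^\top N = 0$. Your version is in fact slightly more careful than the paper's, which writes $A = r I_n$ where only $\one^\top A = r\,\one^\top$ holds (and is all that is needed), and your remark that $S$ is degenerate here because every $\sigma_j = 0$ is a legitimate point the paper glosses over.
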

Such information loss not only happens when there are no node features. For complete graphs, we can further show that even each node has different features, the graph structural information, i.e., adjacency matrix $A$, will always be ignored after the standard shift operation in GIN:
\begin{proposition}\label{prop:complete_graph}
For a complete graph ($r = n-1$), we have for GIN, $Q_{\mathrm{GIN}} N = \xi^{(k)} N$, i.e., graph structural information in $Q$ will be removed after multiplying $N$.
\end{proposition}
The proof of these two propositions can be found in Appendix~\ref{appsec:proofs}. Similar results can be easily derived for other architectures like GCN by substituting $Q_{\mathrm{GIN}}$ with $Q_{\mathrm{GCN}}$. As we can see from the above analysis, in graph data, the mean statistics after the aggregation sometimes contain structural information. Discarding the mean will degrade the expressiveness of the neural networks. Note that the problem may not happen in image domain. The mean statistics of image data contains global information such as brightness. Removing such information in images will not change the semantics of the objects and thus will not hurt the classification performance.

\begin{figure}[t]
    \centering
        \includegraphics[width=0.45\textwidth]{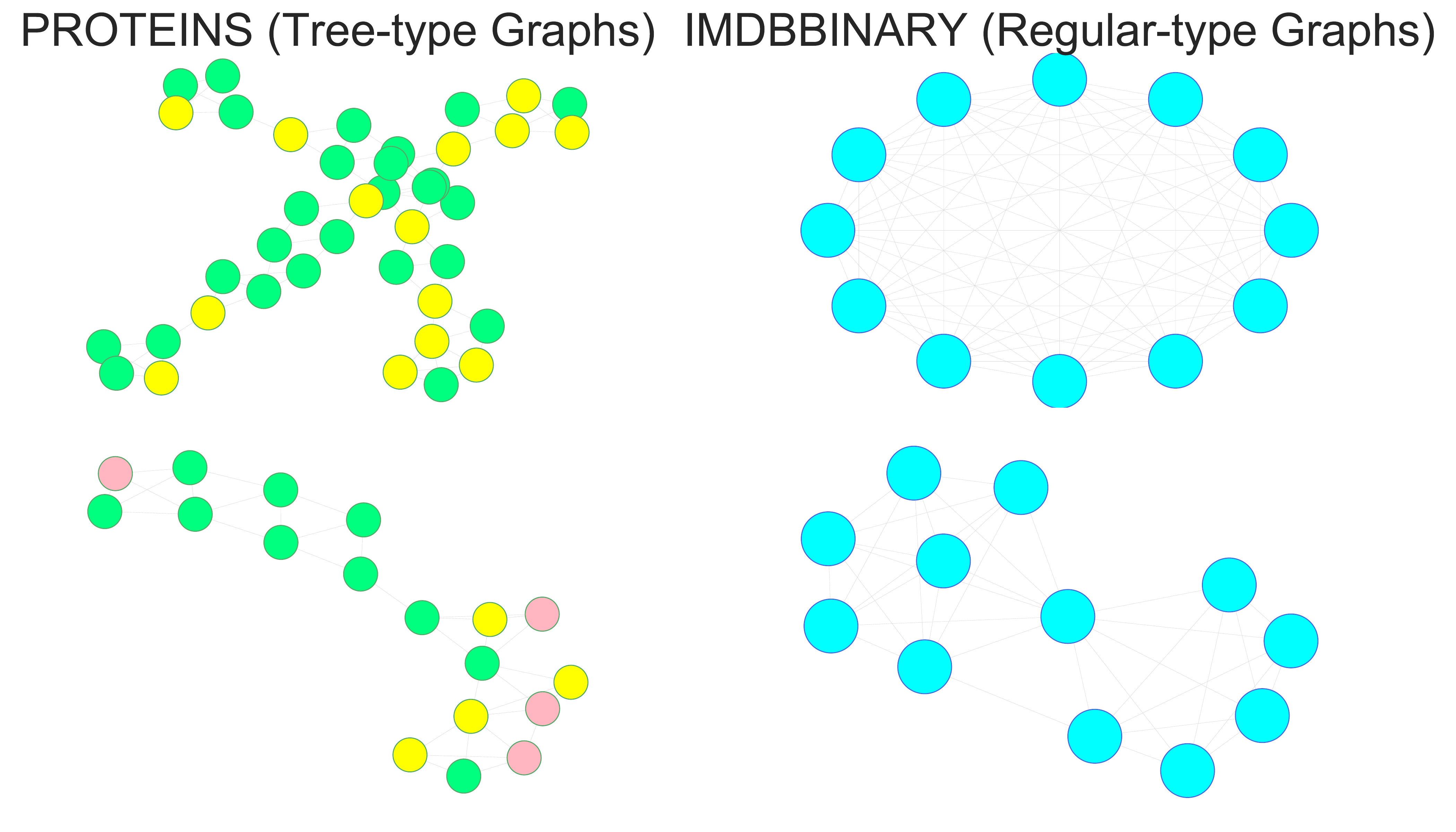}
        \includegraphics[width=0.45\textwidth]{ICML/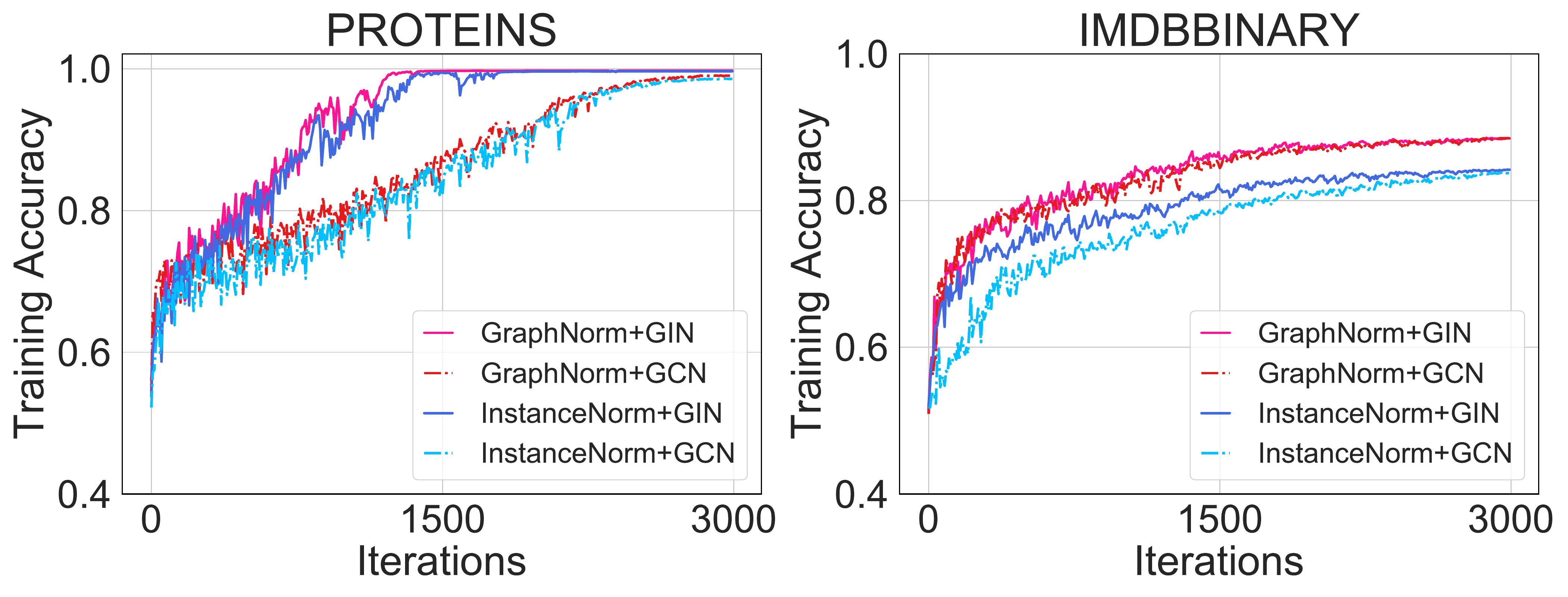}
    \caption{
    \textbf{Comparison of GraphNorm and InstanceNorm on different types of graphs.} Top: Sampled graphs with different topological structures. Bottom: Training curves of GIN/GCN using GraphNorm and InstanceNorm.}
\label{fig:ablation-study-learn-alpha}
\end{figure}

This analysis inspires us to modify the current normalization method with a \emph{learnable parameter} to automatically control how much the mean to preserve in the shift operation. Combined with the graph-wise normalization, we name our new method Graph Normalization, i.e., GraphNorm. For each graph $G$, we generally denote value $\hhat_{i,j}$ as the inputs to GraphNorm, e.g., the $j$-th feature value of node $v_i$, $i=1,\cdots,n$, $j=1,\cdots,d$. GraphNorm takes the following form:
\begin{align}
    \label{eq:graph_norm_formula}
    \mathrm{GraphNorm}\rbr{\hhat_{i,j}} = \gamma_j \cdot \frac{\hhat_{i,j} - \alpha_j\cdot\mu_j}{\hat{\sigma}_j} + \beta_j,
\end{align}
where $\mu_j=\frac{\sum_{i=1}^n\hhat_{i,j}}{n},\hat{\sigma}_j^2 = \frac{\sum_{i=1}^n\rbr{\hhat_{i,j}-\alpha_j\cdot\mu_j}^2}{n}$, and $\gamma_j, \beta_j$ are the affine parameters as in other normalization methods. By introducing the learnable parameter $\alpha_j$ for each feature dimension $j$, we are able to learn how much the information we need to keep in the mean.

To validate our theory and the proposed GraphNorm in real-world data, we conduct an ablation study on two typical datasets, PROTEINS and IMDB-BINARY. As shown in Figure~\ref{fig:ablation-study-learn-alpha}, the graphs from PROTEINS and IMDB-BINARY exhibit irregular-type and regular-type graphs, respectively. We train GIN/GCN using our adaptation of InstanceNorm and GraphNorm under the same setting in Section~\ref{sec:exps}. The training curves are presented in Figure~\ref{fig:ablation-study-learn-alpha}. The curves show that using a learnable $\alpha$ slightly improves the convergence on PROTEINS, while significantly boost the training on IMDB-BINARY. This observation verify that shifting the feature values by subtracting the mean may lose information, especially for regular graphs. And the introduction of learnable shift in GraphNorm can effectively mitigate the expressive degradation.

\begin{table*}[t]
\caption{{\bf Test performance} of GIN/GCN with various normalization methods on graph classification tasks. }
\resizebox{\textwidth}{!}{ \renewcommand{\arraystretch}{1.25}
\begin{tabular}{@{}clcccccccc@{}}

\cmidrule[\heavyrulewidth]{2-9}

& Datasets &  {\textsc{MUTAG}} & {\textsc{PTC}}  & {\textsc{PROTEINS}} & {\textsc{NCI1}} & {\textsc{IMDB-B}} & {\textsc{RDT-B}} & {\textsc{COLLAB}}  \\

& \text{\# graphs }  & 188  & 344  & 1113  &  4110 & 1000 & 2000 & 5000     \\
& \text{\# classes }   &  2  & 2  & 2  &  2 &  2 &  2 & 2 \\
& \text{Avg \# nodes }  &  17.9  & 25.5  & 39.1  &  29.8 &  19.8 &  429.6 & 74.5 \\
\cmidrule[\heavyrulewidth]{2-9}
& \textsc{WL subtree \citep{shervashidze2011weisfeiler}}  & 90.4 $\pm$ 5.7 & 59.9 $\pm$ 4.3 & 75.0 $\pm$ 3.1 & \textbf{86.0 $\pm$ 1.8} & 73.8 $\pm$ 3.9 & 81.0 $\pm$ 3.1 & 78.9 $\pm$ 1.9 &   \\
& \textsc{DCNN \citep{atwood2016diffusion}}  &  67.0 &  56.6 &  61.3 &  62.6  &  49.1 &  - &  52.1 &   \\
& \textsc{DGCNN \citep{zhang2018end}}  &  85.8 & 58.6 & 75.5 & 74.4 & 70.0 & - & 73.7 &   \\
& \textsc{AWL \citep{ivanov2018anonymous}}  & 87.9 $\pm$ 9.8 & - & - & - & 74.5 $\pm$ 5.9 & 87.9 $\pm$ 2.5 & 73.9 $\pm$ 1.9 &   \\
\cmidrule[\heavyrulewidth]{2-9}

& \textsc{GIN+LayerNorm }  & 82.4 $\pm$ 6.4    &  62.8 $\pm$ 9.3  & 76.2 $\pm$ 3.0  & 78.3 $\pm$ 1,7 & 74.5 $\pm$ 4,4 & 82.8 $\pm$ 7.7  & 80.1 $\pm$ 0.8  &  \\
& \textsc{GIN+BatchNorm (\citep{xu2018how})}  &  89.4 $\pm$ 5.6    & 64.6 $\pm$ 7.0  & 76.2 $\pm$ 2.8  &  82.7 $\pm$ 1.7 & 75.1 $\pm$ 5.1 & 92.4 $\pm$ 2.5     &\textbf{ 80.2 $\pm$ 1.9}     &  \\

& \textsc{GIN+InstanceNorm }  & 90.5 $\pm$ 7.8    & 64.7 $\pm$ 5.9  & 76.5 $\pm$ 3.9 & 81.2 $\pm$ 1.8 & 74.8 $\pm$ 5.0 & 93.2 $\pm$ 1.7     & 80.0 $\pm$ 2.1    &  \\
& \textbf{GIN+GraphNorm}  &  \textbf{91.6 $\pm$ 6.5}  &  \textbf{ 64.9 $\pm$ 7.5} & \textbf{ 77.4 $\pm$ 4.9}  &  81.4 $\pm$ 2.4 & \textbf{76.0 $\pm$ 3.7} & \textbf{ 93.5 $\pm$ 2.1}     &\textbf{ 80.2 $\pm$ 1.0}     & \\
\cmidrule[\heavyrulewidth]{2-9}
\end{tabular}}
  \label{tab:test-results}
 
\end{table*}

\begin{table}[t]
    \centering
    \caption{\textbf{Test performance} on OGB.}
    
    \begin{tabular}{lc}
      \toprule
        \text{Datasets} &  {\textsc{Ogbg-molhiv}} \\
        \text{\# graphs }  & 41,127  \\
        \text{\# classes }   &  2  \\
        \text{Avg \# nodes }  &  25.5 \\
      \midrule
        \text{GCN \citep{hu2020open}} & 76.06 $\pm$ 0.97 \\
        \text{GIN \citep{hu2020open}} & 75.58 $\pm$ 1.40 \\
      \midrule
        \text{GCN+LayerNorm } & 75.04 $\pm$ 0.48 \\
        \text{GCN+BatchNorm } & 76.22 $\pm$ 0.95 \\
                \text{GCN+InstanceNorm } & 78.18 $\pm$ 0.42 \\
        \textbf{GCN+GraphNorm} & \textbf{78.30 $\pm$ 0.69} \\
      \midrule
        \text{GIN+LayerNorm } & 74.79 $\pm$ 0.92 \\
        \text{GIN+BatchNorm } & 76.61 $\pm$ 0.97 \\
              \text{GIN+InstanceNorm } & 77.54 $\pm$ 1.27 \\
        \textbf{GIN+GraphNorm} & \textbf{77.73 $\pm$ 1.29} \\
      \bottomrule
    \end{tabular}
    \label{tab:test-molhiv}
\end{table}

\section{Experiments}
\label{sec:exps}

In this section,  we evaluate and compare both the training and test performance of GraphNorm with other normalization methods on graph classification benchmarks. 

\paragraph{Settings.}
We use eight popularly used benchmark datasets of different scales in the experiments~\citep{yanardag2015deep,xu2018how}, including four medium-scale bioinformatics datasets (MUTAG, PTC, PROTEINS, NCI1), three medium-scale social network datasets (IMDB-BINARY, COLLAB, REDDIT-BINARY), and one large-scale bioinformatics dataset ogbg-molhiv, which is recently released on Open Graph Benchmark (OGB)~\citep{hu2020open}. Dataset statistics are summarized in Table \ref{tab:test-results}. We use two typical graph neural networks GIN~\citep{xu2018how} and GCN~\citep{kipf2016semi} for our evaluations. Specifically, we use a five-layer GCN/GIN. For GIN, the number of sub-layers in MLP is set to 2. Normalization is applied to each layer. To aggregate global features on top of the network, we use SUM readout for MUTAG, PTC, PROTEINS and NCI1 datasets, and use MEAN readout for other datasets, as in \citet{xu2018how}. Details of the experimental settings are presented in Appendix \ref{appsec:experiments}.
\paragraph{Results.}
\label{sec:exp-results}
We plot the training curves of GIN with GraphNorm and other normalization methods\footnote{The graph size normalization in the preliminary version of \citet{dwivedi2020benchmarking} does not show significant improvement on the training and test performance, so we do not report it.} on different tasks in Figure \ref{fig:gin-dataset-training-curve}. The results on GCN show similar trends, and are provided in Appendix~\ref{appsec:training_gcn}. As shown in Figure~\ref{fig:gin-dataset-training-curve}, GraphNorm enjoys the fastest convergence on all tasks. Compared to BatchNorm used in \citet{xu2018how}, GraphNorm converges in roughly 5000/500 iterations on NCI1 and PTC datasets, while the model using BatchNorm does not even converge in 10000/1000 iterations. Remarkably, though  InstanceNorm does \emph{not} outperform other normalization methods on IMDB-BINARY, GraphNorm with learnable shift significantly boosts the training upon InstanceNorm and achieves the fastest convergence. We also validate the test performance and report the test accuracy in Table~\ref{tab:test-results},\ref{tab:test-molhiv}. The results show that GraphNorm also improves the generalization on most benchmarks.

\subsection{Ablation Study}
\label{sec:ablation}
In this subsection, we summarize the results of some ablation studies. Due to the space limitation, the detailed results can be found in Appendix~\ref{appsec:additional_exps}.

\paragraph{BatchNorm with learnable shift.} We conduct experiments on BatchNorm to investigate whether simply introducing a learnable shift can already improve the existing normalization methods without concrete motivation of overcoming expressiveness degradation. Specifically, we equip BatchNorm with a similar learnable shift as GraphNorm and evaluate its performance. We find that the learnable shift cannot further improve upon BatchNorm (See Appendix ~\ref{appsec:additional_exps}), which suggests the introduction of learnable shift in GraphNorm is critical.

\paragraph{BatchNorm with running statistics.} We study the variant of BatchNorm which uses running statistics to replace the batch-level mean and standard deviation (Similar idea is also proposed in \citet{yan2019towards}). At first glance, this method may seem to be able to mitigate the problem of large batch noise. However, the running statistics change a lot during training, and using running statistics disables the model to back-propagate the gradients through mean and standard deviation. Results in Appendix ~\ref{appsec:additional_exps} show this variant has even worse performance than BatchNorm.

\paragraph{The effect of batch size.} 
We further compare the GraphNorm with BatchNorm with different batch sizes (8, 16, 32, 64). As shown in Appendix ~\ref{appsec:additional_exps}, our GraphNorm consistently outperforms the BatchNorm on all the settings.
\section{Conclusion and Future Work}
In this paper, we adapt and evaluate three well-used normalization methods, i.e., BatchNorm, LayerNorm, and InstanceNorm to GNNs. We give explanations for the successes and failures of these adaptations. Based on our understanding of the strengths and limitations of existing adaptations, we propose Graph Normalization, that builds upon the adaptation of InstanceNorm with a learnable shift to overcome the expressive degradation of the original InstanceNorm. Experimental results show GNNs with GraphNorm not only converge faster, but also achieve better generalization performance on several benchmark datasets.

Though seeking theoretical understanding of normalization methods in deep learning is challenging \citep{arora2018theoretical} due to limited understanding on the optimization of deep learning models and characterization of real world data, we take an initial step towards finding effective normalization methods for GNNs with theoretical guidance in this paper. The proposed theories and hypotheses are motivated by several simple models. And we are not able to give concrete theoretical results to problems such as: the convergence rate of general GNNs with normalization, the spectrum of $Q$ normalized by learnable shift, etc. We believe the analyses of more realistic but complicated settings, e.g., the dynamics of GraphNorm on deep GNNs, are good future directions.
\bibliographystyle{icml2021}
\bibliography{iclr2021_conference}

\newpage
\appendix
\section{Proofs}\label{appsec:proofs}
\subsection{Proof of Theorem~\ref{thm:precondition}}
\label{appsec:proof_precondition}
We first introduce the Cauchy interlace theorem:
\begin{lemma}[Cauchy interlace theorem~(Theorem 4.3.17 in \citet{horn2012matrix})]\label{lem:seperation}
    Let $S\in\Rbb^{(n-1)\times (n-1)}$ be symmetric, $y\in\Rbb^n$ and $a\in\Rbb$ be given, and let $R = \begin{pmatrix}
        S & y\\
        y^\top & a
    \end{pmatrix}\in\Rbb^{n\times n}$. Let $\lambda_1\le\lambda_2\le\cdots\le\lambda_n$ be the eigenvalues of $R$ and $\mu_1\le\mu_2\le\cdots\le\mu_{n-1}$ be the eigenvalues of $S$. Then
    \begin{align}
        \lambda_1\le\mu_1\le\lambda_2\le\cdots\le\lambda_{n-1}\le\mu_{n-1}\le\lambda_n,
    \end{align}
    where $\lambda_i = \mu_i$ only when there is a nonzero $z\in\Rbb^{n-1}$ such that $Sz = \mu_i z$ and $y^\top z = 0$; if $\lambda_i = \mu_{i-1}$ then there is a nonzero $z\in\Rbb^{n-1}$ such that $Sz = \mu_{i-1} z$, $y^\top z = 0$.
\end{lemma}
Using Lemma \ref{lem:seperation}, the theorem can be proved as below.
\begin{proof}
    For any matrices $P,R\in\Rbb^{n\times n}$, we use $P\sim R$ to denote that the matrix $P$ is similar to the matrix $R$. Note that if $P\sim R$, the eigenvalues of $P$ and $R$ are the same. As the singular values of $P$ are equal to the square root of the eigenvalues of $P^\top P$, we have the eigenvalues of $Q^\top Q$ and that of $NQ^\top{Q}N$ are $\cbr{\lambda_i^2}_{i=1}^n$ and $\cbr{\mu_i^2}_{i=1}^n$, respectively.

    Note that $N$ is a projection operator onto the orthogonal complement space of the subspace spanned by $\one$, and $N$ can be decomposed as $N = U\diag\rbr{\underbrace{1, \cdots, 1}_{\times n-1}, 0}U^\top$ where $U$ is an orthogonal matrix. Since $\one$ is the eigenvector of $N$ associated with eigenvalue $0$, we have
    \begin{align}
        U = \begin{pmatrix}
            U_1 & \frac{1}{\sqrt{n}}\one
        \end{pmatrix},
    \end{align}
    where $U_1\in\Rbb^{n\times (n-1)}$ satisfies $U_1\one=0$ and $U_1^\top U_1 = I_{n-1}$.

    Then we have $N{Q}^\top{Q} N = U\diag\rbr{1, \cdots, 1, 0}U^\top{Q}^\top{Q} U\diag\rbr{1, \cdots, 1, 0}U^\top\sim \diag\rbr{1, \cdots, 1, 0}U^\top{Q}^\top{Q} U\diag\rbr{1, \cdots, 1, 0}$.

    Let
    \begin{align}
        D &= \diag\rbr{1,\cdots,1,0}=
        \begin{pmatrix}
            I_{n-1} & \zero\\
            \zero^\top & 0
        \end{pmatrix},\\
        B &=
        \begin{pmatrix}
            I_{n-1}\\
            \zero^\top
        \end{pmatrix},\\
        \Cbar &= {Q}^\top{Q},
    \end{align}
    where $\zero = \sbr{\underbrace{0, \cdots, 0}_{\times n-1}}^\top$.
    
    We have
    \begin{small}
    \begin{align}
        N{Q}^\top{Q} N&\sim DU^\top\Cbar UD\\
        &= 
        D
        \begin{pmatrix}
            U_1^\top\\
            \frac{1}{\sqrt{n}}\one^\top
        \end{pmatrix}
        \Cbar
        \begin{pmatrix}
            U_1 & \frac{1}{\sqrt{n}}\one
        \end{pmatrix}
        D\\
        &=D
        \begin{pmatrix}
            U_1^\top\Cbar U_1 & \frac{1}{\sqrt{n}}U_1^\top\Cbar\one\\
            \frac{1}{\sqrt{n}}\one^\top\Cbar U_1 & \frac{1}{n}\one^\top\Cbar\one
        \end{pmatrix}
        D\\
        &=\begin{pmatrix}
            B^\top\\
            \begin{matrix}
                \zero^\top& 0
            \end{matrix}
        \end{pmatrix}
        \begin{pmatrix}
            U_1^\top\Cbar U_1 & \frac{1}{\sqrt{n}}U_1^\top\Cbar\one\\
            \frac{1}{\sqrt{n}}\one^\top\Cbar U_1 & \frac{1}{n}\one^\top\Cbar\one
        \end{pmatrix}
        \begin{pmatrix}
            B &\begin{matrix}
            \zero\\
            0
            \end{matrix}
        \end{pmatrix}\\
        &= \begin{pmatrix}
            U_1^\top\Cbar U_1 &\zero\\
            \zero^\top &0
        \end{pmatrix}\label{eq:block_decomp}.
    \end{align}
    \end{small}
    Using Lemma~\ref{lem:seperation} and taking $R = U^\top\Cbar U$ and $S = U_1^\top\Cbar U_1$, we have the eigenvalues of $U_1^\top\Cbar U_1$ are interlacing between the eigenvalues of $U^\top\Cbar U$. Note that the eigenvalues of $DU^\top \Cbar UD$ are $\mu_1^2\le\mu_2^2\le\cdots\le\mu_{n-1}^2$ and $\mu_n^2=0$, and by Eq.~\eqref{eq:block_decomp}, the eigenvalues of $DU^\top \Cbar UD$ contain the eigenvalues of $U_1^\top \Cbar U_1$ and $0$. Since the eigenvalues of $U^\top \Cbar U$ are $\lambda_1^2\le\lambda_2^2\le\cdots\le\lambda_n^2$ (By similarity of $U^\top \Cbar U$ and $\Cbar$), we then have
    \begin{align}
        \lambda_1^2\le\mu_1^2\le\lambda_2^2\le\cdots\le\lambda_{n-1}^2\le\mu_{n-1}^2\le\lambda_n^2.
    \end{align}
    Moreover, the equality holds only when there is a nonzero $z\in\Rbb^{n-1}$ that satisfies
    \begin{align}
        U_1^\top\Cbar U_1 z = \mu z,\label{eq:equal_eigen}\\
        \one^\top\Cbar U_1 z = 0\label{eq:equal_otho},
    \end{align}
    where $\mu$ is one of $\mu_i^2$s. 
    
    Since $U_1$ forms an orthogonal basis of the orthogonal complement space of $\one$ and Eq.~\eqref{eq:equal_otho} is equivalent to ``$\Cbar U_1 z$ lies in the orthogonal complement space'', we have that there is a vector $y\in\Rbb^{n-1}$ such that
    \begin{align}
        \Cbar U_1 z = U_1 y.
    \end{align}
    Substituting this into Eq.~\eqref{eq:equal_eigen}, we have
    \begin{align}
        U_1^\top U_1 y = \mu z.
    \end{align}
    Since $U_1^\top U_1 = I_{n-1}$, the equation above is equivalent to
    \begin{align}
        y = \mu z,
    \end{align}
    which means
    \begin{align}
        \Cbar U_1 z = U_1 y = \mu U_1 z,
    \end{align}
    i.e., $U_1 z$ is the eigenvector of $\Cbar$ associated with $\mu$. By noticing $U_1 z$ lies in the orthogonal complement space of $\one$ and the eigenvector of $\Cbar$ is right singular vector of ${Q}$, we complete the proof.
\end{proof}

\subsection{Concrete example of the acceleration}
\label{appsec:concrete_example}
To get more intuition on how the preconditioning effect of the shift can accelerate the training of GNNs, we provide a concrete example showing that shift indeed improves the convergence rate. Note that the global convergence rate of widely-used deep GNNs on general data remains highly unexplored, and the existing works mainly focus on some simplified case, e.g., GNTK \citep{du2019graph}. To make things clear without loss of intuition, we focus on a \emph{simple linear GNN} applied to a \emph{well-specified task} where we are able to explicitly compare the convergence rates.

\subsubsection{Settings}
\paragraph{Data.}
We describe each sample, i.e., graph, with $n$ nodes by a tuple $G = \cbr{X, Q, \pbf, y}$, where
\begin{itemize}[leftmargin=*]
    \item $X\in\R^{d\times n}$ is the feature matrix of the graph, where $d$ is the dimension of the of each feature vector.
    \item $Q\in\R^{n\times n}$ representing the matrix representing the neighbor aggregation as Eq.~\eqref{eq:structure_with_norm}. Note that this matrix depends on the aggregation scheme used by the chosen architecture, but for simplicity, we model this as a part of data structure.
    \item $\pbf\in\R^{n\times 1}$ is a weight vector representing the importance of each node. This will be used to calculate the $\mathrm{READOUT}$ step. Note that this vector is not provided in many real-world datasets, so the $\mathrm{READOUT}$ step usually takes operations such as summation.
    \item $y\in\R$ is the label.
\end{itemize}
The whole dataset $S = \cbr{G_1, \cdots, G_m}$ consists of $m$ graphs where $G_i = \cbr{X_i, Q_i, \pbf_i, y_i}$. We make the following assumptions on the data generation process:
\begin{assumption}[Independency]\label{assumpt:indepency}
We assume $X_i$, $Q_i$, $\pbf_i$ are drawn from three independent distributions in an i.i.d. manner, e.g., $X_1, \cdots, X_m$ are i.i.d.. 
\end{assumption}
\begin{assumption}[Structure of data distributions]\label{assumpt:non-degeneracy}
For clearness and simplicity of statement, we assume the number of nodes in each graph $G_i$ are the same, we will use $n$ to denote this number and we further assume $n = d$. We assume that the distribution of $\pbf_i$ satisfies $\Ebb\sbr{\pbf\pbf^\top} = I_n, \Ebb \pbf = 0$, which means the importance vector is non-degenerate. Let $\Ebb XQ = Y$, we assume that $Y$ is \emph{full ranl}. We make the following assumptions on $XQ$: $\one^\top Y^{-1}XQ = 0$, which ensures that there is no information in the direction $\one^\top Y^{-1}$; there is a constant $\delta_1$ such that $\Ebb (XQ-Y)(XQ-Y)^\top \preceq \delta_1 I_d$ and $\Ebb (XQ-Y)N(XQ-Y)^\top\preceq \delta_1 I_d$, where $\delta_1$ characterizes the noise level; none of the eigenvectors of $YY^\top$ is orthogonal to $\one$.
\end{assumption}
\begin{remark}
A few remarks are in order, firstly, the assumption that each graph has the same number of nodes and the number $n$ is equal to feature dimension $d$ can be achieved by ``padding'', i.e., adding dummy points or features to the graph or the feature matrix. The assumption that $\one^\top Y^{-1}XQ = 0$ is used to guarantee that there is no information loss caused by shift ($\one^\top Y^{-1}YNY^\top = 0$). Though we make this strong assumption to ensure no information loss in theoretical part, we introduce ``learnable shift'' to mitigate this problem in the practical setting. The theory taking learnable shift into account is an interesting future direction.
\end{remark}
\begin{assumption}[Boundness]\label{assumpt:boundness}
We make the technical assumption that there is a constant $b$ such that the distributions of $X_i, Q_i, \pbf_i$ ensures
\begin{align}
\norm{X_i}\norm{Q_i}\norm{\pbf_i}\le\sqrt{b}.
\end{align}
\end{assumption}
\paragraph{Model.}
We consider a simple \emph{linear graph neural network} with parameter $\wbf\in\R^{d\times 1}$:
\begin{align}
    \label{eq:linear_model}
    f^{\mathrm{Vanilla}}_\wbf(X, Q, \pbf) = \wbf^\top X Q \pbf.
\end{align}
Then, the model with shift can be represented as:
\begin{align}
    \label{eq:linear_model_shift}
    f^{\mathrm{Shift}}_\wbf(X, Q, \pbf) = \wbf^\top X Q N \pbf,
\end{align}
where $N = I_n - \frac{1}{n}\one\one^\top$.
\paragraph{Criterion.}
We consider using square loss as training objective, i.e.,
\begin{align}
    \label{eq:linear_obj}
L(f) = \sum_{i = 1}^m\frac{1}{2}\rbr{f(X_i, Q_i, \pbf_i) - y_i}^2.
\end{align}
\paragraph{Algorithm.} We consider using gradient descent to optimize the objective function. Let the initial parameter $\wbf_0=0$. The update rule of $w$ from step $t$ to $t+1$ can be described as:
\begin{align}
    \wbf_{t+1} = \wbf_t - \eta \nabla_\wbf L(f_{\wbf_t}),
\end{align}
where $\eta$ is the learning rate.
\begin{theorem}
Under Assumption~\ref{assumpt:indepency},\ref{assumpt:non-degeneracy},\ref{assumpt:boundness}, for any $\epsilon > 0$ there exists constants $C_1, C_2$, such that for $\delta_1<C_1, m>C_2$, with probability $1-\epsilon$, the parameter $\wbf_t^{\mathrm{Vanilla}}$ of vanilla model converges to the optimal parameter $\wbf_*^{\mathrm{Vanilla}}$ linearly:
\begin{align}
    \norm{\wbf_t^{\mathrm{Vanilla}} - \wbf_*^{\mathrm{Vanilla}}}_2 \le O\rbr{\rho_1^t},
\end{align}
while the parameter $\wbf_t^{\mathrm{Shfit}}$ of the shifted model converges to the optimal parameter $\wbf_*^{\mathrm{Shfit}}$ linearly:
\begin{align}
    \norm{\wbf_t^{\mathrm{Shift}} - \wbf_*^{\mathrm{Shfit}}}_2 \le O\rbr{\rho_2^t},
\end{align}
where
\begin{align}
    1 > \rho_1 > \rho_2,
\end{align}
which indicates the shifted model has a faster convergence rate.
\end{theorem}
\begin{proof}
We firstly reformulate the optimization problem in matrix form. 

Notice that in our linear model, the representation and structure of a graph $G_i=\cbr{X_i, Q_i, \pbf_i, y_i}$ can be encoded as a whole in a single vector, i.e., $\zbf_i^{\mathrm{Vanilla}}=X_iQ_i\pbf_i\in\R^{d\times 1}$ for vanilla model in Eq.~\eqref{eq:linear_model}, and $\zbf_i^{\mathrm{Shift}}=X_iQ_iN\pbf_i\in\R^{d\times 1}$ for shifted model in Eq.~\eqref{eq:linear_model_shift}. We call $\zbf_i$ and $\zbf_i^{\mathrm{Shift}}$ ``combined features''. Let $Z^{\mathrm{Vanilla}} = \sbr{\zbf_1^{\mathrm{Vanilla}}, \cdots, \zbf_m^{\mathrm{Vanilla}}}\in\R^{d\times m}$ and $Z^{\mathrm{Shift}} = \sbr{\zbf_1^{\mathrm{Shift}}, \cdots, \zbf_m^{\mathrm{Shift}}}\in\R^{d\times m}$ be the matrix of combined features of valinna linear model and shifted linear model respectively. For clearness of the proof, we may abuse the notations and use $Z$ to represent $Z^{\mathrm{Vanilla}}$. Then the objective in Eq.~\eqref{eq:linear_obj} for vanilla linear model can be reformulated as:
\begin{align}
\label{eq:linear_obj_mat}
L(f_\wbf) = \frac{1}{2}\norm{Z^\top \wbf - \ybf}^2_2,
\end{align}
where $\ybf = \sbr{y_1, \cdots, y_m}^\top\in\R^{m\times 1}$.

Then the gradient descent update can be explicitly writen as:
\begin{align}
\label{eq:linear_mat_update}
\wbf_{t+1} &= \wbf_t - \eta \rbr{ZZ^\top \wbf_t - Z\ybf}\\
&= (I_d - \eta ZZ^\top) \wbf_t + \eta Z\ybf,
\end{align}
which converges to $\wbf_* = \rbr{ZZ^\top}^\dagger Z\ybf$ according to classic theory of least square problem \citep{horn2012matrix}, where $\rbr{ZZ^\top}^\dagger$ is the Moore–Penrose inverse of $ZZ^\top$.

By simultaneously subtracting $\wbf_*$ in the update rule, we have 
\begin{align}
    \wbf_{t+1} - \wbf_* = \rbr{I_d - \eta ZZ^\top} \rbr{\wbf_t - \wbf_*}.
\end{align}

So the residual of $\wbf_t$ is
\begin{align}
    \norm{\wbf_t - \wbf_*} &= \norm{\rbr{I_d - \eta ZZ^\top}^t \wbf_*}\\
    &\le \norm{I_d - \eta ZZ^\top}^t\norm{\wbf_*}.\label{eq:linear_converge}
\end{align}
Let $\sigma_{\max}(A)$ and $\sigma_{\min}(A)$ be the maximal and mininal \emph{positive} eigenvalues of $A$, respectively. Then the optimial learning rate (the largest learning rate that ensures $I_d - \eta ZZ^\top$ is positive semidefinite) is $\eta = \frac{1}{\sigma_{\max}(ZZ^\top)}$. Under this learning rate we have the convergence rate following Eq.~\eqref{eq:linear_converge}:
\begin{align}
    \norm{\wbf_t - \wbf_*} &\le \norm{I_d - \eta ZZ^\top}^t\norm{\wbf_*}\\
    &\le \rbr{1 - \frac{\sigma_{\min}\rbr{ZZ^\top}}{\sigma_{\max}\rbr{ZZ^\top}}}^t\norm{\wbf_*}.\label{eq:linear_rate}
\end{align}

For now, we show that the convergence rate of the optimization problem with vanilla model depends on $\frac{\sigma_{\min}\rbr{ZZ^\top}}{\sigma_{\max}\rbr{ZZ^\top}}$. Follwing the same argument, we can show the convergence rate of the optimization problem with shifted model depends on $\frac{\sigma_{\min}\rbr{Z^{\mathrm{Shift}}Z^{\mathrm{Shfit}\top}}}{\sigma_{\max}\rbr{Z^{\mathrm{Shift}}Z^{\mathrm{Shfit}\top}}}$. We then aim to bound this term, which we call effective condition number.

Similarly, we investigate the effective condition number for $ZZ^\top$ first, and the analysis of $Z^{\mathrm{Shift}}Z^{\mathrm{Shift}\top}$ follows the same manner. As multiplying a constant does not affect the effective condition number, we first scale $ZZ^\top$ by $\frac{1}{m}$ and expand it as:
\begin{align}
    \frac{1}{m} ZZ^\top &= \frac{1}{m}\sum_{i=1}^m \zbf_i\zbf_i^\top,
\end{align}
which is the empirical estimation of the covariance matrix of the combined feature. By concentration inequality, we know this quantity is concentrated to the covariance matrix, i.e.,
\begin{align*}
    \Ebb_{\zbf} \zbf\zbf^\top &= \Ebb_{X,Q,\pbf}{XQ\pbf\rbr{XQ\pbf}^\top}\\
    &=\Ebb_{X,Q} XQ\rbr{\Ebb\sbr{\pbf\pbf^\top}}(XQ)^\top\\
    &=\Ebb_{X, Q} XQ(XQ)^\top\hspace{0.5cm}{\text{(By Assumption~\ref{assumpt:indepency})}}\\
    &=YY^\top + \Ebb_{X,Q} (XQ-Y)(XQ-Y)^\top.
\end{align*}
Noticing that $\zero \preceq \Ebb_{X,Q} (XQ-Y)(XQ-Y)^\top \preceq \delta_1 I_d$ by Assumption~\ref{assumpt:non-degeneracy}, and $Y$ is full rank, we can conclude that $\sigma_{\max}\rbr{YY^\top}\le\sigma_{\max}\rbr{\Ebb_{\zbf} \zbf\zbf^\top}\le\sigma_{\max}\rbr{YY^\top}+\delta_1$, and $\sigma_{\min}\rbr{YY^\top}\le\sigma_{\min}\rbr{\Ebb_{\zbf} \zbf\zbf^\top}\le\sigma_{\min}\rbr{YY^\top}+\delta_1$ by Weyl's inequality.

By similar argument, we have that $\frac{1}{m}Z^{\mathrm{Shift}}Z^{\mathrm{Shift}\top}$ concentrates to
\begin{align*}
    &\Ebb_{\zbf^{\mathrm{Shift}}} \zbf^{\mathrm{Shift}}\zbf^{\mathrm{Shift}\top} \\
    =& \Ebb_{X, Q} (XQ)N^2(XQ)^\top\\
    =&\Ebb_{X, Q} (XQ)N(XQ)^\top \hspace{0.5cm}{(N^2 = N)}\\
    =&YNY^\top + \Ebb_{X, Q} (XQ-Y)N(XQ-Y)^\top.
\end{align*}
By Assumption~\ref{assumpt:non-degeneracy}, we have
\begin{align*}
    0=&\one^\top Y^{-1}\Ebb_{\zbf^{\mathrm{Shift}}} \zbf^{\mathrm{Shift}}\zbf^{\mathrm{Shift}\top}\\
    =&\one^\top Y^{-1}\rbr{YNY^\top + \Ebb_{X, Q} (XQ-Y)N(XQ-Y)^\top}\\
    =&\one^\top Y^{-1}\Ebb_{X, Q} (XQ-Y)N(XQ-Y)^\top,
\end{align*}
which means $\Ebb_{X, Q} (XQ-Y)N(XQ-Y)^\top$ has the same eigenspace as $YNY^\top$ with respect to eigenvalue $0$. Combining with $\zero \preceq \Ebb_{X,Q} (XQ-Y)N(XQ-Y)^\top \preceq \delta_1 I_d$, we have $\sigma_{\max}\rbr{YNY^\top}\le\sigma_{\max}\rbr{\Ebb_{\zbf^{\mathrm{Shift}}} \zbf^{\mathrm{Shift}}\zbf^{\mathrm{Shift}\top}}\le\sigma_{\max}\rbr{YNY^\top}+\delta_1$, and $\sigma_{\min}\rbr{YNY^\top}\le\sigma_{\min}\rbr{\Ebb_{\zbf^{\mathrm{Shift}}} \zbf^{\mathrm{Shift}}\zbf^{\mathrm{Shift}\top}}\le\sigma_{\min}\rbr{YNY^\top}+\delta_1$.

It remains to bound the finite sample error, i.e., $\norm{\frac{1}{m}ZZ^\top - \Ebb_\zbf \zbf\zbf^\top}_2$ and $\norm{\frac{1}{m}Z^{\mathrm{Shift}}Z^{\mathrm{Shfit}\top} - \Ebb_\zbf \zbf\zbf^\top}_2$. These bounds can be obtained by the following lemma:
\begin{lemma}[Corollary 6.1 in \citet{wainwright2019high}]
Let $\zbf_1, \cdots, \zbf_m$ be i.i.d. zero-mean random vectors with covariance matrix $\Sigma$ such that $\norm{\zbf}_2\le\sqrt{b}$ almost surely. Then for all $\delta>0$, the sample covariance matrix $\hat{\Sigma} = \frac{1}{m}\sum_{i=1}^m\zbf_i\zbf_i^\top$ satisfies
\begin{align}
\Pr\sbr{\norm{\hat{\Sigma} - \Sigma}_2\ge\delta}\le 2d\exp\rbr{-\frac{\delta^2}{2b\rbr{\norm{\Sigma}_2+\delta}}}.
\end{align}
\end{lemma}

By this lemma, we further have

\begin{lemma}[Bound on the sample covariance matrix]
Let $\zbf_1, \cdots, \zbf_m$ be i.i.d. zero-mean random vectors with covariance matrix $\Sigma$ such that $\norm{\zbf}_2\le\sqrt{b}$ almost surely. Then with probability $1-\epsilon$, the sample covariance matrix $\hat{\Sigma} = \frac{1}{m}\sum_{i=1}^m\zbf_i\zbf_i^\top$ satisfies
\begin{align}
\norm{\hat{\Sigma} - \Sigma}_2\le O\rbr{\sqrt{\frac{\log (1/\epsilon)}{m}}},
\end{align}
where we hide constants $b, \norm{\Sigma}_2, d$ in the big-O notation and highlight the dependence on the number of samples $m$.
\end{lemma}
Combining with previous results, we conclude that:
\begin{align*}
&\sigma_{\max}\rbr{YY^\top}-O\rbr{\sqrt{\frac{\log (1/\epsilon)}{m}}}\\
\le&\sigma_{\max}\rbr{\frac{1}{m}ZZ^\top}\\
\le&\sigma_{\max}\rbr{YY^\top}+\delta_1+O\rbr{\sqrt{\frac{\log (1/\epsilon)}{m}}};\\
&\sigma_{\min}\rbr{YY^\top}-O\rbr{\sqrt{\frac{\log (1/\epsilon)}{m}}}\\
\le&\sigma_{\min}\rbr{\frac{1}{m}ZZ^\top}\\
\le&\sigma_{\min}\rbr{YY^\top}+\delta_1+O\rbr{\sqrt{\frac{\log (1/\epsilon)}{m}}};\\
&\sigma_{\max}\rbr{YNY^\top}-O\rbr{\sqrt{\frac{\log (1/\epsilon)}{m}}}\\
\le&\sigma_{\max}\rbr{\frac{1}{m}Z^{\mathrm{Shift}}Z^{\mathrm{Shift}\top}}\\
\le&\sigma_{\max}\rbr{YNY^\top}+\delta_1+O\rbr{\sqrt{\frac{\log (1/\epsilon)}{m}}}\\
&\sigma_{\min}\rbr{YNY^\top}-O\rbr{\sqrt{\frac{\log (1/\epsilon)}{m}}}\\
\le&\sigma_{\min}\rbr{\frac{1}{m}Z^{\mathrm{Shift}}Z^{\mathrm{Shift}\top}}\\
\le&\sigma_{\min}\rbr{YNY^\top}+\delta_1+O\rbr{\sqrt{\frac{\log (1/\epsilon)}{m}}}.
\end{align*}

By now, we have transfered the analysis of $ZZ^\top$ and $Z^{\mathrm{Shift}}Z^{\mathrm{Shfit}\top}$ to the analysis of $YY^\top$ and $YNY^\top$. And the positive eigenvalues of $YNY^\top$ is interlaced between the positive eigenvalues of $YY^\top$ by the same argument as Theorem~\ref{thm:precondition}. Concretely, we have $\sigma_{\min}\rbr{YY^\top}\le\sigma_{\min}\rbr{YNY^\top}\le\sigma_{\max}\rbr{YNY^\top}\le\sigma_{\max}\rbr{YY^\top}$. Noticing that none of the eigenvectors of $YY^\top$ is orthogonal to $\one$, the first and last equalies can not be achieved, so $\sigma_{\min}\rbr{YY^\top}<\sigma_{\min}\rbr{YNY^\top}\le\sigma_{\max}\rbr{YNY^\top}<\sigma_{\max}\rbr{YY^\top}$. Finally, we can conclude for small enough $\delta_1$ and large enough $m$, with probability $\epsilon$,
\begin{align*}
&\sigma_{\min}\rbr{\frac{1}{m}ZZ^\top}\\
\le&\sigma_{\min}\rbr{YY^\top}+\delta_1+O\rbr{\sqrt{\frac{\log (1/\epsilon)}{m}}}\\
<&\sigma_{\min}\rbr{YNY^\top}-O\rbr{\sqrt{\frac{\log (1/\epsilon)}{m}}}\\
\le&\sigma_{\min}\rbr{\frac{1}{m}Z^{\mathrm{Shift}}Z^{\mathrm{Shift}\top}}\\
\le&\sigma_{\max}\rbr{\frac{1}{m}Z^{\mathrm{Shift}}Z^{\mathrm{Shift}\top}}\\
\le&\sigma_{\max}\rbr{YNY^\top}+\delta_1+O\rbr{\sqrt{\frac{\log (1/\epsilon)}{m}}}\\
<&\sigma_{\max}\rbr{YY^\top}-O\rbr{\sqrt{\frac{\log (1/\epsilon)}{m}}}\\
\le&\sigma_{\max}\rbr{\frac{1}{m}ZZ^\top}.
\end{align*}
So
\begin{align*}
&\rho_2 = 1 - \frac{\sigma_{\min}\rbr{Z^{\mathrm{Shift}}Z^{\mathrm{Shift}\top}}}{\sigma_{\max}\rbr{Z^{\mathrm{Shift}}Z^{\mathrm{Shift}\top}}}\\
<&\rho_1 = 1 - \frac{\sigma_{\min}\rbr{ZZ^\top}}{\sigma_{\max}\rbr{ZZ^\top}},
\end{align*}
where $\rho_1,\rho_2$ are the constants in the statement of the theorem. This inequality means the shifted model has better convergence speed by Eq.~\eqref{eq:linear_rate}.
\end{proof}

\subsection{Proof of Proposition~\ref{prop:regular_graph}}
\begin{proof}
    For $r$-regular graph, $A=r\cdot I_n$ and ${Q_{\mathrm{GIN}}} = \rbr{r+1+\xi^{(1)}}I_n$. Since $H^{(0)}$ is given by one-hot encodings of node degrees, the row of $H^{(0)}$ can be represented as $c\cdot \one^\top$ where $c=1$ for the $r$-th row and $c=0$ for other rows. By the associative property of matrix multiplication, we only need to show $H^{(0)}{Q_{\mathrm{GIN}}} N=0$. This is because, for each row
    \begin{small}
    \begin{align}
        c\cdot\one^\top {Q_{\mathrm{GIN}}} N &= c\cdot\one^\top (r+1+\xi^{(1)})I_n \rbr{I_n - \frac{1}{n}\one\one^\top}\\
        &=c\rbr{r+1+\xi^{(1)}}\rbr{\one^\top -\one^\top\cdot\frac{1}{n}\one\one^\top} = 0.
    \end{align}
    \end{small}
\end{proof}
\subsection{Proof of Proposition~\ref{prop:complete_graph}}
\begin{proof}
\begin{small}
\begin{align}
    {Q_{\mathrm{GIN}}} N &= (A+ I_n+\xi^{(k)}I_n)N \\&= (\one\one^\top +\xi^{(k)I_n})N \\&= \xi^{(k)}N,
\end{align}
\end{small}
\end{proof}
\subsection{Gradient of $W^{(k)}$}\label{appsec:gradient_formular}
We first calculate the gradient of $W^{(k)}$ when using normalization. Denote $Z^{(k)} = \mathrm{Norm}\rbr{W^{(k)}H^{(k-1)}Q}$ and $\Lcal$ as the loss.
Then the gradient of $\Lcal$ w.r.t. the weight matrix $W^{(k)}$ is 
\begin{align}
    \frac{\partial\Lcal}{\partial W^{(k)}}=
    \rbr{\rbr{H^{(k-1)}QN}^\top\otimes S}\frac{\partial\Lcal}{\partial Z^{(k)}},
\end{align}
where $\otimes$ represents the Kronecker product, and thus $\rbr{H^{(k-1)}QN}^\top\otimes S$ is an operator on matrices.

Analogously, the gradient of $W^{(k)}$ without normalization consists a $\rbr{H^{(k-1)}Q}^\top\otimes I_n$ term. As suggested by Theorem~\ref{thm:precondition}, $QN$ has a smoother distribution of spectrum than $Q$, so that the gradient of $W^{(k)}$ with normalization enjoys better optimization curvature than that without normalizaiton.

\section{Datasets}\label{appsec:datasets}

Detailed of the datasets used in our experiments are presented in this section. Brief statistics of the datasets are summarized in Table \ref{tab:appendix-dataset-statistics}. Those information can be also found in \citet{xu2018how} and \citet{hu2020open}.

\paragraph{Bioinformatics datasets.}
PROTEINS is a dataset where nodes are secondary structure elements (SSEs) and there is an edge between two nodes if they are neighbors in the amino-acid sequence or in 3D space. It has 3 discrete labels, representing helix, sheet or turn.
NCI1 is a dataset made publicly available by the National Cancer Institute (NCI) and is a subset of balanced datasets of chemical compounds screened for ability to suppress or inhibit the growth of a panel of human tumor cell lines, having 37 discrete labels.
MUTAG is a dataset of 188 mutagenic aromatic and heteroaromatic nitro compounds with 7 discrete labels. 
PTC is a dataset of 344 chemical compounds that reports the carcinogenicity for male and female rats and it has 19 discrete labels.

\paragraph{Social networks datasets.}
IMDB-BINARY is a movie collaboration dataset. Each graph corresponds to an ego-network for each actor/actress, where nodes correspond to actors/actresses and an edge is drawn betwen two actors/actresses if they appear in the same movie. Each graph is derived from a pre-specified genre of movies, and the task is to classify the genre graph it is derived from.
REDDIT-BINARY is a balanced dataset where each graph corresponds to an online discussion thread and nodes correspond to users. An edge was drawn between two nodes if at least one of them responded to another's comment.
The task is to classify each graph to a community or a subreddit it belongs to.
COLLAB is a scientific collaboration dataset, derived from 3 public collaboration datasets, namely, High Energy Physics, Condensed Matter
Physics and Astro Physics. Each graph corresponds to an ego-network of different researchers from
each field. The task is to classify each graph to a field the corresponding researcher belongs to.

\paragraph{Large-scale Open Graph Benchmark: ogbg-molhiv.}
Ogbg-molhiv is a molecular property prediction dataset, which is adopted from the the MOLECULENET \citep{DBLP:journals/corr/WuRFGGPLP17}. Each graph represents a molecule, where nodes are atoms and edges are chemical bonds. Both nodes and edges have associated diverse features. Node features are 9-dimensional, containing atomic number and chirality, as well as other additional atom features. Edge features are 3-dimensional, containing bond type, stereochemistry as well as an additional bond feature indicating whether the bond is conjugated. 

\begin{table*}[t]
\caption{{\bf Summary of statistics of benchmark datasets.}}
\resizebox{\textwidth}{!}{ \renewcommand{\arraystretch}{1.25}
\begin{tabular}{@{}clcccccccc@{}}
\cmidrule[\heavyrulewidth]{2-10}
& Datasets &  {\textsc{MUTAG}} & {\textsc{PTC}}  & {\textsc{PROTEINS}} & {\textsc{NCI1}} & {\textsc{IMDB-B}} & {\textsc{RDT-B}} & {\textsc{COLLAB}} & {\textsc{ogbg-molhiv}}  \\
\cmidrule[\heavyrulewidth]{2-10}
& \text{\# graphs }  & 188  & 344  & 1113  &  4110 & 1000 & 2000 & 5000 & 41127   \\
& \text{\# classes }   &  2  & 2  & 2  &  2 &  2 &  2 & 2 & 2 \\
& \text{Avg \# nodes }  &  17.9  & 25.5  & 39.1  &  29.8 &  19.8 &  429.6 & 74.5 & 25.5 \\
& \text{Avg \# edges }  &  57.5  & 72.5  & 184.7  &  94.5 &  212.8 &  1425.1 & 4989.5 & 27.5 \\
& \text{Avg \# degrees } & 3.2 & 3.0 & 4.7 & 3.1 & 10.7 & 3.3 & 66.9 & 2.1 \\
\cmidrule[\heavyrulewidth]{2-10}
\end{tabular}}
  \label{tab:appendix-dataset-statistics}
\end{table*}

\section{The Experimental Setup}\label{appsec:experiments}

\paragraph{Network architecture.}
For the medium-scale bioinformatics and social network datasets, we use 5-layer GIN/GCN with a linear output head for prediction followed \citet{xu2018how} with residual connection. The hidden dimension of GIN/GCN is set to be 64. For the large-scale ogbg-molhiv dataset, we also use 5-layer GIN/GCN\citep{xu2018how} architecture with residual connection. Following \citet{hu2020open}, we set the hidden dimension as 300.  

\paragraph{Baselines.} For the medium-scale bioinformatics and social network datasets, we compare several competitive baselines as in \citet{xu2018how}, including the WL subtree kernel model \citep{shervashidze2011weisfeiler}, diffusion-convolutional neural networks (DCNN)~\citep{atwood2016diffusion}, Deep Graph CNN (DGCNN) \citep{zhang2018end} and Anonymous Walk Embeddings (AWL) \citep{ivanov2018anonymous}. We report the accuracies reported in the original paper \citep{xu2018how}. For the large-scale ogbg-molhiv dataset, we use the baselines in \citet{hu2020open}, including the Graph-agnostic MLP model,  GCN \citep{kipf2016semi} and GIN \citep{xu2018how}. We also report the roc-auc values reported in the original paper \citep{hu2020open}.

\paragraph{Hyper-parameter configurations.}
We use Adam \citep{kingma2014adam} optimizer with a linear learning rate decay schedule. 
We follow previous work \citet{xu2018how} and \citet{hu2020open} to use hyper-parameter search (grid search) to select the best hyper-parameter based on validation performance. In particular, we select the batch size $\in\{64, 128\}$, the dropout ratio $\in\{0,0.5\}$, weight decay $\in\{5e-2, 5e-3, 5e-4,5e-5\}\cup\{0.0\}$,  the learning rate $\in\{1e-4, 1e-3, 1e-2\}$. For the drawing of the training curves in Figure \ref{fig:gin-dataset-training-curve}, for simplicity, we set batch size to be 128, dropout ratio to be 0.5, weight decay to be 0.0, learning rate to be 1e-2, and train the models for 400 epochs for all settings. 

\paragraph{Evaluation.} Using the chosen hyper-parameter, we report the averaged test performance over different random seeds (or cross-validation). In detail, for the medium-scale datasets, following \citet{xu2018how}, we perform a 10-fold cross-validation as these datasets do not have a clear train-validate-test splitting format. The mean and standard deviation of the validation accuracies across the 10 folds are reported. For the ogbg-molhiv dataset, we follow the official setting~\citep{hu2020open}. We repeat the training process with 10 different random seeds. 

For all experiments, we select the best model checkpoint with the best validation accuracy and record the corresponding test performance. 

\section{Additional Experimental Results}\label{appsec:additional_exps}

\subsection{Visualization of the singular value distributions}\label{appsec:vis-singular}
As stated in Theorem \ref{thm:precondition}, the shift operation $N$ serves as a preconditioner of ${Q}$ which makes the singular value distribution of ${Q}$ smoother. To check the improvements, we sample graphs from 6 median-scale datasets (PROTEINS, NCI1, MUTAG, PTC, IMDB-BINARY, COLLAB) for visualization, as in Figure \ref{fig:appendix-singular-value}.

\subsection{Visualization of noise in the batch statistics }\label{appsec:vis-noise}
We show the noise of the batch statistics on the PROTEINS task in the main body. Here we provide more experiment details and results. 

For graph tasks (PROTEINS, PTC, NCI1, MUTAG, IMDB-BINARY datasets), we train a 5-layer GIN with BatchNorm as in \citet{xu2018how} and the number of sub-layers in MLP is set to 2. For image task (CIFAR10 dataset), we train a ResNet18 \citep{he2016deep}. Note that for a 5-layer GIN model, it has four graph convolution layers (indexed from 0 to 3) and each graph convolution layer has two BatchNorm layers; for a ResNet18 model, except for the first 3$\times$3 convolution layer and the final linear prediction layer, it has four basic layers (indexed from 0 to 3) and each layer consists of two basic blocks (each block has two BatchNorm layers). For image task, we set the batch size as 128, epoch as 100, learning rate as 0.1 with momentum 0.9 and weight decay as 5e-4. For graph tasks, we follow the setting of Figure \ref{fig:gin-dataset-training-curve} (described in Appendix \ref{appsec:experiments}). 

The visualization of the noise in the batch statistics is obtained as follows. We first train the models and dump the model checkpoints at the end of each epoch; Then we randomly sample one feature dimension and fix it. For each model checkpoint, we feed different batches to the model and record the maximum/minimum batch-level statistics (mean and standard deviation) of the feature dimension across different batches. We also calculate dataset-level statistics. 

As Figure \ref{fig:noise-comparison} in the main body, pink line denotes the dataset-level statistics, and green/blue line denotes the maximum/minimum value of the batch-level statistics respectively. First, we provide more results on PTC, NCI1, MUTAG, IMDB-BINARY tasks, as in Figure \ref{fig:appendix-noise-in-graphs-other-dts}. We visualize the statistics from the first (layer-0) and the last (layer-3) BatchNorm layers in GIN for comparison. Second, we further visualize the statistics from different BatchNorm layers (layer 0 to layer 3) in GIN on PROTEINS and ResNet18 in CIFAR10, as in Figure \ref{fig:appendix-noise-in-graphs-layers}. Third, we conduct experiments to investigate the influence of the batch size. We visualize the statistics from BatchNorm layers under different settings of batch sizes [8, 16, 32, 64], as in Figure \ref{fig:appendix-noise-in-graphs-batch-sizes}. We can see that the observations are consistent and the batch statistics on graph data are noisy, as in Figure \ref{fig:noise-comparison} in the main body. 

\subsection{Training Curves on GCN}\label{appsec:training_gcn}
As Figure 2 in the main body, we train GCNs with different normalization methods (GraphNorm, InstanceNorm, BatchNorm and LayerNorm) and GCN without normalization in graph classification tasks and plot the training curves in Figure 6. It is obvious that the GraphNorm also enjoys the fastest convergence on all tasks. Remarkably, GCN with InstanceNorm even underperforms GCNs with other normalizations, while our GraphNorm with learnable shift significantly boosts the training upon InstanceNorm and achieves the fastest convergence.

\subsection{Further Results of Ablation Study}\label{appsec:ablation_study}
\paragraph{BatchNorm with learnable shift.} We conduct experiments on BatchNorm to investigate whether simply introducing a learnable shift can already improve the existing normalization methods without concrete motivation of overcoming expressiveness degradation. Specifically, we equip BatchNorm with a similar learnable shift ($\alpha$-BatchNorm for short) as GraphNorm and evaluate its performance. As shown in Figure 12, the $\alpha$-BatchNorm cannot outperform the BatchNorm on the three datasets. Moreover, as shown in Figure 5 in the main body, the learnable shift significantly improve upon GraphNorm on IMDB-BINARY dataset, while it cannot further improve upon BatchNorm, which suggests the introduction of learnable shift in GraphNorm is critical.

\paragraph{BatchNorm with running statistics.} We study the variant of BatchNorm which uses running statistics (MS-BatchNorm for short) to replace the batch-level mean and standard deviation (similar idea is also proposed in \citet{yan2019towards}). At first glance, this method may seem to be able to mitigate the problem of large batch noise. However, the running statistics change a lot during training, and using running statistics disables the model to back-propagate the gradients through mean and standard deviation. Thus, we also train GIN with BatchNorm which stops the back-propagation of the graidients through mean and standard deviation (DT-BatchNorm for short). As shown in Figure 12, both the MS-BatchNorm and DT-BatchNorm underperform the BatchNorm by a large margin, which shows that the problem of the heavy batch noise cannot be mitigated by simply using the running statistics.

\paragraph{The effect of batch size.} 
We further compare the GraphNorm and BatchNorm with different batch sizes (8, 16, 32, 64). As shown in Figure 11, our GraphNorm consistently outperforms the BatchNorm on all the settings.

\section{Other Related Works}
\label{appsec:related}
Due to space limitations, we add some more related works on normalization and graph neural networks here. \citet{zou2019layer} used normalization to stabilize the training process of GNNs. \citet{Zhao2020PairNorm:} introduced PAIRNORM to prevent node embeddings from over-smoothing on the node classification task. Our GraphNorm focuses on accelerating the training and has faster convergence speed on graph classification tasks. \citet{yang2020revisiting} interpreted the effect of mean subtraction on GCN as approximating the Fiedler vector. We analyze more general aggregation schemes, e.g., those in GIN, and understand the effect of the shift through the distribution of spectrum. Some concurrent and independent works \citep{li2020deepergcn,chen2020learning,zhou2020towards, zhou2020effective} also seek to incorporate normalization schemes in GNNs, which show the urgency of developing normalization schemes for GNNs. In this paper, we provide several insights on how to design a proper normalization for GNNs. Before the surge of deep learning, there are also many classic architectures of GNNs such as \citet{scarselli2008graph,bruna2013spectral,defferrard2016convolutional} that are not mentioned in the main body of the paper. We refer the readers to \citet{zhou2018graph,wu2020comprehensive,zhang2020deep} for surveys of graph representation learning.

\begin{figure*}[ht]
    \centering
        \includegraphics[width=\textwidth]{ICML/Appendix-Figure/ICML-GCN-gn-in-bn-ln-no-scale.pdf}
    \caption{{\bf Training performance} of GCN with different normalization methods and GCN without normalization in graph classification tasks. }
\label{fig:appendix-norm-GCN-ablation}
\end{figure*}

\begin{figure*}[ht]
    \centering        \includegraphics[width=0.8\textwidth]{Appendix-Figure/appendix-singular.pdf}
    \caption{{\bf Singular value distribution of ${Q}$ and ${Q} N$}. Graph samples from PROTEINS, NCI1, MUTAG, PTC, IMDB-BINARY, COLLAB are presented. }
\label{fig:appendix-singular-value}
\end{figure*}

\begin{figure*}[ht]
    \centering
        \includegraphics[width=\textwidth]{Appendix-Figure/appendix-plot-1-other-dt.pdf}
    \caption{\textbf{Batch-level statistics are noisy for GNNs} (Examples from PTC, NCI1, MUTAG, IMDB-BINARY datasets). We plot the batch-level mean/standard deviation and dataset-level mean/standard deviation of the first (layer 0) and the last (layer 3) BatchNorm layers in different checkpoints. GIN with 5 layers is employed.}
\label{fig:appendix-noise-in-graphs-other-dts}
\end{figure*}

\begin{figure*}[ht]
    \centering
        \includegraphics[width=\textwidth]{Appendix-Figure/appendix-plot-2-noise-layers.pdf}
    \caption{\textbf{Batch-level statistics are noisy for GNNs of different depth.} We plot the batch-level mean/standard deviation and dataset-level mean/standard deviation of different BatchNorm layers (from layer 0 to layer 3) in different checkpoints. We use a five-layer GIN on PROTEINS and ResNet18 on CIFAR10 for comparison.}
\label{fig:appendix-noise-in-graphs-layers}
\end{figure*}

\begin{figure*}[ht]
    \centering
        \includegraphics[width=\textwidth]{Appendix-Figure/ICLR-Noise-BS-ablation-proteins.pdf}
    \caption{\textbf{Batch-level statistics are noisy for GNNs of different batch sizes.} We plot the batch-level mean/standard deviation and dataset-level mean/standard deviation of different BatchNorm layers (layer 0 and layer 3) in different checkpoints. Specifically, different batch sizes (8, 16, 32, 64) are chosed for comparison. GIN with 5 layers is employed.}
\label{fig:appendix-noise-in-graphs-batch-sizes}
\end{figure*}


\begin{figure*}[ht]
    \centering
        \includegraphics[width=\textwidth]{ICML/Appendix-Figure/ICML-Ablation-BS-PROTEINS-RDTB.pdf}
    \caption{{\bf Training performance} of GIN/GCN with GraphNorm and BatchNorm with batch sizes of (8, 16, 32, 64) on PROTEINS and REDDITBINARY datasets.}
\label{fig:appendix-norm-bs-ablation-gin-gcn}
\end{figure*}

\begin{figure*}[ht]
    \centering
        \includegraphics[width=\textwidth]{ICML/Appendix-Figure/ICML-Ablation-BN-PROTEINS-PTC-IMDBB.pdf}
    \caption{{\bf Training performance} of GIN with GraphNorm and variant BatchNorms ($\alpha$-BatchNorm, MS-BatchNorm and DT-BatchNorm) on PROTEINS, PTC and IMDB-BINARY datasets.}
\label{fig:appendix-norm-variant-ablation-gin-gcn}
\end{figure*}